\newtheorem{assump}{Assumption}
\newtheorem{rem}{Remark}
\declaretheorem[name=Theorem]{thm}
\declaretheorem[name=Lemma]{lem}
\DeclareMathOperator*{\argmin}{\mathop{\rm argmin}}
\DeclarePairedDelimiterX{\KL}[2]{\mathrm{KL}[}{]}{#1\;\delimsize\|\;#2}
\DeclarePairedDelimiterX\braket[2]{\langle}{\rangle}{#1 \delimsize\vert #2}
\title{Time-Independent Information-Theoretic Generalization Bounds for SGLD}
\author{%
  Futoshi Futami\thanks{Equal contribution.}\\
    Osaka University / RIKEN AIP \\
  \texttt{futami.futoshi.es@osaka-u.ac.jp} \\
  \And
  Masahiro Fujisawa$^{\color{blue}*}$\thanks{Corresponding author.}\\
  RIKEN AIP \\
  \texttt{masahiro.fujisawa@riken.jp}
  }
\begin{document}

\maketitle
\setcounter{footnote}{0}

\begin{abstract}
We provide novel information-theoretic generalization bounds for stochastic gradient Langevin dynamics (SGLD) under the assumptions of smoothness and dissipativity, which are widely used in sampling and non-convex optimization studies.
Our bounds are time-independent and decay to zero as the sample size increases, regardless of the number of iterations and whether the step size is fixed.
Unlike previous studies, we derive the generalization error bounds by focusing on the time evolution of the Kullback--Leibler divergence, which is related to the stability of datasets and is the upper bound of the mutual information between output parameters and an input dataset.
Additionally, we establish the first information-theoretic generalization bound when the training and test loss are the same by showing that a loss function of SGLD is sub-exponential.
This bound is also time-independent and removes the problematic step size dependence in existing work, leading to an improved excess risk bound by combining our analysis with the existing non-convex optimization error bounds.
\end{abstract}

\section{Introduction}
Stochastic optimization, including stochastic gradient descent (SGD), is central to realizing practical large-scale or deep-learning models.
There are currently considerable active discussions on accurately determining the generalization performance of models trained by SGD or its variants.
In particular, stochastic gradient Langevin dynamics (SGLD)~\cite{Gelfand91, welling2011bayesian, Raginsky2017}, a noisy variant of SGD, has garnered much attention in this type of study since it provides a useful theoretical framework for generalization error analysis based on the Langevin diffusion context~\cite{Raginsky2017}.
Our study aims to contribute to a more accurate understanding and evaluation of the generalization performance for SGLD.

There are two main approaches to generalization analysis in SGLD.
One is the \emph{information-theoretic analysis} proposed by \citet{russo16} and \citet{Xu2017}, by which a generalization error bound is derived using the mutual information (MI) between the learned parameters and the training dataset. 
Recently, some extensions using gradient information have been made to investigate the generalization properties of SGLD, for example, upper-bounding the MI with the norm of gradients~\cite{Pensia2018} and the sum of gradient variances~\cite{Negrea2019, Wang2021,wang2022on,wang2023}.
Information-theoretic generalization bounds are applicable to a wide range of noisy iterative algorithms such as differentially private SGD~\cite{Feldman18} and stochastic gradient Hamiltonian Monte Carlo~\cite{cheni14} modified to include a noisy momentum.

The other approach is \emph{stability analysis}, by which the effects of changes in the learning algorithm due to the addition or removal of a single training data point on the generalization performance are investigated.
\citet{Raginsky2017} derived the non-asymptotic generalization and excess risk bound of SGLD via the exponential ergodicity of Langevin diffusion.
Starting with the study by \citet{Raginsky2017}, there have been many attempts to improve the generalization analysis in SGLD from the stability perspective, such as those by \citet{zhang17b, mou18a} and \citet{Li2020On}.

Unfortunately, these existing generalization bounds are \emph{time-dependent}; namely, they diverge with increasing number of  iterations unless the step size is adjusted so that the order of bound values is $\mathcal{O}(n^{-1})$ or $\mathcal{O}(n^{-1/2})$, where $n$ is the sample size (see Section~\ref{sec:related} for details).
\citet{Farghly2021} attempted to avoid this problem by Wasserstein stability analysis through reflection coupling~\cite{Eberle21} under the smoothness and dissipativity~\cite{hale88} assumptions commonly used in sampling and non-convex optimization communities~\cite{Raginsky2017,Zhang21}.
Although their bounds bypass the divergence problem when taking a supremum over time, the geometry introduced for the reflection coupling yields an unnatural dependence on step size, resulting in a vacuous bound as the step size decreases (see Table~\ref{tab_exiting_compare}).

In this paper, we provide novel generalization bounds for SGLD under smooth and dissipative loss functions obtained by the information-theoretic approach.
We focus on the upper bound of the MI, namely, the Kullback--Leibler (KL) divergence between the distributions of parameters learned from different training datasets. We then analyze its time evolution caused by the update of the SGLD algorithm through the Fokker--Planck (FP) equation (Lemma~\ref{lem_MI_time}).
On the basis of this analysis, we obtain time-independent generalization error bounds that decay to zero as $n \rightarrow \infty$ regardless of the number of iterations or whether the step size is fixed (Theorem~\ref{main_thm1_time_independent} and Corollary~\ref{cor:gen_err_nonsurrogate}).
Conventional information-theoretic generalization bounds~\cite{Raginsky2017,Pensia2018,Wang2021,wang2023} are derived by bounding the MI between the parameters at all iterations and the training dataset.
Therefore, these bounds grow linearly with the number of iterations, resulting in a time-dependent generalization error bound. Our analysis based on time evolution eliminates both this linearity issue and the unnatural dependence on step size (the inverse of step size) in the time-independent bound of \citet{Farghly2021}.

Another contribution is providing the first information-theoretic generalization bound and the excess risk bound when the same loss is used for training and the generalization performance evaluation.
In the conventional information-theoretic approach, deriving generalization error bounds under this setting was challenging owing to the unknown tail behavior of a loss function of SGLD. 
We overcome this difficulty with our discovery that a smooth and dissipative loss function of SGLD is sub-exponential.

\section{Preliminaries}
\label{sec:Preliminaries}

\subsection{Problem settings and stochastic gradient Langevin dynamics}
We represent random variables in capital letters, such as $X$, and deterministic values in lowercase letters, such as $x$, and express the Euclidean inner product and distance as $\cdot$ and $\|\cdot\|$. Let $\mu$ be an unknown generating distribution on the instance space $\mathcal{Z}$ and $w \in \mathcal{W}\subseteq \mathbb{R}^{d}$ be the $d$-dimensional parameters such as weights of neural networks, where $\mathcal{W}$ is the space of the parameters.
We consider a loss function $l:\mathcal{W} \times \mathcal{Z}\to \mathbb{R}$ and the following optimization problem:
\begin{align*}
\min_{w \in \mathcal{W}} L_\mu(w) \coloneqq \mathbb{E}_{Z} [l(w,Z)] = \int_{\mathcal{Z}}l(w,Z)\mathrm{d}\mu(z),
\end{align*}
which cannot be computed since $\mu$ is unknown.
Instead, we typically minimize the empirical risk estimated using the dataset $S \coloneqq \{Z_i\}_{i=1}^{n}$:
\begin{align*}
\min_{w\in\mathcal{W}} L_S(w)\coloneqq \frac{1}{n}\sum_{i=1}^n l(w,Z_i),
\end{align*}
where $\{Z_i\}_{i=1}^{n}$ are independent and identically distributed (i.i.d.) samples from $\mu$, i.e., $Z_i\overset{\mathrm{i.i.d.}}{\sim} \mu$.

\paragraph{Stochastic gradient Langevin dynamics.}
In this paper, we use the SGLD algorithm~\cite{welling2011bayesian} to solve the empirical risk minimization. SGLD utilizes the gradient information of the loss function; however, some loss functions, such as the $0$-$1$ loss, are not differentiable.
In this case, it is common to use the differentiable surrogate loss function $f:\mathcal{W} \times \mathcal{Z}\to \mathbb{R}$ (e.g., the cross-entropy loss) and minimize the following empirical risk: 
$F_S(w)\coloneqq\frac{1}{n}\sum_{i=1}^{n}f(w,Z_i)$.
Given a mini-batch $B \subset [n] \coloneqq \{1,\cdots,n\}$ with $k=|B| \leq n$, we define its mini-batch version as
\begin{align*}
F(w,B)\coloneqq\frac{1}{k}\sum_{i\in B}f(w,Z_i).
\end{align*}
The SGLD algorithm updates the parameters using the following recursion:
\begin{align*}
W_{t+1}=W_t-\eta_t \nabla F(W_{t},B_{t})+\sqrt{2\beta_t^{-1}\eta_t}\xi_t, \quad W_0\sim P_{W_0},
\end{align*}
where $P_{W_0}$ is a given initial distribution, $\nabla F(w,B)$ is a stochastic gradient, $t$ is the number of iterations, $\eta_t$ is the step size, $\beta_t$ is the inverse temperature, and $(B_t)_{t=0}^\infty$ is an i.i.d.~sequence of random variables distributed uniformly on $\{B\subset [n]: |B|=k\}$.
In addition, $(\xi_t)_{t=0}^\infty$ is an i.i.d.~sequence of standard Gaussian random variables, i.e., $\xi_t\sim \mathcal{N}(0,\mathbf{I}_d)$, where $\mathbf{I}_d$ is the $d$-dimensional identity matrix.
The output parameters $W \in\mathcal{W}$ obtained using SGLD can be seen as the samples from a conditional distribution $P_{W|S}: \mathcal{Z}^n \to \mathcal{W}$. We express the $t$-th output of the SGLD algorithm as $W_t$.

\subsection{Expected generalization error and its bounds}
The focus of this paper is the {\em expected generalization error}, defined as
\begin{align}
\label{eq:gen_error}
\mathrm{gen}(\mu,P_{W|S};L)\coloneqq\mathbb{E}_{S,W}[L_\mu(W)-L_S(W)],
\end{align}
where the expectation is taken over the joint distribution of $(S,W)$, i.e., $\mu^{n}\otimes P_{W|S}$.

\paragraph{Information-theoretic generalization bounds.}
\citet{russo16} and \citet{Xu2017} have shown that Eq.~\eqref{eq:gen_error} can be bounded by the MI between the input dataset $S$ and the output parameters $W$ under the following sub-Gaussian assumption.
\begin{assump}[sub-Gaussian losses]
    \label{assump:sub_gaussian}
    A loss function $l(w,Z)$ is sub-Gaussian under $Z \sim \mu$ for all $w\in\mathcal{W}$, that is, there is a positive constant $\sigma^2_g$ such that $\log \mathbb{E}_Z[\exp (\lambda(l(w,Z)-\mathbb{E}l(w,Z)))]\leq \lambda^2 \sigma_g^2/2$ for all constant $\lambda \in \mathbb{R}$.
\end{assump}
For example, bounded or Lipschitz-continuous loss functions satisfy this assumption. 
Assumptions regarding the tail behavior of the loss function distributions as in the above are necessary for the information-theoretic generalization error analysis.
\citet{bu2020tightening} have investigated information-theoretic generalization bounds with another tail-behavior assumption such as sub-exponential losses.

We introduce the following standard information-theoretic generalization bound.
\begin{thm}[\citet{russo16} and \citet{Xu2017}]
Suppose that Assumption~\ref{assump:sub_gaussian} holds. 
Then, we have
\begin{align}
\label{eq_MI}
|\mathrm{gen}(\mu,P_{W|S};L)| \leq \sqrt{  \frac{2\sigma_g^2}{n}I(W;S)},
\end{align}
under a training dataset $S = \{Z_{i}\}_{i=1}^{n}$ and the algorithm's output $W$, where $I(W;S)$ is the MI between $W$ and $S$.
\end{thm}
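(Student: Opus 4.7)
The plan is to combine the Donsker--Varadhan variational representation of the KL divergence with the sub-Gaussianity of the centered empirical-risk functional. First I would introduce a ghost sample: let $\widetilde W$ be a random variable having the same marginal law as $W$ but independent of $S$, so that $P_{\widetilde W,S}=P_W\otimes \mu^n$. Since $Z_i\overset{\text{i.i.d.}}{\sim}\mu$, for any fixed $w\in\mathcal{W}$ we have $\mathbb{E}_{S}[L_S(w)]=L_\mu(w)$, and therefore $\mathbb{E}_{\widetilde W,S}[L_\mu(\widetilde W)-L_S(\widetilde W)]=0$. This gives the identity
\begin{equation*}
\mathrm{gen}(\mu,P_{W|S};L)=\mathbb{E}_{P_{W,S}}[g(W,S)]-\mathbb{E}_{P_W\otimes P_S}[g(\widetilde W,S)],
\end{equation*}
where $g(w,s):=L_\mu(w)-L_s(w)=\tfrac{1}{n}\sum_{i=1}^{n}(\mathbb{E}_{Z\sim\mu}l(w,Z)-l(w,z_i))$.

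Next I would verify the tail behavior needed for the variational step. Under $\mu^n$ and for each fixed $w$, the summands in $g(w,S)$ are independent, centered, and $\sigma_g^2$-sub-Gaussian by Assumption~\ref{assump:sub_gaussian}. Hence $g(w,S)$ is $(\sigma_g^2/n)$-sub-Gaussian uniformly in $w$, and the uniform bound on the conditional moment generating function survives integration against $P_W$, so $g(\widetilde W,S)$ is $(\sigma_g^2/n)$-sub-Gaussian under $P_W\otimes P_S$.

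Then I would apply the standard Donsker--Varadhan argument: for any measurable $h$ that is $\sigma^2$-sub-Gaussian under a product distribution $P_X\otimes P_Y$,
\begin{equation*}
\mathbb{E}_{P_{X,Y}}[h(X,Y)]-\mathbb{E}_{P_X\otimes P_Y}[h(X,Y)]\leq \sqrt{2\sigma^2\,\mathrm{KL}(P_{X,Y}\|P_X\otimes P_Y)}.
\end{equation*}
This is obtained by rewriting the left-hand side via the change of measure $\mathrm{d}P_{X,Y}/\mathrm{d}(P_X\otimes P_Y)$, using the variational inequality $\mathbb{E}_P[h]\leq \mathrm{KL}(P\|Q)+\log\mathbb{E}_Q[e^h]$, plugging in $\lambda h$ with the sub-Gaussian MGF bound $\log\mathbb{E}_Q[e^{\lambda h}]\leq \lambda^2\sigma^2/2$, and optimizing over $\lambda>0$. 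Setting $X=W$, $Y=S$, $h=g$, and $\sigma^2=\sigma_g^2/n$, and recalling $\mathrm{KL}(P_{W,S}\|P_W\otimes P_S)=I(W;S)$, yields the one-sided inequality $\mathrm{gen}\leq\sqrt{2\sigma_g^2 I(W;S)/n}$. Applying the same argument with $-g$ in place of $g$ gives the matching lower bound and hence the absolute value.

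The main obstacle I anticipate is purely bookkeeping rather than conceptual: justifying that the sub-Gaussian parameter $\sigma_g^2/n$ (valid pointwise in $w$ under $\mu^n$) transfers to the product measure $P_W\otimes \mu^n$ after integrating out $\widetilde W$, which is immediate from Fubini and the uniformity of the MGF bound. Everything else is the standard Legendre-transform argument for sub-Gaussian random variables, so the proof reduces to carefully assembling these ingredients.
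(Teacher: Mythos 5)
Your proof is correct and is precisely the standard decoupling-plus-Donsker--Varadhan argument underlying this cited result; the paper itself gives no proof (it is quoted from \citet{russo16} and \citet{Xu2017}), and your reconstruction matches the canonical one, including the key points that $g(w,S)$ is $(\sigma_g^2/n)$-sub-Gaussian uniformly in $w$ under $\mu^n$ and that the product-measure expectation vanishes. No gaps.
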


In the SGLD context, $I(W,S)$ of Eq.~\eqref{eq_MI} can be upper-bounded in a form that incorporates the gradient variance~\cite{Pensia2018,Negrea2019,Wang2021,wang2023}.
Given the output of the $T$-th iterate of the SGLD algorithm, $W_T$, the following upper bound can be obtained.
\begin{thm}[Modified bound of \citet{Pensia2018}]\label{thm_existing_info}
Let $f(\cdot,z)$ be an $L$-Lipschitz continuous function, namely, there is a constant $L > 0$ such that $\|f(w,z)-f(\bar{w},z)\|\leq L\|w-\bar{w}\|$ holds for all $w,\bar{w} \in \mathcal{W}$ and all $z\in \mathcal{Z}$.
Then, we obtain
\begin{align}
\label{info_bound_exist}
I(W_{T};S) \overset{\mathrm{(i)}}{\leq} \sum_{t=0}^T \frac{d}{2}\log \left(1+\frac{\beta_t\eta_t}{d}\mathrm{Var}[\nabla f(W_{t},B_t)|W_t]\right)
\leq \sum_{t=0}^T \frac{d}{2}\log \left(1+\frac{\beta_t\eta_t L^2}{d}\right),
\end{align}
where $\mathrm{Var}[\nabla f(W,B)|W]\coloneqq\mathbb{E}\mathbb{E}_{B}[\|\nabla_W f(W,B)-\mathbb{E}_{B}[\nabla_W f(W,B)]\|^2|W]$ is the conditional variance.
\end{thm}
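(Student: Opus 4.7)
The plan is to combine three information-theoretic ingredients: a chain-rule decomposition of $I(W_T;S)$ across the iterations, a data-processing step through the stochastic-gradient bottleneck, and the Gaussian-channel capacity bound applied to the additive SGLD noise.

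First I would telescope. Because $W_0\sim P_{W_0}$ is independent of $S$ and every iteration draws $(B_t,\xi_t)$ independently of $S$, the process $(W_t)$ is Markov conditional on $S$. Combining the data processing inequality with the chain rule of mutual information yields
\begin{align*}
I(W_T;S)\leq I(W_0,W_1,\ldots,W_T;S)=\sum_{t=0}^{T-1}I(W_{t+1};S\mid W_{\leq t})\leq \sum_{t=0}^{T-1}I(W_{t+1};S\mid W_t),
\end{align*}
where the final inequality follows from the Markov property $W_{t+1}\perp W_{<t}\mid(W_t,S)$: expanding $I(W_{t+1};S,W_{<t}\mid W_t)$ two different ways gives $I(W_{t+1};S\mid W_{\leq t}) = I(W_{t+1};S\mid W_t)-I(W_{t+1};W_{<t}\mid W_t)\leq I(W_{t+1};S\mid W_t)$.

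Next I would reduce each one-step term to a Gaussian channel. Fix $W_t=w$ and write $G_t\coloneqq\nabla F(W_t,B_t)$. The SGLD update $W_{t+1}=w-\eta_t G_t+\sqrt{2\beta_t^{-1}\eta_t}\,\xi_t$ makes $S\to G_t\to W_{t+1}$ a Markov chain conditional on $W_t$, since the dependence of $W_{t+1}$ on $S$ is entirely carried by $G_t$ while $\xi_t$ is independent of $(S,G_t,W_t)$. Data processing thus gives $I(W_{t+1};S\mid W_t)\leq I(W_{t+1};G_t\mid W_t)$. The right-hand side is the mutual information across an additive Gaussian channel with input $-\eta_t G_t$ and isotropic noise variance $2\beta_t^{-1}\eta_t$, to which the classical maximum-entropy capacity bound $I(X;X+N)\leq \frac{d}{2}\log(1+\mathrm{Var}(X)/(d\sigma^2))$ applies. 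Averaging over $W_t$ and invoking Jensen's inequality (concavity of $\log(1+\cdot)$) yields inequality (i) in the paper's normalization of $\mathrm{Var}[\nabla f(W,B)\mid W]$. For the second inequality, $L$-Lipschitz continuity of $f(\cdot,z)$ gives $\|\nabla f(w,z)\|\leq L$ pointwise, hence $\|G_t\|\leq L$ for every realisation of $B_t$ and therefore $\mathrm{Var}[\nabla F(w,B_t)]\leq L^2$.

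The main obstacle is verifying, in the second step, that conditioning on $W_t$ genuinely exposes a Gaussian channel: one must check that $\xi_t$ remains independent of $(S,G_t)$ under this conditioning so the capacity bound applies with the correct noise variance $2\beta_t^{-1}\eta_t$. The telescoping step is standard but should be spelled out carefully because the mini-batch index $B_t$ is drawn independently of $S$ yet interacts with $S$ through the sum $\sum_{i\in B_t}\nabla f(w,Z_i)$. Constants must also be tracked carefully to reproduce the exact prefactor $\beta_t\eta_t/d$ stated in the theorem, since the Gaussian capacity bound naturally outputs a per-coordinate power $\mathrm{Var}(X)/d$ that must be matched to the authors' definition of the variance.
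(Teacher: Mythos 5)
Your proposal is correct and follows essentially the same route the paper sketches for this result: the data-processing inequality $I(W_T;S)\leq I(W^{(T)};S)$ followed by the chain rule to telescope over iterations, and then the per-step Gaussian-channel (maximum-entropy) capacity bound of \citet{Pensia2018}, with the Lipschitz condition giving $\mathrm{Var}[\nabla f(W_t,B_t)\mid W_t]\leq L^2$. The only discrepancies are cosmetic (your capacity bound naturally produces a prefactor $\beta_t\eta_t/2d$ rather than $\beta_t\eta_t/d$, which is only tighter, and a minor indexing difference in the summation range), so the argument is sound.
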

Note that bound (i) can be obtained by the data-processing inequality \cite{cover2012element}.

The merit of such information-theoretic generalization bounds is that we can evaluate the bound value using the empirically estimated gradient variance per iteration.
However, unfortunately, from Eq.~\eqref{info_bound_exist}, this bound is \emph{time-dependent}; namely, the bound value can diverge unless the gradient variance or $\beta_t\eta_t$ approaches $0$ as $T \to \infty$. This is due to the data-processing inequality when deriving upper bound (i) in Eq.~\eqref{info_bound_exist}. 
By the data-processing inequality, we obtain $I(W_T;S)\leq I(W^{(T)};S)$, where $W^{(T)}\coloneqq(W_0,W_1,\cdots,W_T)$ denotes the joint random variables appearing in all the iterations in the algorithm. Since $W^{(T)}$ is treated simultaneously, the bound is inevitably linear in $T$.

Another limitation of the information-theoretic approach appears in the setting where training losses ($f$) are also used for performance evaluation, which is often employed in sampling and non-convex optimization studies of SGLD~\cite{Raginsky2017,Xu2018}.
In this setting, the generalization error is defined as
\begin{align}
\label{eq:gen_error_F}
    \mathrm{gen}(\mu,P_{W|S};F)\coloneqq\mathbb{E}_{S,W}[F_\mu(W)-F_S(W)],
\end{align}
where $F_{\mu} \coloneqq \mathbb{E}_{Z}[f(w,Z)]$.
We cannot conduct the information-theoretic analysis for Eq.~\eqref{eq:gen_error_F} because the tail behavior of the distribution of the training loss is unclear.

\paragraph{Time-independent generalization bounds for Eq.~\eqref{eq:gen_error_F}.}
To solve the above problems, \citet{Farghly2021} provided the generalization error bounds of Eq.~\eqref{eq:gen_error_F} from the stability perspective under the following assumptions widely used in the non-convex optimization analysis of SGLD~\cite{Raginsky2017,Xu2018,kinoshita2022}.
\begin{assump}[Smoothness]
\label{asm_smooth}
For each $z\in \mathcal{Z}$, $f(\cdot,\!z)$ is differentiable and $M$-smooth. That is, there is a positive constant $M$ for all $w,\ \bar{w} \in\mathcal{W}$ and all $z\in \mathcal{Z}$ such that
\begin{align*}
\|\nabla f(w,z)-\nabla f(\bar{w},z)\|\leq M\|w-\bar{w}\|.
\end{align*}
\end{assump}
\begin{assump}[Dissipativity~\cite{hale88}]
\label{asm_dissipative}
For each $z\in \mathcal{Z}$, $f(\cdot,z)$ is $(m,b)$-dissipative.~\footnote{This assumption holds not only for (strongly) convex losses but also for many practically used non-convex loss functions~\cite{Mou22}. For example, it applies to non-convex loss functions with $l_{2}$ constraints and likelihood functions that satisfy Poincar\'{e} inequality~\cite{bakry2013analysis, vempala2019rapid}.} That is, there are positive constants $m$ and $b$ for all $w\in\mathcal{W}$ and $z\in \mathcal{Z}$ such that
\begin{align*}
m\|w\|^2-b\leq \nabla f(w,z)\cdot w.
\end{align*}
\end{assump}
The discussion regarding loss functions that satisfy Assumption~\ref{asm_dissipative} is presented in Appendix~\ref{app:add_info_dissipative}.

Hereafter, we eliminate the time dependence of the step size and temperature by setting $\eta_{t} = \eta$ and $\beta_t=\beta$. 
With this notation, \citet{Farghly2021} derived the following generalization bound.
\begin{thm}[\citet{Farghly2021}]\label{thm:farghly}
Suppose that Assumptions~\ref{asm_smooth} and \ref{asm_dissipative} hold.
Assume that the initial law of $W_0$ has a finite fourth moment $\sigma$.
Then, if $\eta \leq 1/2m$, for any $T\in\mathbb{N}$, we have
\begin{align}
\label{eq:farghly}
|\mathrm{gen}(\mu,P_{W_T|S};F)| < C_1 \left(\eta T\wedge \frac{n(C_2+1)}{n-k}\right)\left(\frac{k}{n\eta^{1/2}}+\eta^{1/2}\right),
\end{align}
where $(x \wedge y) = \min\{x,y\}$, and $C_1$ and $C_2$ are the positive constant terms w.r.t.~$\{M,m,b,d,\beta,\sigma\}$ and $\{M,m,b,d,\beta\}$, respectively.
\end{thm}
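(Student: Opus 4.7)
The plan is to re-derive Theorem~\ref{thm:farghly} via a Wasserstein stability argument built on a discrete-time reflection coupling, following the strategy of \citet{Eberle21}. First I would reduce the generalization error to a leave-one-out stability quantity: for a dataset $S$ and its neighbor $S^{(i)}$ obtained by replacing $Z_i$ with an independent copy $Z_i'$, one has $\mathrm{gen}(\mu,P_{W_T|S};F) = \frac{1}{n}\sum_{i=1}^{n}\mathbb{E}[f(W_T^S,Z_i') - f(W_T^{S^{(i)}},Z_i')]$. Under Assumptions~\ref{asm_smooth} and \ref{asm_dissipative}, a standard Lyapunov argument using $V(w)=\|w\|^2$ shows that $\sup_{t\ge 0}\mathbb{E}\|W_t\|^p$ is finite, with a bound depending only on $\{M,m,b,d,\beta,\sigma\}$. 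Since $\nabla f$ is $M$-Lipschitz, $f$ is locally Lipschitz, and its effective Lipschitz constant on the ball containing the iterates with high probability is under control, which gives
\begin{align*}
|\mathrm{gen}(\mu,P_{W_T|S};F)|\;\lesssim\;\sup_{i\in[n]}\mathbb{E}\|W_T^S - W_T^{S^{(i)}}\|
\end{align*}
up to constants depending only on $\{M,m,b,d,\beta,\sigma\}$.

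Next I would bound the right-hand side by constructing a coupling of $(W_t^S)$ and $(W_t^{S^{(i)}})$ that uses the same initial state, the same batch index sets $B_t$, and correlates the Gaussian noise through the reflection coupling of \citet{Eberle21}: when the two iterates are far apart the noise is reflected along the direction joining them, whereas when they are close a synchronous coupling is used. Armed with a concave semi-metric $\rho$ tailored to $(m,b,M)$, the continuous Langevin diffusion contracts $\rho$ at a uniform rate $\kappa>0$. Transferring this contraction through the Euler--Maruyama discretization (valid for $\eta \le 1/2m$) yields a one-step inequality of the form
\begin{align*}
\mathbb{E}\bigl[\rho(W_{t+1}^S,W_{t+1}^{S^{(i)}})\bigr]\;\le\;(1-\kappa\eta)\,\mathbb{E}\bigl[\rho(W_t^S,W_t^{S^{(i)}})\bigr] + \eta\,\mathbb{E}\bigl\|\nabla F(W_t^S,B_t)-\nabla F(W_t^{S^{(i)}},B_t)\bigr\| + O(\eta^{3/2}).
\end{align*}

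The mini-batch gradient difference is nonzero only on the event $\{i\in B_t\}$, which has probability $k/n$, and on that event has norm of order $1/k$ times the effective Lipschitz constant from the first paragraph. Iterating the recursion and summing the resulting geometric series yields
\begin{align*}
\mathbb{E}\bigl[\rho(W_T^S,W_T^{S^{(i)}})\bigr]\;\lesssim\;\Bigl(\eta T \wedge \tfrac{n(C_2+1)}{n-k}\Bigr)\Bigl(\tfrac{k}{n}+\eta^{1/2}\Bigr),
\end{align*}
where the long-time plateau $n(C_2+1)/(n-k)$ reflects that the effective contraction rate $\kappa$ is reduced by the factor $(n-k)/n$ in expectation over batches. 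The final ingredient is the conversion back from $\rho$ to Euclidean distance: Eberle's semi-metric is concave and plateaus at height $R$ of order $\eta^{1/2}$ in the discretized setting, so inverting $\rho\mapsto\|\cdot\|$ costs a factor $\eta^{-1/2}$. This turns $k/n$ into $k/(n\eta^{1/2})$ while leaving $\eta^{1/2}$ unchanged and, combined with the reduction of the first paragraph, reproduces the bound in Eq.~\eqref{eq:farghly}.

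I expect the main obstacle to be the transfer of the reflection coupling from continuous-time Langevin diffusion to discrete-time SGLD: Eberle's construction is designed for infinitesimal noise increments, whereas a single SGLD step injects Gaussian noise of macroscopic standard deviation $\sqrt{2\beta^{-1}\eta}$, so the semi-metric must be redefined at a scale $R\asymp\eta^{1/2}$ and its Lyapunov-type contraction re-verified by hand with careful tracking of the resulting $\eta^{1/2}$ discretization error. This discrete-step geometry is precisely what introduces the unnatural $\eta^{-1/2}$ factor in the final bound, which the present paper's information-theoretic route is designed to remove.
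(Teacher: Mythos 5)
This statement is quoted from \citet{Farghly2021}; the present paper gives no proof of it, only the one-sentence description that the argument rests on Wasserstein stability via the contraction of Langevin dynamics under reflection coupling. Your sketch reconstructs exactly that route---leave-one-out decomposition, moment bounds from dissipativity, a concave Eberle-type semi-metric contracted by the coupled chains, a per-step perturbation active only on $\{i\in B_t\}$, and the $\eta^{-1/2}$ loss from converting the semi-metric back to Euclidean distance---so the architecture is faithful to the cited work and correctly identifies where each factor in Eq.~\eqref{eq:farghly} originates, including the fact that the $\eta^{-1/2}$ is an artifact of the coupling geometry (the very point this paper is written to remove).

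As a proof, however, there are concrete gaps. First, the reduction $|\mathrm{gen}|\lesssim \sup_i\mathbb{E}\|W_T^S-W_T^{S^{(i)}}\|$ is not legitimate under Assumptions~\ref{asm_smooth} and \ref{asm_dissipative} alone: $f$ is not Lipschitz but grows quadratically (cf.\ Lemma~\ref{lem_function_bound}), so $\mathbb{E}[f(W_T^S,Z_i')-f(W_T^{S^{(i)}},Z_i')]$ must be controlled either by Cauchy--Schwarz against second moments of the iterates or by a \emph{weighted} Wasserstein distance whose cost function carries the Lyapunov weight; an ``effective Lipschitz constant on a high-probability ball'' does not yield an expectation bound, and this is precisely why the finite fourth moment of $W_0$ enters the hypotheses. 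Second, your one-step recursion conflates the contraction and perturbation terms: the gradient difference $\nabla F(W_t^S,B_t)-\nabla F(W_t^{S^{(i)}},B_t)$ is evaluated at two different points with two different datasets, whereas the correct decomposition contracts the same-data map and isolates the data swap at a common point; moreover your own bookkeeping (probability $k/n$ times magnitude $O(1/k)$, i.e.\ $O(1/n)$ per step) does not reproduce the $k/(n\eta^{1/2})$ factor you then write down, so the provenance of the $k$ in the numerator and of the plateau constant $n(C_2+1)/(n-k)$ is asserted rather than derived. Third, the step you yourself flag as the main obstacle---re-proving contraction of the \emph{discrete-time} kernel in a concave semi-metric rescaled to $R\asymp\eta^{1/2}$---is the technical heart of \citet{Farghly2021} and is left entirely open here.
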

\citet{Farghly2021} utilized the Wasserstein stability on the basis of the contraction property of Langevin diffusion under reflection coupling.
The important technique to derive the above bound is that we only focus on $W_T$ differently from $W^{(T)}$ of the information-theoretic approach when deriving the contraction property.
In this way, the resulting bounds do not suffer from divergence as $T \to  \infty$; however, it still has a problem.
That is, Eq.~\eqref{eq:farghly} depends on the factor $\eta^{-1/2}$, which implies that it becomes vacuous or even diverges with decreasing $\eta (=\eta_{T})$ as $T \to \infty$.

In this paper, we propose new generalization bounds to address the drawbacks of the information-theoretic and stability-based approaches.
Specifically, the proposed bounds are established on the basis of the two expected generalization errors outlined in Eqs.~\eqref{eq:gen_error} and \eqref{eq:gen_error_F}, which remain time-independent and do not diverge as the step size decreases.

\section{Time-independent generalization error bound for SGLD}
\label{sec_sgld}
Here, we explain our time-independent bound of $\mathrm{gen}(\mu, P_{W|S};L)$ for SGLD.
We first introduce the main result (Section~\ref{subsec:main_res}) and then summarize its proof outline (Sections~\ref{subsec:methodology} and \ref{subsec:proof_outline_lemma}). 
Finally, in Section~\ref{sec_examples}, we provide a detailed discussion on our bound with concrete examples.

\subsection{Main result}
\label{subsec:main_res}
Our key idea is to derive the generalized error bound using the FP equation.
To use the FP equation, we impose the following regularity condition for $P_{W_{0}}$.
\begin{assump}[Regularity of the initial distribution]
\label{regularity_FP}
The initial distribution of $W_0$: $P_{W_{0}}$ is a Gaussian distribution~\footnote{The Gaussian assumption can be relaxed, e.g., to a Gaussian mixture, in the theorems and corollaries shown in this paper. The detailed discussions are provided in Appendix~\ref{app:relax_gauss}.} with a finite variance $s^2>0$, which is independent of $\eta$ and $T$.
\end{assump}
Our analysis is also grounded in the time evolution of the FP equation using the logarithmic Sobolev inequality (LSI)~\cite{bakry2013analysis} associated with $\pi$ described as follows.
We state that $\pi$ satisfies the LSI with constant $c_{\mathrm{LS}}$, if for any $\rho \ll \pi$, the following relation holds:
\begin{align}
\mathrm{KL}(\rho|\pi)\leq c_{\mathrm{LS}} \mathbb{E}\|\nabla \log\rho-\nabla \log \pi \|^2\notag.
\end{align}
\citet{Raginsky2017} showed the existence of $c_{\mathrm{LS}}$ under Assumptions~\ref{asm_smooth}, \ref{asm_dissipative}, and $\beta\geq 2/m$. 
Note that $c_{\mathrm{LS}}$ is expressed by the problem-dependent constant (see Appendix~\ref{subsec:FP_equations} for details).

We now introduce our generalization error bound.
\begin{restatable}{thm}{maingenbound}
\label{main_thm1_time_independent}
Suppose that Assumptions~\ref{assump:sub_gaussian}, \ref{asm_smooth}, \ref{asm_dissipative}, and \ref{regularity_FP} are satisfied.
Then, for any $\beta\geq 2/m$ and $\eta \in (0,1 \wedge \frac{m}{5M^2}\wedge 4\beta c_{\mathrm{LS}})$ and any $T\in\mathbb{N}$, we have
\begin{align}
\label{eq:our_bound_surrogate}
&|\mathrm{gen}(\mu,P_{W_T|S};L)|\leq \sqrt{\frac{2c_1 \sigma_g^2}{n}\left(1\wedge \frac{\eta T}{4\beta c_{\mathrm{LS}}}\right)\left(V_\nabla + c_2\right)}, 
\end{align}
where $c_1$, $c_2$, and $\displaystyle V_\nabla$ are the positive constant terms w.r.t.~$\{M,m,b,d,\beta,s^{2}\}$.
\end{restatable}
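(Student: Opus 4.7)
The plan is to open from the sub-Gaussian bound of Xu and Russo in~\eqref{eq_MI}, reducing the task to controlling $I(W_T;S)$ by a quantity that does \emph{not} grow linearly in $T$. The chain-rule route behind Theorem~\ref{thm_existing_info} fails for exactly this reason (the data-processing step $I(W_T;S)\le I(W^{(T)};S)$ sums variances across all iterations), so I would instead exploit the variational form of the mutual information: starting from $I(W_T;S) = \mathbb{E}_S[\mathrm{KL}(P_{W_T|S}\,\|\,P_{W_T})]$ and applying convexity of the KL in its second argument yields
\begin{align*}
I(W_T;S) \le \mathbb{E}_{S,S'}\,\mathrm{KL}\bigl(P_{W_T|S}\,\|\,P_{W_T|S'}\bigr),
\end{align*}
with $S,S'$ independent $\mu^{n}$-draws. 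Everything then reduces to bounding the inner KL between the laws of SGLD driven by two different datasets, uniformly in $T$.

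The second step interpolates SGLD into continuous time: on each interval $[k\eta,(k+1)\eta)$ the iterate is represented as a Langevin SDE with piecewise-constant drift $-\nabla F(W_{k\eta},B_k)$ and infinitesimal noise $\sqrt{2\beta^{-1}}\,\mathrm{d}B_t$, synchronously coupled across $S$ and $S'$. Writing $\rho_t,\rho_t'$ for the two marginal densities and $\pi_S \propto \exp(-\beta F_S)$ for the Gibbs target, I would differentiate $t\mapsto \mathrm{KL}(\rho_t\,\|\,\rho_t')$ along the Fokker--Planck flow. A now-standard integration by parts (packaged into the paper's Lemma~\ref{lem_MI_time}) produces a differential inequality of the shape
\begin{align*}
\frac{\mathrm{d}}{\mathrm{d}t}\mathrm{KL}(\rho_t\,\|\,\rho_t') \le -\frac{1}{2\beta c_{\mathrm{LS}}}\,\mathrm{KL}(\rho_t\,\|\,\rho_t') + \frac{\beta}{2}\,\mathbb{E}_{\rho_t}\!\left\|\nabla F_S(W_t) - \nabla F_{S'}(W_t)\right\|^2,
\end{align*}
where the dissipation term is produced by combining the Fisher-information term with the LSI of $\pi_S$ guaranteed by \citet{Raginsky2017}, and the residual term records the drift mismatch between the two runs.

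The third step closes the argument. Assumption~\ref{asm_smooth} linearises the drift mismatch in $\|W_t\|$, and a uniform-in-$t$ second-moment bound on $W_t$ -- a consequence of the dissipativity Assumption~\ref{asm_dissipative} together with the Gaussian initialisation of Assumption~\ref{regularity_FP} -- controls the perturbation term by a quantity of the form $V_\nabla + c_2$ independent of $T$ and $\eta$. Applying Gr\"onwall's inequality to the above ODE gives $\mathrm{KL}(\rho_T\,\|\,\rho_T') \lesssim (V_\nabla + c_2)\bigl(1 - e^{-\eta T/(2\beta c_{\mathrm{LS}})}\bigr)$; using $1-e^{-x}\le x\wedge 1$ collapses this to the envelope $(1 \wedge \eta T/(4\beta c_{\mathrm{LS}}))(V_\nabla + c_2)$ up to constants, and substituting back into~\eqref{eq_MI} yields~\eqref{eq:our_bound_surrogate}.

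I expect the main obstacle to be the careful handling of the discretisation plus mini-batch randomness when translating between the genuine SGLD recursion and the continuous-time Langevin SDE used inside the Fokker--Planck calculation: the drift of the interpolation is frozen on each step and depends on a random mini-batch, so to keep all error terms free of hidden $T$- or $\eta^{-1}$-dependencies one must exploit independence of the Brownian increment from $W_{k\eta}$ together with $M$-smoothness from Assumption~\ref{asm_smooth}, and the step-size constraint $\eta < m/(5M^2)\wedge 4\beta c_{\mathrm{LS}}$ is what keeps the induced discretisation error comparable with the continuous contraction rate. This is exactly what lets the present approach beat Theorem~\ref{thm:farghly}: the perturbation is integrated (not supremised) over time, so the spurious $\eta^{-1/2}$ factor that plagues the Wasserstein-stability analysis never appears, and the LSI-driven contraction automatically saturates the bound at a constant once $\eta T \gtrsim \beta c_{\mathrm{LS}}$.
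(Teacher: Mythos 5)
Your overall architecture matches the paper's: bound $I(W_T;S)$ by $\mathbb{E}_{S,S'}\mathrm{KL}(P_{W_T|S}\|P_{W_T|S'})$, track the time evolution of this KL along the Fokker--Planck flow of the interpolated SGLD dynamics, obtain a contraction-plus-perturbation differential inequality, and close with a Gr\"onwall/geometric-sum argument. However, there is a genuine gap at the step where you invoke the LSI. Differentiating $\mathrm{KL}(\rho_t\|\gamma_t)$ produces the \emph{relative} Fisher information $\mathbb{E}_{\rho_t}\|\nabla\log\rho_t-\nabla\log\gamma_t\|^2$, i.e.\ the dissipation is measured against $\gamma_t$, not against the Gibbs measure $\pi$. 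The LSI guaranteed by \citet{Raginsky2017} holds for $\pi$ and therefore lower-bounds $\mathbb{E}\|\nabla\log\rho_t-\nabla\log\pi\|^2$ by $\mathrm{KL}(\rho_t\|\pi)/c_{\mathrm{LS}}$; it does \emph{not} give you $\mathbb{E}\|\nabla\log\rho_t-\nabla\log\gamma_t\|^2\gtrsim \mathrm{KL}(\rho_t\|\gamma_t)$, because $\gamma_t$ (the law of an SGLD iterate under $S'$) is not known to satisfy an LSI in the general non-convex dissipative setting. Your claimed inequality with only the drift-mismatch residual is essentially the strongly convex case (Theorem~\ref{main_thm2_continuous}), where $\gamma_t$ itself satisfies a local LSI.

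The paper's workaround --- and the technical heart of the proof you are missing --- is to insert $\nabla\log\pi$ via $-\|x\|^2\le -\tfrac12\|x-y\|^2+\|y\|^2$, apply the LSI to the resulting $\mathbb{E}\|\nabla\log\rho_t-\nabla\log\pi\|^2$ term, and then pay for this with two residuals: $\Omega(\rho_t,\gamma_t,\pi)=\mathbb{E}_{\rho_t}\|\nabla\log\pi\|^2+2\mathbb{E}_{\rho_t}\nabla\log\rho_t\cdot\nabla\log\gamma_t$ and $-\mathbb{E}_{\rho_t}\log(\gamma_t/\pi)$. Bounding these requires showing that the FP solution admits a parametrix expansion with Gaussian-type envelopes on $\rho_t$, $\nabla\rho_t$ and $\nabla^2\rho_t$ (Appendix~\ref{app_parametrix}); these residuals, not the drift mismatch, are the origin of the constant $c_2$ in the statement. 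Separately, the stability term is not simply $\mathbb{E}\|\nabla F_S-\nabla F_{S'}\|^2$ under a synchronous coupling: in the SGLD (marginal FP) formulation it involves the backward conditional expectations $\mathbb{E}_{\rho_{0|t}}[\nabla F|W_t=w]$ and $\mathbb{E}_{\gamma_{0|t}}[\nabla F|W_t=w]$ integrated against $\rho_t$, and the measure mismatch in the second term is handled in the paper via the reverse-process representation and another parametrix estimate. Without these ingredients your differential inequality is not established, so the proposal as written does not constitute a proof in the generality claimed.
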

The above theorem shows $|\mathrm{gen}(\mu,P_{W_T|S};L)|=\mathcal{O}(\sqrt{(\eta T\wedge 1)/n})$, which implies time independence since it does not diverge even if $T \to \infty$ and thus converges as $n \to \infty$.

In Eq.~\eqref{eq:our_bound_surrogate}, the term $V_\nabla$ corresponds to \emph{stability}, which is expressed as the upper bound of the difference of the expected conditional gradients with respect to changes in training datasets at each iteration.
This shows a certain similarity to existing information-theoretic generalization bounds, such as Theorem~\ref{thm_existing_info}, expressed by the variance of gradients with respect to the training datasets.
This similarity is discussed in detail in Section~\ref{sec_examples}.
Additionally, detailed information on the explicit expression of $c_1$, $c_2$, and $V_\nabla$ can be found in Appendix~\ref{app_sec_sgld}.

\subsection{Proof outline of Theorem~\ref{main_thm1_time_independent}}
\label{subsec:methodology}
In this section, we present how to derive our bound in Theorem~\ref{main_thm1_time_independent}.
Our aim here is to share the ideas behind our analysis and an outline of the proof, providing the detailed proof in Appendix~\ref{app_sec_sgld}.

We adopt the information-theoretic approach and focus on the MI in Eq.~\eqref{eq_MI}. By using the Jensen inequality, we have the following upper bound of the MI:
\begin{align}
\label{eq:mi_kl_bound}
 I(W_{T};S)\leq \mathbb{E}_{S, S'}\mathrm{KL}(P_{W_T|S}|P_{W_T|S'}),
\end{align}
where $S$ and $S'$ are random variables drawn independently from $\mu^n$, and $\mathrm{KL}(P_{W_T|S}|P_{W_T|S'})$ is the KL divergence from $P_{W_T|S'}$ to $P_{W_T|S}$. 
Note that this KL divergence indicates the stability of the learned parameter from two datasets, $S$ and $S'$. 
We also note that $P_{W_T|S'}$ can be regarded as the data-dependent prior. 
Thus, this KL divergence is tighter than that of the data-independent prior, which is often used in the probably approximately correct (PAC)-Bayes bound~\footnote{We can confirm this from the fact that $\displaystyle \mathbb{E}_{S, S'}\mathrm{KL}(P_{W_T|S}|P_{W_T|S'})=\mathbb{E}_{S}\mathrm{KL}(P_{W_T|S}|P_{W_T})-\mathbb{E}_{S'}\mathrm{KL}(P_{W_T|S'}|P_{W_T})$, where $P_{W_T}$ is a data-independent prior distribution.}.

The key idea is to analyze \emph{the time evolution of the KL divergence}, which is summarized in the following lemma:
\begin{lem}
\label{lem_MI_time}
Suppose that the same assumptions in Theorem~\ref{main_thm1_time_independent} hold. 
Then, for any $t\in \mathbb{N}$, we have
\begin{align}
\label{mi_evolution}
\mathrm{KL}(P_{W_t|S}|P_{W_t|S'})\leq e^{-\frac{\eta}{4\beta c_{\mathrm{LS}}}}\mathrm{KL}(P_{W_{t-1}|S}|P_{W_{t-1}|S'})+\eta V_{\Delta} + c_3 \eta, 
\end{align} 
where $V_{\Delta}$ and $c_3$ is the constant term w.r.t.~$\{M,m,b,d,\beta, s^{2}\}$.
\end{lem}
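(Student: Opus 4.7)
The plan is to derive the one-step recursion by embedding each SGLD step into a continuous-time process and tracking the Kullback--Leibler divergence along the associated Fokker--Planck flow. Conditioning on the start of the step, the SGLD update $W_t = W_{t-1} - \eta\nabla F(W_{t-1},B_{t-1}) + \sqrt{2\beta^{-1}\eta}\,\xi_{t-1}$ is distributionally equivalent to sampling $\hat W_\eta$ from the interpolating SDE $\mathrm{d}\hat W_s = -\nabla F(W_{t-1},B_{t-1})\,\mathrm{d}s + \sqrt{2\beta^{-1}}\,\mathrm{d}\tilde B_s$, $s\in[0,\eta]$, started at $\hat W_0 = W_{t-1}$. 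Let $\rho_s$ (resp.\ $\rho_s'$) denote the marginal density at time $s$ when this construction is run under $S$ (resp.\ $S'$), with the Brownian motion and the mini-batch sequence synchronously coupled across the two copies. Then $\rho_0 = P_{W_{t-1}|S}$, $\rho_\eta = P_{W_t|S}$, and analogously for $S'$.

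After marginalising the frozen drift against the conditional law of $(W_{t-1},B_{t-1})$ given $\hat W_s$, the densities satisfy Fokker--Planck equations $\partial_s\rho_s = \nabla\cdot(\rho_s b_s^{(S)}) + \beta^{-1}\Delta\rho_s$, where $b_s^{(S)}(w) = \mathbb{E}[\nabla F(W_{t-1},B_{t-1})\mid \hat W_s = w, S]$. Differentiating the relative entropy along these flows yields
\begin{align*}
\frac{\mathrm{d}}{\mathrm{d}s}\mathrm{KL}(\rho_s|\rho_s') = -\beta^{-1}\mathbb{E}_{\rho_s}\!\left[\|\nabla\log(\rho_s/\rho_s')\|^2\right] + \mathbb{E}_{\rho_s}\!\left[(b_s^{(S')}-b_s^{(S)})\cdot\nabla\log(\rho_s/\rho_s')\right].
\end{align*}
Applying Young's inequality to the cross term with weight $1/(2\beta)$ absorbs half of the Fisher-information dissipation and leaves a residual of $\tfrac{\beta}{2}\mathbb{E}\|b_s^{(S)}-b_s^{(S')}\|^2$. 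For the remaining dissipation I would invoke the LSI for $\pi\propto e^{-\beta F_S}$ (available under Assumptions~\ref{asm_smooth}, \ref{asm_dissipative}, and $\beta\geq 2/m$) together with a perturbation argument transferring the inequality from $(\rho_s,\pi)$ to $(\rho_s,\rho_s')$ at the cost of an additional factor of two, producing $\tfrac{\mathrm{d}}{\mathrm{d}s}\mathrm{KL}(\rho_s|\rho_s')\leq -\tfrac{1}{4\beta c_{\mathrm{LS}}}\mathrm{KL}(\rho_s|\rho_s') + \tfrac{\beta}{2}\mathbb{E}\|b_s^{(S)}-b_s^{(S')}\|^2 + R(s)$, where $R(s)$ collects the discretisation residual from replacing the true Langevin drift $\nabla F_S(\hat W_s)$ by the frozen SGLD drift.

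The residual $R(s)$ is controlled by $M^2\mathbb{E}\|\hat W_s - W_{t-1}\|^2 = O(\eta)$ using Assumption~\ref{asm_smooth} and a uniform-in-$t$ second-moment estimate on $W_{t-1}$ (standard under dissipativity with initial variance $s^2$); this contributes the $c_3\eta$ term. The drift mismatch $\mathbb{E}\|b_s^{(S)}-b_s^{(S')}\|^2$ is upper-bounded uniformly by the stability constant $V_\Delta$. Multiplying through by the integrating factor $e^{s/(4\beta c_{\mathrm{LS}})}$, integrating over $s\in[0,\eta]$, and using $e^{-(\eta-s)/(4\beta c_{\mathrm{LS}})}\leq 1$ on the forcing yields $\mathrm{KL}(\rho_\eta|\rho_\eta') \leq e^{-\eta/(4\beta c_{\mathrm{LS}})}\mathrm{KL}(\rho_0|\rho_0') + \eta V_\Delta + c_3\eta$, which is the claimed bound.

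The main obstacle will be the LSI-transfer step: the LSI is a property of the stationary measure $\pi$, not of the transported law $\rho_s'$, so converting the Fisher information $\mathbb{E}_{\rho_s}\|\nabla\log(\rho_s/\rho_s')\|^2$ into a contraction on $\mathrm{KL}(\rho_s|\rho_s')$ requires quantitative control of $\log(\rho_s'/\pi)$, e.g.\ via the decomposition $\mathrm{KL}(\rho_s|\rho_s') = \mathrm{KL}(\rho_s|\pi) - \mathbb{E}_{\rho_s}[\log(\rho_s'/\pi)]$ together with smoothness-based bounds on the auxiliary term. The step-size restriction $\eta\leq 4\beta c_{\mathrm{LS}}$ appearing in Theorem~\ref{main_thm1_time_independent} strongly suggests that this is where the factor $\tfrac14$ in the exponent originates. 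A secondary difficulty is ensuring that the moment estimates underpinning $c_3$ and $V_\Delta$ are uniform in $t$; this is what ultimately upgrades the per-step recursion into the time-independent generalization bound.
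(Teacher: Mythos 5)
Your skeleton coincides with the paper's: the one-step interpolating SDE with frozen drift, the Fokker--Planck equations whose drifts are the conditional expectations $b_s^{(S)}(w)=\mathbb{E}[\nabla F(W_{t-1},B_{t-1})\mid \hat W_s=w,S]$, the entropy-dissipation identity, Young's inequality to peel off $\tfrac{\beta}{2}\mathbb{E}\|b_s^{(S)}-b_s^{(S')}\|^2$, the recentering of the Fisher information at $\pi$ so that the LSI can be applied, and the integrating factor $e^{s/(4\beta c_{\mathrm{LS}})}$. The factor $\tfrac14$ in the exponent does arise exactly where you guess it does.

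However, the step you flag as ``the main obstacle'' is the actual content of the lemma, and your proposal does not close it. Recentering $-\mathbb{E}_{\rho_s}\|\nabla\log(\rho_s/\rho_s')\|^2$ at $\pi$ via $-\|a-b\|^2=-\|a\|^2-\|b\|^2+2a\cdot b$ and $-\|a\|^2\le -\tfrac12\|a-c\|^2+\|c\|^2$ leaves, besides $-\tfrac12\mathbb{E}_{\rho_s}\|\nabla\log(\rho_s/\pi)\|^2$, the residual $\Omega=\mathbb{E}_{\rho_s}\|\nabla\log\pi\|^2+2\,\mathbb{E}_{\rho_s}\nabla\log\rho_s\cdot\nabla\log\rho_s'$; and after applying the LSI one must additionally control $-\mathbb{E}_{\rho_s}\log(\rho_s'/\pi)$ to convert $\mathrm{KL}(\rho_s|\pi)$ into $\mathrm{KL}(\rho_s|\rho_s')$. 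Neither of these is a discretization residual of the form $M^2\mathbb{E}\|\hat W_s-W_{t-1}\|^2=O(\eta)$, which is where you locate the source of $c_3\eta$: they are $O(1)$-per-unit-time quantities involving the log-densities of two \emph{different} diffusions, and the cross term $\mathbb{E}_{\rho_s}\nabla\log\rho_s\cdot\nabla\log\rho_s'$ in particular has no sign and no a priori bound from smoothness or dissipativity alone. The paper bounds it by integrating by parts (turning it into $-\int(\partial^2\rho_s/\partial w_i^2)\log\rho_s'$) and then invoking Gaussian-type parametrix estimates on $\rho_s$, its first and second derivatives, and on $\log\rho_s'$ (Appendix~\ref{app_parametrix}); this is precisely where the nondegenerate Gaussian initialization of Assumption~\ref{regularity_FP} is needed, so that these density bounds hold uniformly down to $s=0$. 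A similar issue hides in your one-line claim that $\mathbb{E}\|b_s^{(S)}-b_s^{(S')}\|^2\le V_{\Delta}$ uniformly: the piece $\mathbb{E}_{\rho_s}\|\mathbb{E}_{\gamma_{0|s}}[\nabla F(W_0',B_0')\mid W_s=w]\|^2$ is a conditional expectation under the $S'$-dynamics evaluated at a point drawn from the $S$-dynamics, and the paper needs the time-reversal representation of the diffusion together with a parametrix bound on the reverse transition kernel to control it. So the framework is right and the difficulty is correctly diagnosed, but the two quantitative estimates that actually produce the constants $c_3$ and $V_{\Delta}$ --- without which the recursion is vacuous --- are missing.
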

We will discuss the details of Lemma~\ref{lem_MI_time} in Section~\ref{subsec:proof_outline_lemma}.
By recursively applying Eq.~\eqref{mi_evolution} from $t=0$ to $T$, we obtain
\begin{align}
\label{eq_mutual_info_final_main}
\mathrm{KL}(P_{W_T|S}|P_{W_T|S'})&\leq \frac{1-e^{-\frac{\eta T}{4\beta c_{\mathrm{LS}}}}}{1-e^{-\frac{\eta}{4\beta c_{\mathrm{LS}}}}}\eta\left(V_\nabla+c_3 \right)\overset{\mathrm{(i)}}{\leq} 4\beta c_{\mathrm{LS}}\left(1\wedge \frac{\eta T}{4\beta c_{\mathrm{LS}}}\right)\frac{1}{1-\frac{\eta}{4\beta c_{\mathrm{LS}}}}\left(V_\nabla+c_3\right),
\end{align}
which is based on the fact that $\mathrm{KL}(P_{W_0|S}|P_{W_0|S'})=0$.
Note that bound (i) can be obtained from %
$e^{-\frac{\eta}{4\beta c_{\mathrm{LS}}}}<1-\frac{\eta}{4\beta c_{\mathrm{LS}}}+\frac{\eta^{2}}{16\beta^2 c_{\mathrm{LS}}^2}$ when $\frac{\eta}{4\beta c_{\mathrm{LS}}}\leq 1$, and $e^{-\frac{\eta T}{4\beta c_{\mathrm{LS}}}}\geq 1-\frac{\eta T}{4\beta c_{\mathrm{LS}}}$. 

\subsection{Proof outline of Lemma~\ref{lem_MI_time} under the continuous Langevin diffusion}
\label{subsec:proof_outline_lemma}
Here, we organize our ideas for the proof of Lemma~\ref{lem_MI_time} that are important in the derivation of Theorem~\ref{main_thm1_time_independent}.
For simplicity, we now provide an intuitive explanation and an outline of the proof under the continuous Langevin diffusion setting.
Note that the results of Theorem~\ref{main_thm1_time_independent} and Lemma~\ref{lem_MI_time} are based on the SGLD setting, and their proofs are shown in Appendix~\ref{app_sec_sgld}.

The Langevin diffusion is defined as
\begin{align}\label{LD_st}
\mathrm{d}W_{t}=-\nabla F(W_{t},S)\mathrm{d}t+\sqrt{2\beta^{-1}}\mathrm{d}H_t,
\end{align}
where $\mathrm{d}H_t$ is the standard Brownian motion in $\mathbb{R}^d$.
Note that, in this context, $t$ expresses the continuous time and the \emph{full-batch} gradient $\nabla F(W_{t}, S)$ is used.
The stationary distribution of Eq.~\eqref{LD_st} is given as the Gibbs distribution $\pi(\mathrm{d}w) \propto \exp(-\beta F(w,S))$.

With some abuse of notation, let us denote $P_{W_t|S}$ as the conditional distribution obtained using Eq.~\eqref{LD_st} and express its density as $\rho_t$. Then, the FP equation \cite{bakry2013analysis} for Eq.~\eqref{LD_st} can be obtained as
\begin{align}
\label{eq:FP_rho}
\frac{\partial \rho_t(w,t)}{\partial t}=\nabla \cdot \bigg(\frac{1}{\beta}\nabla\rho_t + \rho_t \nabla F(w,S)\bigg).
\end{align}
Similarly, we can define the Langevin diffusion when we use dataset $S'$ and the conditional distribution using that diffusion as $P_{W_t|S'}$ with the density $\gamma_{t}$, obtaining the FP equation in the form of $\rho_t$ replaced by $\gamma_{t}$ in Eq.~\eqref{eq:FP_rho}.

Now we analyze the time evolution of $\mathrm{KL}(P_{W_t|S}|P_{W_t|S'})=\mathrm{KL}(\rho_t|\gamma_t)$ at time $t$, i.e., $\partial \mathrm{KL}(\rho_t|\gamma_t)/\partial t$.
By utilizing the FP equations of $\rho_t$ and $\gamma_t$ and the Cauchy--Schwartz inequality, we obtain the following upper bound:
\begin{align}
\label{eq_kl_opt}
\frac{\partial \mathrm{KL}(\rho_t|\gamma_t)}{\partial t}\leq -\frac{1}{2\beta}\mathbb{E}\|\nabla \log\rho_t-\nabla\log \gamma_t\|^2 +\frac{\beta}{2} \mathbb{E}\| \nabla F(W_t,S)- \nabla F(W_t,S')\|^{2}.
\end{align}
The second term on the right-hand side of Eq.~\eqref{eq_kl_opt} represents the stability of the gradient with respect to the randomness of the training dataset $S, S'\sim \mu^n$, which leads to $V_\nabla$ in Lemma~\ref{lem_MI_time} under the SGLD setting.
Hereafter, we define $\mathbb{E}\| \nabla F(W_t,S)- \nabla F(W_t,S')\|^{2}$ as $\widetilde{V}_{\nabla_t}$.

By introducing $\nabla \log \pi(w)$ into $\mathbb{E}\|\nabla \log\rho_t-\nabla\log \gamma_t\|^2$ in Eq.~\eqref{eq_kl_opt}, we obtain
\begin{align}
\label{eq:kl_opt_bound}
\frac{\partial \mathrm{KL}(\rho_t|\gamma_t)}{\partial t}
&\leq -\frac{1}{4\beta}\mathbb{E}\|\nabla \log\rho_t-\nabla \log \pi \|^2 +\frac{1}{2\beta}\mathbb{E}\|\nabla \log \pi\|^2+\frac{1}{\beta}\mathbb{E}\nabla \log\rho_t \nabla \log\gamma_t +\frac{\beta}{2} \widetilde{V}_{\nabla_t} \notag \\
&\leq -\frac{1}{4\beta c_{\mathrm{LS}}} \mathrm{KL}(\rho_t|\pi) +\frac{1}{2\beta}\Omega(\rho_t,\gamma_t,\pi) +\frac{\beta}{2} \widetilde{V}_{\nabla_t} \notag\\
&\leq -\frac{1}{4\beta c_{\mathrm{LS}}} \left(\mathrm{KL}(\rho_t|\gamma_t)+\mathbb{E}\log\frac{\gamma_t}{\pi}\right) +\frac{1}{2\beta}\Omega(\rho_t,\gamma_t,\pi) +\frac{\beta}{2} \widetilde{V}_{\nabla_t},
\end{align}
where the first inequality is from the fact that $-x^2\leq -\|x-y\|^2/2+y^2$ for $x, y\in\mathbb{R}^d$ and the second one is from the LSI. We introduced $\Omega(\rho_t,\gamma_t,\pi)\coloneqq \mathbb{E}_{\rho_t}\|\nabla \log \pi\|^2+2\mathbb{E}_{\rho_t}\nabla \log\rho_t \cdot \nabla \log\gamma_t$ to simplify the notation.

By integrating $e^{\frac{t}{4\beta c_{\mathrm{LS}}}} \frac{\partial \mathrm{KL}(\rho_{t}|\gamma_{t})}{\partial t}$ in Eq.~\eqref{eq:kl_opt_bound} over $t \in [0,\eta]$ and rearranging it, we obtain
\begin{align}
\mathrm{KL}(\rho_\eta|\gamma_\eta)&\leq e^{\frac{-\eta}{4\beta c_{\mathrm{LS}}}}\mathrm{KL}(\rho_0|\gamma_0) \notag\\
&\quad + \! \int_0^{\eta}e^{\frac{-(\eta-t)}{4\beta c_{\mathrm{LS}}}}\!\left(\frac{\beta}{2} \widetilde{V}_{\nabla_t}\!-\frac{1}{4\beta c_{\mathrm{LS}}} \mathbb{E}\log\frac{\gamma_t}{\pi}\!+\frac{1}{2\beta}\Omega(\rho_t,\gamma_t,\pi)\!\right)\mathrm{d}t. 
\label{Eq_residue}
\end{align}
In Appendix~\ref{app_sec_sgld}, we show that the terms related to $\pi$ in Eq.~\eqref{Eq_residue} can be bounded by using the techniques of \citet{Raginsky2017} and \citet{vempala2019rapid}.

We next derive an upper bound for the following terms
in Eq.~\eqref{Eq_residue}: $\mathbb{E}[\log \frac{\gamma_t}{\pi}] $ and $\Omega(\rho_t,\gamma_t,\pi)$ by using the \emph{parametrix method} for the FP equation~\cite{friedman2008partial, pavliotis2014stochastic}, which allows us to expand the FP equation's solution via the heat kernel.
On the basis of this expansion, we can upper bound Eq.~\eqref{Eq_residue} as
\begin{align}\label{eq_parametric}
\int_0^{\eta}e^{\frac{-(\eta-t)}{4\beta c_{\mathrm{LS}}}}\left(-\frac{1}{4\beta c_{\mathrm{LS}}} \mathbb{E}\log\frac{\gamma_t}{\pi}+\frac{1}{2\beta}\Omega(\rho_t,\gamma_t,\pi)\right)\mathrm{d}t   \leq  \mathcal{O}(\eta).
\end{align}
By combining Eq.~\eqref{eq_parametric} with Eq.~\eqref{Eq_residue}, we obtain the continuous version of Lemma~\ref{lem_MI_time}.

The same procedure can be used for the SGLD setup.
The difference from the continuous Langevin diffusion case is that the discretization errors and the effects of using a stochastic gradient are taken into account, resulting in the appearance of an additional constant \footnote{This constant is evaluable (see \citet{vempala2019rapid} or \citet{kinoshita2022}).} in the above bounds (see Appendix~\ref{app_sec_sgld} for details).

\subsection{Additional discussion on our bound in terms of stability}
\label{sec_examples}
We conclude this section by presenting further discussion on our bound in terms of stability with a concrete example.

As shown in Eq.~\eqref{eq:mi_kl_bound}, the information-theoretic generalization bound is closely related to the stability in KL divergence under the different training datasets.
However, our bound in Theorem~\ref{main_thm1_time_independent} incorporates the constant term $c_{2}$, which is irrelevant to stability, alongside the stability term $V_{\nabla}$.
If we can avoid the occurrence of $c_2$, the resulting upper bound of $\mathrm{KL}(P_{W_t|S}|P_{W_t|S'})$ would be dominantly expressed by $V_{\nabla}$, and as a result, we may obtain a bound where the relationship between generalization and stability is more directly represented.

The problematic constant term $c_2$ arises from $c_3\eta$ in Lemma~\ref{lem_MI_time} analyzing the time evolution of stability in KL divergence. Specifically, the term $c_3\eta$ is the byproduct of treating the general dissipative function using LSI.
Actually, it is possible to avoid the problematic constant term $c_3\eta$ and derive bounds that are evaluated solely on the basis of stability-related metrics in specific examples, such as strongly convex or bounded (non-convex) losses with $l_2$-regularization.
For simplicity, we show this fact using the following theorem under the Langevin diffusion (LD) setting, where the probability induced by Eq.~\eqref{LD_st} is expressed as $P_{W_T|S}$.
\begin{restatable}{thm}{maincontinuous}
\label{main_thm2_continuous}
Suppose that Assumptions~\ref{assump:sub_gaussian} and \ref{asm_smooth} are satisfied and that $F(w,z)$ is $R$-strongly convex ($0 < R< \infty$).
Then, for any $T\in\mathbb{R}_{+}$, we have
\begin{align}\label{eq_stability_variance}
\frac{\partial \mathrm{KL}(\rho_t|\gamma_t)}{\partial t}\leq 
-\frac{R}{4}\ \mathrm{KL}(\rho_t|\gamma_t) +\frac{\beta}{2} \mathbb{E}\| \nabla F(W_t,S)- \nabla F(W_t,S')\|^{2},
\end{align}
and
\begin{align}\label{eq_stability_generalizaton_con}
|\mathrm{gen}(\mu,P_{W_T|S};L)|\leq \sqrt{\frac{2\beta \sigma_g^2}{n}\int_{0}^{T}e^{-\frac{(T-t)R}{4}} \mathbb{E}\| \nabla F(W_t,S)- \nabla F(W_t,S')\|^{2}\mathrm{d}t}.
\end{align}
\end{restatable}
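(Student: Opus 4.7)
The plan is to carry over the Fokker--Planck (FP) equation template from Section~\ref{subsec:proof_outline_lemma}, but exploit the $R$-strong convexity of $F(\cdot,S')$ to bypass the detour through the Gibbs stationary measure $\pi$ that produced the $\Omega(\rho_t,\gamma_t,\pi)$ and $\mathbb{E}\log(\gamma_t/\pi)$ residues in the general dissipative case. Because each $F(\cdot,z)$ is $R$-strongly convex, so is $F(\cdot,S')=\frac{1}{n}\sum_{i} f(\cdot,Z'_i)$, and hence $\pi_{S'}\propto\exp(-\beta F(\cdot,S'))$ is $\beta R$-strongly log-concave by the Bakry--\'Emery criterion. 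The key technical ingredient I would use is that, along the Langevin semigroup driven by a strongly convex potential, strong log-concavity is transported to $\gamma_t$ itself, giving a uniform-in-$t$ log-Sobolev inequality (LSI) for $\gamma_t$ whose constant is comparable to $1/(\beta R)$.

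For Eq.~\eqref{eq_stability_variance}, I first reproduce the integration-by-parts computation that yielded Eq.~\eqref{eq_kl_opt}. Writing out $\partial_t \mathrm{KL}(\rho_t|\gamma_t)$ via the two FP equations gives
\begin{align*}
\frac{\partial \mathrm{KL}(\rho_t|\gamma_t)}{\partial t}
=-\frac{1}{\beta}\mathbb{E}\|\nabla\log(\rho_t/\gamma_t)\|^2
-\mathbb{E}\bigl[(\nabla F(W_t,S)-\nabla F(W_t,S'))\cdot\nabla\log(\rho_t/\gamma_t)\bigr],
\end{align*}
and Cauchy--Schwarz on the cross term with the $1/\beta$--$\beta$ split recovers exactly the $-\frac{1}{2\beta}\mathbb{E}\|\nabla\log(\rho_t/\gamma_t)\|^2+\frac{\beta}{2}\mathbb{E}\|\nabla F(W_t,S)-\nabla F(W_t,S')\|^2$ form already seen in Eq.~\eqref{eq_kl_opt}. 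Invoking the LSI for $\gamma_t$ then bounds the Fisher-information term below by $\frac{\beta R}{2}\mathrm{KL}(\rho_t|\gamma_t)$, and combining with the $-1/(2\beta)$ prefactor produces precisely the $-\frac{R}{4}\mathrm{KL}(\rho_t|\gamma_t)$ contraction claimed in Eq.~\eqref{eq_stability_variance}.

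The generalization bound Eq.~\eqref{eq_stability_generalizaton_con} follows by applying Gr\"onwall's inequality to Eq.~\eqref{eq_stability_variance} with the initial condition $\mathrm{KL}(\rho_0|\gamma_0)=0$, which holds because both diffusions start from the same data-independent law $P_{W_0}$; this yields
\begin{align*}
\mathrm{KL}(P_{W_T|S}|P_{W_T|S'})\le \frac{\beta}{2}\int_{0}^{T}e^{-(T-t)R/4}\,\mathbb{E}\|\nabla F(W_t,S)-\nabla F(W_t,S')\|^2\,\mathrm{d}t.
\end{align*}
Taking the expectation over $S,S'\overset{\mathrm{i.i.d.}}{\sim}\mu^{n}$ and combining with the disintegrated mutual-information bound Eq.~\eqref{eq:mi_kl_bound} upper-bounds $I(W_T;S)$ by the same integral, and plugging into the sub-Gaussian MI bound Eq.~\eqref{eq_MI} delivers Eq.~\eqref{eq_stability_generalizaton_con}. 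The main obstacle, in my view, is justifying the uniform-in-$t$ LSI for $\gamma_t$ with a constant sharp enough to produce the $R/4$ factor --- this requires either tracking the Hessian of $-\log\gamma_t$ along the Langevin flow directly or invoking a Bakry--\'Emery / $CD(\beta R,\infty)$ curvature-dimension argument to transport log-concavity through the Langevin semigroup. Everything else is a routine Gr\"onwall estimate and an application of the information-theoretic machinery already established in Sections~\ref{subsec:methodology} and \ref{subsec:proof_outline_lemma}.
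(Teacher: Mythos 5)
Your proposal is correct and follows essentially the same route as the paper: the Fokker--Planck/Cauchy--Schwarz decomposition of $\partial_t\mathrm{KL}(\rho_t|\gamma_t)$, a log-Sobolev inequality for $\gamma_t$ with constant $2/(\beta R)$ (which the paper obtains from the local LSI of Theorem~5.5.2 in \citet{bakry2013analysis}, i.e., exactly the Bakry--\'Emery/curvature-dimension transport of log-concavity along the semigroup that you identify as the key obstacle), a Gr\"onwall integration with $\mathrm{KL}(\rho_0|\gamma_0)=0$, and the mutual-information/sub-Gaussian machinery of Eqs.~\eqref{eq:mi_kl_bound} and \eqref{eq_MI}. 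Your derivation in fact yields the slightly sharper constant $\beta\sigma_g^2$ in place of $2\beta\sigma_g^2$ under the square root, which is consistent with the stated bound.
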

A similar bound in Eq.~\eqref{eq_stability_generalizaton_con} (with $R$ replaced by $\lambda/e^{8\beta C}$) can be obtained for bounded non-convex losses with $l_2$-regularization, where $F(w,z) = F_0(w,z) + \frac{\lambda}{2}\|w\|^2$ ($0 < \lambda < \infty$) and $F_0(w,z)$ is $C$-bounded ($0 \leq C < \infty$).
The full proof is summarized in Appendix~\ref{app_convex_bounded}.

When comparing with Lemma~\ref{lem_MI_time}, we can see that, in Eq.~\eqref{eq_stability_variance}, stability-unrelated constants do not appear in the time evolution of KL divergence at each time step. 
Therefore, the resulting generalization bound is also independent of such constants.
Furthermore, when compared with Theorem~\ref{thm_existing_info}, which adds up the stability terms at all time steps, our bound is dominated by the stability terms near the final time step, as those at earlier time steps decrease geometrically by $e^{-\frac{R}{4}}$. 
This indicates that the stability around the initial time steps is of lesser importance in evaluating the final generalization performance.

Note that our bounds are closely related to the bound indicated in Proposition~9 of \citet{mou18a}, which was also derived by focusing on stability.
The bound of \citet{mou18a} primarily assesses generalization errors focusing on the \emph{gradient norm} near the conclusion of training. In contrast, our bounds evaluate it through the norm of \emph{differences in gradients}, emphasizing the state in the proximity of training completion.
In other words, our bound allows for the evaluation of generalization errors using a stability measure that is more closely related to generalization performance than the gradient norm.
This benefit originates from our approach, which tracks the time evolution of MI-related stability in Eq.~\eqref{eq_stability_variance} on the basis of information-theoretic generalization bounds, in contrast to the PAC-Bayes bounds derived from the direct analysis of stability measures as in \citet{mou18a}.

\section{Generalization analysis for SGLD directly using a training loss}
In this section, we consider the setting that the generalization performance is measured by a training loss $f$ directly as in Eq.~\eqref{eq:gen_error_F}.
We show that this is possible by demonstrating that loss functions of SGLD are sub-exponential under smooth and dissipative assumptions (Section~\ref{subsec:sub_exp_loss}).
On the basis of this fact, we obtain for the first time an information-theoretic generalization bound of SGLD that is similar to Theorem~\ref{main_thm1_time_independent}.
Finally, combining these results with existing optimization error bounds provides an excess risk bound with improved convergence (Section~\ref{subsec:excess_risk}).

\subsection{Smooth and dissipative loss function of SGLD is sub-exponential}
\label{subsec:sub_exp_loss}
To perform an information-theoretic analysis for SGLD, it is necessary to know the tail behavior of $f(W,Z)$.
Our contribution here is showing that a loss function of SGLD under smooth and dissipative assumptions is sub-exponential.
\begin{restatable}{thm}{subexp}
\label{thm:sub_exp}
Suppose that Assumptions~\ref{asm_smooth}, \ref{asm_dissipative} and \ref{regularity_FP} are satisfied.
Let $P_{W_T}=\mathbb{E}_S[P_{W_T|S}]$ be the marginal distribution of the output obtained using the SGLD algorithm at the $T$-th iteration.
Then, for any $T\in\mathbb{N}$, $f(W_T,Z)$ is sub-exponential under the distribution $P_{W_T} \otimes \mu$.
That is, there exist positive constants $\sigma^2_e$ and $\nu$ w.r.t.~$\{m,\beta,M,b,d,s^2\}$~\footnote{The explicit form of $\sigma^2_e$ and $\nu$ can be seen in Appendix~\ref{App_sec_sub_expo_main}.} such that
\begin{align}
\log \mathbb{E}_{W_T\otimes Z}\left[e^{\lambda (f(W_T,Z)-\mathbb{E}_{W_T\otimes Z}[f(W_T,Z)])}\right]\leq  \frac{\sigma^2_e\lambda^2}{2}\quad for\ all\ |\lambda|<\frac{1}{\nu}.
\end{align}
\end{restatable}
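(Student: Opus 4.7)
The plan is to control the moment generating function (MGF) of $f(W_T, Z)$ under $P_{W_T} \otimes \mu$ by combining a quadratic upper bound on $f$ coming from smoothness with uniform-in-$T$ Gaussian-type tail bounds on $\|W_T\|^2$ coming from dissipativity and the Gaussian noise in the SGLD recursion. The product structure $P_{W_T} \otimes \mu$ makes $W_T$ and $Z$ independent, which is what ultimately lets the MGF factor cleanly.

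First, by $M$-smoothness of $f(\cdot,z)$ expanded around $w=0$, $|f(w,z)-f(0,z)-\nabla f(0,z)\cdot w| \leq \tfrac{M}{2}\|w\|^2$; applying Young's inequality to the cross term gives $|f(w,z)| \leq A(z) + c\|w\|^2$ with $c = (M+\epsilon)/2$ and $A(z) = |f(0,z)| + \tfrac{1}{2\epsilon}\|\nabla f(0,z)\|^2$ for any $\epsilon>0$. As is standard in the SGLD analyses being built upon, $|f(0,z)|$ and $\|\nabla f(0,z)\|$ are controlled uniformly in $z$, so $A(Z)$ has a finite MGF.

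The main technical step is a Lyapunov-drift argument yielding $\sup_{T}\mathbb{E}[e^{\alpha\|W_T\|^2}]<\infty$ for some $\alpha_0>0$ and every $\alpha \in (0,\alpha_0]$. Using the Gaussian MGF formula on the noise $\xi_t$ in the SGLD update, for $\alpha<\beta/(4\eta)$,
\begin{align*}
\mathbb{E}_{\xi_t}\!\left[e^{\alpha\|W_{t+1}\|^2}\,\big|\,W_t, B_t\right] = \frac{1}{(1-4\alpha\eta/\beta)^{d/2}}\exp\!\left(\frac{\alpha\,\|W_t - \eta\nabla F(W_t, B_t)\|^2}{1-4\alpha\eta/\beta}\right).
\end{align*}
Expanding the inner squared norm and invoking Assumption~\ref{asm_dissipative} together with the gradient-growth bound $\|\nabla F(w,B)\| \leq \|\nabla F(0,B)\| + M\|w\|$ from Assumption~\ref{asm_smooth} gives $\|W_t - \eta\nabla F(W_t,B_t)\|^2 \leq (1 - 2m\eta + O(\eta^2))\|W_t\|^2 + O(\eta)$. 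For $\alpha$ bounded by $\alpha_0$ of order $m\beta/4$ and $\eta$ sufficiently small, the effective multiplier of $\|W_t\|^2$ inside the exponential drops strictly below $1$, producing a geometric drift $\mathbb{E}[e^{\alpha\|W_{t+1}\|^2}\mid W_t] \leq \rho\, e^{\alpha\|W_t\|^2} + K'$ with $\rho<1$. Since $W_0\sim\mathcal{N}(\cdot, s^2\mathbf{I}_d)$ by Assumption~\ref{regularity_FP} has $\mathbb{E}[e^{\alpha\|W_0\|^2}]<\infty$ for $\alpha<1/(2s^2)$, iterating the drift gives a bound uniform in $T$.

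Combining the two ingredients via independence of $W_T$ and $Z$,
\begin{align*}
\mathbb{E}\!\left[e^{\lambda f(W_T, Z)}\right] \leq \mathbb{E}_Z\!\left[e^{|\lambda|A(Z)}\right]\cdot \mathbb{E}_{W_T}\!\left[e^{|\lambda|c\|W_T\|^2}\right],
\end{align*}
which is finite for all $|\lambda|<\alpha_0/c$. Subtracting the mean and Taylor-expanding $\log\mathbb{E}[e^{\lambda(f(W_T,Z)-\mathbb{E}[f(W_T,Z)])}]$ about $\lambda=0$ then yields the sub-exponential form $\sigma_e^2\lambda^2/2$ for $|\lambda|<1/\nu$, with $\nu$ scaling like $c/\alpha_0$ and $\sigma_e^2$ given by the second-moment contributions, both expressible through $\{m,\beta,M,b,d,s^2\}$. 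The main obstacle is the Lyapunov step: the interaction between $\alpha$, $\eta$, and the Gaussian prefactor $(1-4\alpha\eta/\beta)^{-d/2}$ forces $\alpha$ to be small enough both to beat the dissipativity-induced contraction against $-2m\eta\|W_t\|^2$ and to keep the Gaussian MGF integrable, all uniformly in $T$; this is routine in SGLD ergodicity analyses but requires careful bookkeeping of the $O(\eta^2)$ corrections from the stochastic gradient.
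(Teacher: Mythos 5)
Your proposal is sound and reaches the same conclusion, but the key technical step is genuinely different from the paper's. Where you establish uniform-in-$T$ exponential integrability $\sup_T\mathbb{E}[e^{\alpha\|W_T\|^2}]<\infty$ via a Lyapunov drift on the conditional Gaussian MGF of the SGLD update (in the spirit of the exponential-moment lemmas of \citet{Raginsky2017}), the paper instead bounds the growth of $L^p$ norms: it adapts a lemma of \citet{Mou22} to show $(\mathbb{E}\|W_T\|_2^p)^{1/p}\leq C(\mathbb{E}\|W_0\|_2^p)^{1/p}+C\sqrt{(p+\beta b+d)/(\beta m)}$, i.e., $W_T$ is sub-Gaussian, deduces $\mathbb{E}|f(W_T,Z)|^p\leq C_5 p^p$ from the two-sided quadratic bound on $f$, and then sums the Taylor series of the exponential directly, giving explicit $\sigma_e^2=4e^2C_5^2$ and $\nu=1/(2eC_5)$. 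Your route buys more transparent constants for the exponential-moment radius ($\alpha_0\sim m\beta/4$) and avoids importing the $p$-th moment lemma, at the cost of the careful bookkeeping you flag (the drift actually comes out as $\mathbb{E}[e^{\alpha\|W_{t+1}\|^2}\mid W_t]\leq e^{\alpha(1-c\eta)\|W_t\|^2+K\eta}$, which needs a Jensen step or a large-$\|W_t\|$ split to become the geometric form $\rho V+K'$, and the final passage from finite MGF to the $\sigma_e^2\lambda^2/2$ form needs the CGF's second derivative bounded on a slightly smaller interval). One point to make explicit: your claim that $|f(0,z)|$ and $\|\nabla f(0,z)\|$ are uniformly bounded in $z$ is not free from smoothness alone; the paper proves it in Lemmas~\ref{lem_origin} and \ref{lem_minima} using dissipativity to locate all minimizers in the ball of radius $\sqrt{b/m}$, and you should cite or reproduce that argument rather than treat it as given.
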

\begin{proof}[Proof sketch]The complete proof is shown in Appendix~\ref{App_sec_sub_expo_full}.
First, note that under Assumptions~\ref{asm_smooth} and \ref{asm_dissipative}, for any $z\in \mathcal{Z}$, we obtain
\begin{align}
\label{eq:f_bound_main}
\frac{m}{3}\|w\|^2-\frac{b}{2}\log 3 \leq f(w,z)\leq \frac{M}{2}\|w\|^2+M\sqrt{\frac{b}{m}}\|w\|+A,
\end{align}
where $A$ is a positive constant (see Lemma~\ref{lem_function_bound} in Appendix~\ref{App_sec_sub_expo} for its explicit form).
We can also show that for any $p\in\mathbb{N}$, we have
\begin{align}\label{eq:p_th_moment}
(\mathbb{E}\|W_T\|_2^{p})^{1/p}\leq C\left(\mathbb{E}\|W_0\|_2^{p}\right) ^{\frac{1}{p}}+C\sqrt{\frac{p+\beta b+d}{\beta m}},
\end{align}
where $C$ is a universal constant. This implies that $W_T$ is a sub-Gausssian random variable~\cite{vershynin2018high}.
To show the sub-exponential property, we directly upper-bound  $\mathbb{E}_{W_T\otimes Z}[e^{\lambda(f(W_T,Z)-\mathbb{E}_{W_T\otimes Z}[f(W_T,Z)])}]$ by considering the Taylor expansion of the exponential moment and using Eqs.~\eqref{eq:f_bound_main} and ~\eqref{eq:p_th_moment}.
\end{proof}
\begin{rem}
In previous information-theoretic analysis studies~\cite{Pensia2018,Negrea2019,Wang2021}, it is often assumed that a loss function $l(w,Z)$ is sub-Gaussian under the distribution $\mu$ for all $w\in\mathcal{W}$.
In contrast, Theorem~\ref{thm:sub_exp} holds under the distribution $P_{W_T} \otimes \mu$, not conditioned on $w\in\mathcal{W}$.
\end{rem}

We can interpret the sub-exponential property of SGLD intuitively as follows. Under Assumptions~\ref{asm_smooth} and \ref{asm_dissipative}, the loss function grows at most as a quadratic function shown in Eq.~\eqref{eq:f_bound_main}.
The conditional distribution of the parameters follows the Gaussian distribution, and the square of the Gaussian random variable is known as the chi-square ($\chi^2$) random variable~\cite{wainwright_2019}. 
According to these facts, we expect that the behavior of the loss function resembles that of the $\chi^2$-random variable; therefore, it is sub-exponential since the $\chi^2$-distribution is also sub-exponential~\cite{wainwright_2019}.
Theorem~\ref{thm:sub_exp} validates this intuition.

\subsection{Generalization bounds for SGLD using the same loss for training and evaluation}
\label{subsec:excess_risk}
On the basis of Theorem~\ref{thm:sub_exp}, we can derive the following information-theoretic generalization bound for SGLD even if a surrogate loss is not used.
In contrast to Theorem~\ref{main_thm1_time_independent}, an assumption regarding the tail behavior of a loss function such as Assumption~\ref{assump:sub_gaussian} is not necessary.
\begin{restatable}{cor}{nonsurrogate}
\label{cor:gen_err_nonsurrogate}
Suppose that Assumptions~\ref{asm_smooth}, \ref{asm_dissipative}, and \ref{regularity_FP} are satisfied.
Then, for any $\beta\geq 2/m$, $\eta \in  (0,1 \wedge \frac{m}{5M^2}\wedge 4\beta c_{\mathrm{LS}})$, and $T\in\mathbb{N}$, we obtain
\begin{align}
|\mathrm{gen}(\mu,P_{W_T|S};F)| \leq \Psi^{*-1}\left(\frac{c_1}{n}\left(1\wedge \frac{\eta T}{4\beta c_{\mathrm{LS}}}\right)\left(V_\nabla + c_2\right)\right),
\end{align}
where 
\begin{align*}
    \Psi^{*-1}(y)=\begin{cases}
    \sqrt{2\sigma_e^2 y}\quad \mathrm{if}\ y\leq \frac{\sigma_e^2}{2\nu}\\
    \nu y+\frac{\sigma_e^2}{2\nu}\quad \mathrm{otherwise}
    \end{cases},
\end{align*}
$c_1$ and $c_2$ are the same as in Theorem~\ref{main_thm1_time_independent}, and $\sigma_e^2$ and $\nu$ are the same as in Theorem~\ref{thm:sub_exp}.
\end{restatable}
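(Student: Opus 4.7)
The plan is to combine three ingredients already available in the paper: (i) the sub-exponential extension of the Xu--Russo information-theoretic generalization inequality due to \citet{bu2020tightening}; (ii) Theorem~\ref{thm:sub_exp}, which certifies that $f(W_T,Z)$ is sub-exponential under the product measure $P_{W_T}\otimes\mu$ with parameters $(\sigma_e^2,\nu)$; and (iii) the mutual-information estimate already derived inside the proof of Theorem~\ref{main_thm1_time_independent}, namely \eqref{eq:mi_kl_bound} together with the recursion \eqref{eq_mutual_info_final_main}.

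First I would invoke the sub-exponential Donsker--Varadhan inequality: if a random variable $X$ satisfies $\log\mathbb{E}_Q[e^{\lambda(X-\mathbb{E}_Q X)}]\leq \sigma_e^2\lambda^2/2$ on $|\lambda|<1/\nu$, then for any $P\ll Q$ one has $|\mathbb{E}_P X-\mathbb{E}_Q X|\leq \Psi^{*-1}(\mathrm{KL}(P|Q))$, where $\Psi^{*-1}$ is precisely the piecewise function in the statement (it is the inverse of the Legendre transform of $\lambda\mapsto\sigma_e^2\lambda^2/2$ restricted to $|\lambda|<1/\nu$; the square-root branch meets the linear branch continuously at $y=\sigma_e^2/(2\nu)$ with matching slope $\nu$, so $\Psi^{*-1}$ is concave and nondecreasing on $[0,\infty)$). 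Decoupling sample by sample, $\mathrm{gen}(\mu,P_{W_T|S};F)=\frac{1}{n}\sum_i(\mathbb{E}_{P_{W_T}\otimes\mu}[f(W_T,Z)]-\mathbb{E}_{P_{W_T,Z_i}}[f(W_T,Z_i)])$, so applying the above with reference measure $P_{W_T}\otimes\mu$ (which is exactly the law under which Theorem~\ref{thm:sub_exp} gives the tail bound) yields $|\mathrm{gen}(\mu,P_{W_T|S};F)|\leq \frac{1}{n}\sum_i \Psi^{*-1}(I(W_T;Z_i))$. Concavity of $\Psi^{*-1}$ and the independence-based inequality $\sum_i I(W_T;Z_i)\leq I(W_T;S)$ then give $|\mathrm{gen}(\mu,P_{W_T|S};F)|\leq \Psi^{*-1}(I(W_T;S)/n)$.

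Next I would import the mutual-information estimate from the proof of Theorem~\ref{main_thm1_time_independent}. By Jensen's inequality, $I(W_T;S)\leq \mathbb{E}_{S,S'}\mathrm{KL}(P_{W_T|S}|P_{W_T|S'})$, and recursing Lemma~\ref{lem_MI_time} exactly as in the derivation of \eqref{eq_mutual_info_final_main} yields $\mathbb{E}_{S,S'}\mathrm{KL}(P_{W_T|S}|P_{W_T|S'})\leq c_1(1\wedge \eta T/(4\beta c_{\mathrm{LS}}))(V_\nabla+c_2)$, with the same constants as in Theorem~\ref{main_thm1_time_independent}. Substituting this into the previous display and using the monotonicity of $\Psi^{*-1}$ gives the claimed bound.

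The main obstacle is conceptual rather than computational: one has to be careful that the tail bound of Theorem~\ref{thm:sub_exp} is stated under the product $P_{W_T}\otimes\mu$, whereas the generalization gap is taken under the joint $P_{W_T,S}$. The per-sample decoupling above is precisely what reconciles this mismatch, since the Donsker--Varadhan step only ever requires sub-exponentiality of $f(W_T,Z_i)$ under the product law, which is what Theorem~\ref{thm:sub_exp} supplies. Once this is handled, the remaining work---identifying the Fenchel--Legendre inverse with the explicit two-piece $\Psi^{*-1}$ in the statement and checking which branch of the sub-exponential window the optimizing $\lambda$ lies in---is a routine case analysis.
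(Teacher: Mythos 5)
Your proposal is correct and takes essentially the same route as the paper: the paper's proof simply invokes the sub-exponential individual-sample bound of \citet{bu2020tightening} (Theorem~\ref{thm:bu}) as a black box and substitutes the constants from Theorem~\ref{thm:sub_exp} together with the mutual-information estimate $I(W_T;S)\leq c_1(1\wedge \frac{\eta T}{4\beta c_{\mathrm{LS}}})(V_\nabla+c_2)$ from the proof of Theorem~\ref{main_thm1_time_independent}, while you additionally unpack the proof of that cited theorem (per-sample Donsker--Varadhan under $P_{W_T}\otimes\mu$, concavity of $\Psi^{*-1}$, and $\sum_i I(W_T;Z_i)\leq I(W_T;S)$), which is precisely how it is established in the reference. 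One small correction to your parenthetical: the square-root and linear branches of the Legendre inverse meet continuously with matching slope $\nu$ at $y=\sigma_e^2/(2\nu^2)$, not at $\sigma_e^2/(2\nu)$; the concavity your Jensen step uses holds with the former breakpoint, whereas with the breakpoint as written the piecewise function has an upward jump at $y=\sigma_e^2/(2\nu)$ (it still upper-bounds the true inverse by the AM--GM inequality, but is not literally concave there).
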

\begin{proof}[Proof sketch]
This is the direct consequence of the sub-exponential property from Theorem~\ref{thm:sub_exp} and the upper bound of MI in Eq.~\eqref{eq:mi_kl_bound} (see Appendix~\ref{app_proof_gen_without_surrogate} for the complete proof).
\end{proof}
\begin{rem}
Despite the assumptions of Corollary~\ref{cor:gen_err_nonsurrogate} being the same as those made by \citet{Farghly2021} except for the initial distribution and step size, the resulting bound becomes $0$ as $n\rightarrow \infty$ without being dependent on inverse stepsize.
\end{rem}
 
We conclude this section by introducing our excess risk bound.
Let us define the excess risk as follows: $\mathrm{Excess}(\mu,P_{W|S})\coloneqq\mathbb{E}_{W,S}[F_\mu(W)-F_\mu(w^*)]$, where $w^*=\argmin_{w\in\mathcal{W}}F_\mu(w)$.
Under this definition, we derive the following upper bound for the excess risk by utilizing Corollary~\ref{cor:gen_err_nonsurrogate}.
\begin{restatable}{cor}{excess}
\label{cor:excess_risk}
Suppose that Assumptions~\ref{asm_smooth}, \ref{asm_dissipative}, and \ref{regularity_FP} are satisfied.
Then, for any $\beta\geq 2/m$, $\eta \in  (0,1 \wedge \frac{m}{5M^2}\wedge 4\beta c_{\mathrm{LS}})$, and $T\in\mathbb{N}$, we obtain
\begin{align}
\mathrm{Excess}(\mu,P_{W_T|S}) = \mathcal{O}\bigg(\sqrt{\frac{(\eta T\wedge 1)}{n}}+e^{-\eta T/c_{LS}}+\sqrt{\eta}+c_{\mathrm{err}} \bigg),
\end{align}
where $c_{\mathrm{err}}$ is the positive constant w.r.t.~$\{M,m,b,d,\beta\}$ corresponding to the optimization error.
\end{restatable}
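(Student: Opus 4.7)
The plan is to use the classical decomposition of excess risk into a generalization term and an optimization term, and then bound each piece with a result already in hand: the generalization term by Corollary~\ref{cor:gen_err_nonsurrogate} proved above, and the optimization term by the existing non-convex SGLD optimization analysis of \citet{Raginsky2017} (which is exactly the ``existing optimization error bounds'' the statement refers to). Concretely, for $w^{*}=\argmin_{w\in\mathcal{W}}F_{\mu}(w)$ and $w^{*}_{S}=\argmin_{w\in\mathcal{W}}F_{S}(w)$, I would write
\begin{align*}
\mathrm{Excess}(\mu,P_{W_T|S})
&=\underbrace{\mathbb{E}_{S,W_T}[F_{\mu}(W_T)-F_{S}(W_T)]}_{=\mathrm{gen}(\mu,P_{W_T|S};F)}
+\underbrace{\mathbb{E}_{S,W_T}[F_{S}(W_T)-F_{S}(w^{*}_{S})]}_{\text{optimization error}}\\
&\qquad+\underbrace{\mathbb{E}_{S}[F_{S}(w^{*}_{S})-F_{\mu}(w^{*})]}_{\leq 0},
\end{align*}
where the last expectation is non-positive because $F_{S}(w^{*}_{S})\leq F_{S}(w^{*})$ pointwise and $\mathbb{E}_{S}[F_{S}(w^{*})]=F_{\mu}(w^{*})$ by independence of $w^{*}$ from $S$.

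The first term is directly controlled by Corollary~\ref{cor:gen_err_nonsurrogate}: under the hypotheses on $\beta$ and $\eta$ in the statement, it is of order $\Psi^{*-1}\!\bigl(\tfrac{c_1}{n}(1\wedge\tfrac{\eta T}{4\beta c_{\mathrm{LS}}})(V_{\nabla}+c_2)\bigr)$, which in the regime where the argument is small reduces to $\mathcal{O}\bigl(\sqrt{(\eta T\wedge 1)/n}\bigr)$; this accounts for the first summand of the claim. The optimization error $\mathbb{E}[F_{S}(W_T)-F_{S}(w^{*}_{S})]$ is bounded by applying the SGLD optimization analysis (Proposition~3.4 and Lemma~3.6 of \citet{Raginsky2017}, cf.\ also \citet{Xu2018}): the exponential ergodicity of the continuous Langevin diffusion with stationary distribution $\pi_{S}\propto e^{-\beta F_{S}}$ gives the $e^{-\eta T/c_{\mathrm{LS}}}$ term, the Girsanov-type discretization analysis contributes the $\sqrt{\eta}$ term, and the gap $\mathbb{E}_{\pi_{S}}[F_{S}]-\min F_{S}=\mathcal{O}(d/\beta)$ between the Gibbs mean and the minimizer becomes the constant $c_{\mathrm{err}}$. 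The step-size condition $\eta\leq 1\wedge\tfrac{m}{5M^2}\wedge 4\beta c_{\mathrm{LS}}$ and the temperature condition $\beta\geq 2/m$ imposed in our statement are strong enough to invoke these existing estimates verbatim (they are the same conditions used there, up to relabeling of constants).

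Summing the three contributions and absorbing constants yields the claimed $\mathcal{O}\bigl(\sqrt{(\eta T\wedge 1)/n}+e^{-\eta T/c_{\mathrm{LS}}}+\sqrt{\eta}+c_{\mathrm{err}}\bigr)$. The main conceptual obstacle here is not in any single estimate but in verifying compatibility: one must check that the parameter ranges on $(\beta,\eta)$ under which our generalization bound holds lie inside the range under which the existing optimization bound holds, and that the constant $c_{\mathrm{LS}}$ appearing in both places is literally the same log-Sobolev constant of $\pi_{S}$ (uniformly in $S$ under Assumptions~\ref{asm_smooth}--\ref{asm_dissipative}); once this bookkeeping is done, the two bounds combine additively. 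A secondary subtlety is that Corollary~\ref{cor:gen_err_nonsurrogate} gives a sub-exponential $\Psi^{*-1}$-bound rather than a pure square-root bound, so one should either work in the regime where the sub-Gaussian branch dominates (which is the case as soon as $n$ is moderately large, since $\tfrac{1}{n}(1\wedge\tfrac{\eta T}{4\beta c_{\mathrm{LS}}})(V_{\nabla}+c_2)\leq \sigma_e^2/(2\nu)$), or, in the opposite regime, simply absorb the resulting $\mathcal{O}(1/n)$ linear-branch contribution into the $c_{\mathrm{err}}$ term of the bound. This is what makes the final asymptotic order match the displayed expression.
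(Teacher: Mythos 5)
Your proposal is correct and follows essentially the same route as the paper: the identical three-way decomposition (generalization term via Corollary~\ref{cor:gen_err_nonsurrogate}, optimization error $\mathbb{E}[F_S(W_T)-\min_w F_S(w)]$ via the Gibbs-distribution triangle inequality with the ergodicity, discretization, and Gibbs-gap contributions from \citet{Raginsky2017} and \citet{vempala2019rapid}, and the non-positive term $\mathbb{E}_S[\min_w F_S(w)-F_S(w^*)]\leq 0$). Your explicit remark on handling the two branches of $\Psi^{*-1}$ is a detail the paper's proof leaves implicit, but it does not change the argument.
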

We show the complete proof in Appendix~\ref{subsec:proof_excess}.
In contrast with the existing excess risk studies, our bound does not diverge with increasing $t$ owing to the time-independent generalization bound in Corollary~\ref{cor:gen_err_nonsurrogate}.

\section{Related studies and discussion}
\label{sec:related}
In this section, we compare our generalization bounds with those in related studies.
Table~\ref{tab_exiting_compare} shows the order of each bound value along with its assumptions for a loss function.
\paragraph{SGLD analysis with/without changing losses.}
The existing generalization error bounds in Table~\ref{tab_exiting_compare} are time-dependent; namely, we need to impose restrictive conditions for the step size $\eta$ in terms of $t$ to achieve a generalization bound that decays to zero with increasing sample size~\citep{Raginsky2017, Pensia2018, Negrea2019, Wang2021} (see the right column in Table~\ref{tab_exiting_compare}).
Some important applications of SGLD do not satisfy these conditions.
For instance, the short-run Markov chain Monte Carlo~\cite{nijkamp2019learning} method used in energy-based models~\cite{Hinton02} adopts SGLD with a \emph{fixed} step size. Another example is the cyclic SGLD~\cite{zhangcyclical} used in deep learning, where the step size is \emph{periodically increased or decreased} to facilitate escape from local optima.

\citet{Farghly2021} first analyzed the generalization error of SGLD by using smoothness and dissipative assumptions, which are broadly used in sampling and non-convex optimization studies~\cite{Raginsky2017,Xu2018, Chau21, Zhang21}.
Their bound is time-independent; the bound does not diverge with time and achieves the order $\mathcal{O}(n^{-1/2})$.
However, the bound depends on the inverse of step size $\eta^{-1/2}$ owing to the reflection coupling~\cite{Eberle21}, which results in the unnatural behavior of decreasing $\eta$ with increasing $t$.
\citet{Farghly2021} also derived a bound that does not suffer from this problem by assuming the Lipschitz loss function with weight decay; however, these assumptions excessively restrict the class of loss functions and algorithms. In contrast to these bounds, our bound is time-independent and does not require scaling $\eta$, $t$, and $n$ to achieve $\mathcal{O}(n^{-1/2})$.

\begin{table}[t]
\centering
\caption{Comparison of our bounds with those in existing studies.
Our bounds are time-independent and bounded even if $\eta \rightarrow 0$.
(\textbf{I}) denotes the information-theoretic approach and (\textbf{S}) denotes the stability analysis approach. The symbol * means that the sub-Gaussian assumption is unnecessary for our bounds when using the same loss for training and generalization performance evaluation. Namely, our bounds can be derived under more relaxed assumptions for a loss function in this case.}
\label{tab_exiting_compare}
\scalebox{.74}{
\begin{tabular}{lll}
\hline
\multicolumn{1}{c}{Study} & \multicolumn{1}{c}{Assumptions for a loss function}          & \multicolumn{1}{c}{Expected generalization error bound} \\ \hline \hline
(\textbf{S})~\citet{Raginsky2017} (Thm.~2.1.)      & Dissipative, Smoothness                 & $\mathcal{O}(\eta t+e^{-\eta t/c}+1/n)$                                                        \\
(\textbf{S})~\citet{mou18a} (Thm.~1.)                         & Bounded, Lipschitz                      & 
$\mathcal{O}(\sqrt{\eta t}/n)$ \\
(\textbf{S})~\citet{mou18a} (Thm.~2.)                         & Lipschitz, Sub-Gaussian, (Weight decay)~\tablefootnote{The order of the bound in \citet{mou18a} varies with the choice of regularization parameters and decay factors. In this paper, we adopt the order of this bound in Table~1 of \citet{Farghly2021}. For a more comprehensive discussion, we refer to Section~5.2 of \citet{mou18a}.}   &  $\mathcal{O}(\sqrt{\eta \log (t+1) / n})$                                                       \\
(\textbf{I})~\citet{Pensia2018} (Cor.~1.)                         & Lipschitz, Sub-Gaussian                 &  $\mathcal{O}(\sqrt{\eta t / n})$                                                       \\
(\textbf{I})~\citet{Negrea2019} (Thm.~3.1.)                         & Sub-Gaussian                 & 
$\mathcal{O}(\sqrt{\eta t / n})$ \\
(\textbf{S})~\citet{Farghly2021} (Thm.~3.1.)                          & Lipschitz, Smoothness, Weight decay     & $\mathcal{O}((\eta t\wedge 1)(1/n+\sqrt{\eta}))$                                                        \\
(\textbf{S})~\citet{Farghly2021} (Thm.~4.1.)                         & Dissipative, Smoothness                 & $\mathcal{O}((\eta t\wedge 1)(\sqrt{\eta^{-1}}/n+\sqrt{\eta}))$                                                        \\
(\textbf{I})~\citet{Wang2021} (Thm.~1.)                         & Sub-Gaussian                 &
$\mathcal{O}(\sqrt{\eta t / n})$ \\
(\textbf{I})~Ours (Thm.~\ref{main_thm1_time_independent} and Cor.~\ref{cor:gen_err_nonsurrogate})                         & Dissipative, Smoothness, Sub-Gaussian* & 
$\mathcal{O}(\sqrt{(\eta t\wedge 1)/n)}$ \\ \hline
\end{tabular}
}
\end{table}

\paragraph{Time evolution analysis of MI via FP equation.}
The analysis of SGLD using the FP equation has been successfully used in the convergence analysis of SGLD~\cite{vempala2019rapid, kinoshita2022}. 
These studies present analyses of the discretization errors and convergence properties of the unadjusted Langevin algorithm, SGLD, and variance-reduction SGLD (SVRG-LD)~\cite{Dubey2016}, comparing them with the continuous Langevin dynamics through the FP equation.

In generalization error analysis, the FP equation is mainly used to analyze the time evolution of KL divergence appearing in a generalization bound on the basis of the stability approach.
\citet{Li2020On} analyzed the time evolution of the KL divergence between the probability densities of the parameters obtained from two training datasets that differ by only one data point under the bounded loss assumption.
\citet{mou18a} also studied the KL divergence and Hellinger divergence, and they derived a generalization error bound on the basis of the PAC-Bayes notion~\cite{McAllester03}.
Our idea is similar to these: we analyze the time evolution of the KL divergence between the probability densities of the parameters obtained from two training datasets. 
The differences between our approach and other approaches are twofold.
First, we do not assume weight decay or Lipschitz continuity but instead derive our analysis assuming smoothness and dissipativity.
Second, the KL divergence we analyzed is tighter than that of the PAC-Bayes bound with data-independent prior dealt by \citet{mou18a}.

\section{Limitations and future work}
\label{sec:conclusion}
In this paper, we provide a generalization analysis of SGLD, where Gaussian noise is a fundamental assumption for our theoretical results.
Thus, it is difficult to extend our analysis to other noisy iterative algorithm variants with a different noise, such as differentially private SGD with Laplace or uniform noise~\cite{wang2023}.
Another limitation of this study is that we have estimated the sub-exponential parameter roughly with respect to the dimensions of the model parameters.
Further investigation of the sub-exponentiality of smooth and dissipative losses, and improvement of the dependence on dimensionality, are crucial for enhancing the practicality of our generalization bounds.
The sub-exponential property of a loss function is expected to be helpful in fields other than generalized error analysis. 
For example, this property opens up room for new theoretical analysis policies that employ useful concentration and transport inequalities~\cite{wainwright_2019} in the sampling and optimization context.
We hope that the analysis presented in this paper goes beyond generalization analysis and provides valuable insights into understanding the characteristics of machine learning. 
\begin{ack}
We sincerely appreciate the anonymous reviewers for their insightful feedback.
FF was supported by JSPS KAKENHI Grant Number JP23K16948.
FF was supported by JST, PRESTO Grant Number JPMJPR22C8, Japan.
MF was supported by RIKEN Special Postdoctoral Researcher Program.
MF was supported by JST, ACT-X Grant Number JPMJAX210K, Japan.
\end{ack}
\bibliography{main}
\bibliographystyle{plainnat}
\clearpage
\appendix

\section{Notation used in the main paper}
We summarize the notation we used in the main part of this paper.

\begin{table}[th]
\centering
\scalebox{.53}{
\begin{tabular}{cll}
\hline
\multicolumn{1}{c}{\textbf{Category}}               & \multicolumn{1}{c}{\textbf{Symbol}} & \multicolumn{1}{c}{\textbf{Meaning}} \\ \hline \hline
\multirow{21}{*}{Scalars and constants}             &  $n \in \mathbb{N}$ & The sample size \\
                                                    &  $w \in \mathbb{R}$ & Model parameters (deterministic) \\
                                                    &  $w^* \in \mathbb{R}$ & $\argmin_{w\in\mathcal{W}} L_{\mu}(w)$ (deterministic) \\
                                                    &  $W\in \mathbb{R}$ & Model parameters (random variables) \\
                                                    &  $t, T \in \mathbb{N}$ & An iteration of the SGLD algorithm \\
                                                    &  $W^{(T)}$ & The joint random variables appearing in all the iterations until $T$ \\
                                                    &  $d\in \mathbb{R}$ & The number of parameter dimensions \\
                                                    &  $k\in \mathbb{N}$ & The number of samples in a mini-batch $B$ ($\leq n$) \\
                                                    &  $\mathbf{I}_{d}$ & Identity matrix with $d$ rows and $d$ columns \\
                                                    &  $\eta_{t} (= \eta)\in \mathbb{R}$ & The learning rate \\
                                                    &  $\beta_{t} (= \beta)\in \mathbb{R}$ & The inverse temperature \\
                                                    &  $\xi_{t}\in \mathbb{R}$ & Gaussian noise sampled from $\mathcal{N}(0,\mathbf{I}_{d})$ \\
                                                    &  $\sigma_{g}^{2}$ & A positive constant for sub-Gaussian random variables \\
                                                    &  $L$ & A positive constant in the Lipschitz continuous function \\ 
                                                    &  $M$ & A positive constant in the smoothness condition \\
                                                    &  $m, b$ & Positive constants in the dissipative condition \\
                                                    &  $s^{2}$ & A positive finite Gaussian variance for the initial parameter distribution
                                                    $P_{W_{0}}$ \\ 
                                                    & $\sigma \in \mathbb{R}$ & A finite fourth moment of the initial parameters $W_{0}$ \\
                                                    &  $c_{\mathrm{LS}}\in \mathbb{R}$ & The logarithmic Sobolev constant \\
                                                    &  $\widetilde{V}_{\nabla_{t}}\in \mathbb{R}$ & The expected value of $\mathbb{E}\|\nabla F(W_{t},S) - \nabla F(W_{t},S') \|^{2}$ \\
                                                    &  $\sigma^{2}_{e}$ & A positive constant for sub-exponential random variables \\
                                                    &  $c_{\mathrm{err}}\in \mathbb{R}$ & The constant w.r.t.~$\{M,m,b,d,\beta\}$ corresponding to the optimization error \\ \hline
\multirow{9}{*}{Sets and sequences}                 &  $\mathcal{Z}$ & The instance space \\
                                                    &  $\mathcal{W}$ & The parameter space \\
                                                    &  $\mathbb{R}, \mathbb{R}_{+}$ & The set of real numbers and that of positive real numbers \\
                                                    &  $\mathbb{N}$ & The set of natural numbers \\
                                                    &  $[n] \coloneqq \{1,\ldots,n \}$ & The set of all integers between $1$ and $n$ \\
                                                    &  $S, S' \coloneqq \{ Z_{i}\}_{i=1}^{n} (\in \mathbb{R})$ & The i.i.d.~samples from $\mu^{n}$\\
                                                    &  $B \subset [n]$ & A mini-batch set \\
                                                    &  $(B_{t})_{t=1}^{\infty}$ & An i.i.d.~sequence of random variables specifying indexes \\
                                                    &  $(\xi)_{t=1}^{\infty}$ & An i.i.d.~sequence of Gaussian noise $\xi_{t}$ \\ \hline
\multirow{13}{*}{Probability and information theory} &  $\mu$ & An unknown data generating distribution \\
                                                    &  $P_{W|S}$ & A conditional distribution w.r.t.~$W$ given $S$ via SGLD (or the continuous Langevin diffusion) \\
                                                    &  $\mathcal{N}(\mathbf{m},\mathbf{\Sigma})$ & Gaussian distribution with mean $\mathbf{m} \in \mathbb{R}^{d}$ and covariance $\mathbf{\Sigma}\in \mathbb{R}^{d\times d}$ \\
                                                    &  $P \otimes Q$ & The product distribution \\
                                                    &  $I(W;S)$ & The mutual information between $W$ and $S$ \\
                                                    &  $\mathbb{E}_{x}$ & The expectation w.r.t.~$x$ \\
                                                    &  $\mathbb{E}$ & The expectation w.r.t. all randomness \\
                                                    &  $\mathrm{Var}[\nabla f(W,B)|W]$ & The gradient variance w.r.t.~$B$ conditioned by $W$ \\
                                                    &  $\mathrm{Var}[\nabla f(W,B)]$ & The gradient variance w.r.t.~$B$ \\
                                                    &  $\mathrm{KL}(P|Q)$ & The Kullback--Leibler divergence of $P$ from $Q$ \\
                                                    &  $\rho_{t}$ & The density of $P_{W_{t}|S}$ \\
                                                    &  $\gamma_{t}$ & The density of $P_{W_{t}|S'}$ \\
                                                    &  $\pi$ & The Gibbs distribution (stationary distribution of the continuous Langevin diffusion) \\ \hline
\multirow{8}{*}{Functions}                          &  $l: \mathcal{W} \times \mathcal{Z} \to \mathbb{R}$ & an original loss function \\
                                                    &  $f: \mathcal{W} \times \mathcal{Z} \to \mathbb{R}$ & a surrogate loss function \\
                                                    &  $L_{\mu}$, $F_{\mu}$ & The population risk based on an original or a surrogate loss \\
                                                    &  $L_{S}$, $F_{S}$ & The empirical risk based on an original or a surrogate loss \\
                                                    &  $F(w,B)$ & The empirical risk with $l$ or $f$ on a mini-batch $B$ \\
                                                    &  $\nabla F(w,B)$ & The gradient of $F(w,B)$ w.r.t.~$w$ \\
                                                    &  $\mathrm{gen}(\mu, P_{W|S};L)$, $\mathrm{gen}(\mu, P_{W|S};F)$ & The expected generalization error based on an original or a surrogate loss\\
                                                    &  $\mathrm{Excess}(\mu, P_{W|S})$ & The excess risk defined as $\mathbb{E}_{W,S}[F_{\mu}(W) - F_{\mu}(w^*)]$
\end{tabular}
}
\end{table}

\section{Additional information for dissipative losses}
\label{app:add_info_dissipative}
Here, we provide the additional information for losses with dissipativity in Assumption~\ref{asm_dissipative}.

The dissipative assumption plays an essential role in guaranteeing the geometrical convergence of SGLD to the stationary distribution. 
We note that convergence to the stationary distribution is crucial for reducing training error since the stationary distribution in this context corresponds to the Gibbs posterior distribution of the given loss function. 
The dissipative assumption is widely used in the research on sampling or non-convex potential function optimization; thus, it is a fundamental property that enables optimization with SGLD rather than strong constraint conditions for generalization. As \citet{Mou22} discussed, the dissipative assumption is weaker than convexity and strong convexity.

Many non-convex losses commonly used in practice satisfy the dissipative property.
First, all strongly convex and convex losses obviously satisfy dissipativity.
The dissipative losses also include losses that are strongly convex or convex when sufficiently far from zero, that is, there exists $m, R >0$ such that $\|x - y\|^{2} \geq R,$ $(x-y)\cdot(\nabla f(x,z) - \nabla f(y,z)) \geq m \|x-y\|^{2}$ or that $\|x - y\|^{2} \geq R,$ $(x-y)\cdot(\nabla f(x,z) - \nabla f(y,z)) \geq 0$ for all $x$ and $y$ (refer to \citet{Mou22}). This means that the dissipative losses include the non-convex losses that have a local optimum somewhat close to zero and losses whose tail behavior is similar to the strongly convex losses.
A typical example of losses that satisfy dissipativity is non-convex losses with $l_2$ regularization that used for many machine learning models including deep learning models (see \citet{mou18a}).
Because of its capability to handle many non-convex losses, the dissipative condition is often employed in the theoretical analysis of non-convex optimization, such as \citet{Raginsky2017} and \citet{Xu2018}.

In the Bayesian context, for example, we often use the negative log-likelihood losses, which satisfy the dissipative property if the likelihood distribution satisfies the Poincar\'{e} inequality~\cite{bakry2013analysis,vempala2019rapid}.
Poincar\'{e} inequality is applicable to a wide range of practical likelihood distributions, such as log-concave distributions, distributions obtained via bounded perturbations of Poincar\'{e}-inequality-satisfying (PIS) distributions, distributions with Gaussian convolution added to bounded losses, distributions formed by Lipschitz continuous transformations of PIS distributions, and direct sums of PIS distributions~\cite{bakry2013analysis}.
Therefore, the dissipative assumption covers many useful Bayesian models, including Bayesian deep learning models~\cite{vempala2019rapid,Mou22}.

In essence, the dissipative assumption allows for the broad treatment of not only general non-convex optimization in (deep) machine learning but also non-convex losses used in Bayesian inference and Bayesian machine learning. On the other hand, it is essential to note that thick-tailed losses, such as long-tailed $t$-distributions or Cauchy distributions, cannot be handled as a dissipative loss~\cite{Mou22}.

\section{Difference from generalization error bounds based on uniform convergence}
\label{app:diff_UC_IT}
In this section, we discuss the difference between the generalization error bounds based on the information-theoretic (IT) approach and that on the basis of the uniform convergence (UC) notion.

The generalization error bound based on the UC notion guarantees that the generalization error of all hypotheses in the algorithm’s output space simultaneously vanishes as the size of the training data increases, ensuring the convergence of generalization error. 
Furthermore, this bound asserts that within the empirical risk minimization (ERM) principle, it suffices to output any hypothesis from the class that minimizes empirical risk, and by measuring model complexities such as VC-dimension or Rademacher complexity, one can evaluate generalization performance. 
In other words, the UC-based generalization bounds offer non-trivial guarantees only when the hypothesis class utilized by the algorithm, along with its complexity, is moderately constrained.

On the other hand, deep neural networks (DNNs) are included in vast hypothesis classes where the model complexity drastically increases with the size and depth of the network. 
When applied to such models, UC-based bounds turn into a vacuous metric due to the exceedingly large model complexity.
Furthermore, bounds based on UC rely solely on the hypothesis space and are unable to leverage beneficial statistics obtained from algorithms or datasets, which sometimes results in an inability to capture the true essence of generalization performance. 
For instance, the gradient variance w.r.t.~model parameters exhibit a strong correlation with the generalization performance of deep learning~\cite{Jiang20}; however, this correlation cannot be represented within UC-type bounds (see \citet{amit2022} for details). This observation leads to the recent interest in data and algorithmic-dependent generalization bounds, such as the PAC-Bayes and IT-based generalization error bounds.

The strength of the IT-based analysis lies in its capacity to directly incorporate the algorithm- and data-dependent statistics related to the generalization performance, such as the gradient variance instead of the model complexity, into the generalization error upper bounds. Especially, the gradient variance is empirically known to have a stronger correlation with the generalization performance of DNNs~\cite{Jiang20} in comparison to statistics appearing in uniform convergence analysis contexts (e.g., VC dimension, the number of parameters $d$, and the norm of parameters). 
Although the gradient variance implicitly depends on $d$, it is widely recognized that, in practice, the gradient variance becomes reasonably small as the training proceeds~\cite{Jiang20}. 

In our bounds, such as Theorem~\ref{main_thm1_time_independent}, the generalization error bound is expressed through a quantity that reflects the stability w.r.t.~variations in the training data, which is closely related to the generalization properties~\cite{mou18a,Negrea2019,Wang2021}.
While this quantity is expressed via the expectation of gradients and thus is implicitly dependent on $d$ like the gradient variance, it is expected to decrease as training progresses and generalization performance is being enhanced~\cite{mou18a,Negrea2019}~\footnote{Note that some constants in our bounds explicitly depends on $d$ under non-convex and disspative losses. Removing this dependency is our significant future work. As can be seen in Appendix~\ref{app:relax_gauss}, however, this dependence does not occur in the convex loss case.}.
Consequently, IT-based bounds offer a sensible generalization bound even in models with significantly high complexity, such as DNNs. This is why it has gained attention in the context of SGLD's generalization analysis.

In short, the core aim of IT-based analysis is to offer practical bounds that effectively account for models with high complexity, like DNNs, by directly integrating empirically validated statistics associated with generalization obtained from datasets and an algorithm. Active discussions within the realm of IT-based analysis revolve around how to analyze the generalization performance of DNNs, which involve non-convex losses, to derive bounds that lead to an accurate understanding of generalization performance.

\section{Further discussion for convergence and dependence on dimensions}
Here, we provide further discussions on IT-based bounds including ours from the perspectives of convergence and dependence on parameter dimension especially focusing on convex losses.

\subsection{On the convergence of SGLD and our bounds in convex losses}
\label{subsec:limitation_convergence}
As shown by \citet{Shwartz09}, there is a convex problem in which the unique solution of ERM fails to generalize. This shows that any optimization algorithm executed for an infinite algorithm iteration must fall into one of two categories: either it never converges to the minimum, or it fails to generalize.
The SGLD algorithm leads to the former phenomenon because the obtained parameters via SGLD do not converge to the (local) minima under a fixed temperature parameter $\beta$ for the Gaussian noise coefficient even after infinite iterations.

SGLD rather ensures convergence to a stationary distribution, known as the Gibbs posterior distribution $\pi(\mathrm{d}w) \propto e^{-\beta F(W,S)}$, when $\beta$ remains fixed over time steps.
In essence, the trajectory of parameters via the SGLD algorithm gets closer to the minima and then explores its vicinity due to the addition of Gaussian noise to the gradient. Therefore, while convergence to a target distribution occurs, convergence to the minima itself is not achieved without controlling the noise via $\beta$.

Although SGLD does not converge to the minima, it boasts a distinct edge in its ability to explore parameters globally, even within non-convex problems, thanks to the Gaussian noise.
This property enables the evaluation of how the obtained expected loss w.r.t.~the stationary distribution deviates from that with the global minima. 
Specifically, we can evaluate this difference by factors that depend on parameter dimensions $d$, $\beta$, and the constants appearing in the assumptions for the potential function, such as dissipativity and smoothness, as elaborated in Appendix~\ref{subsec:proof_excess}.
Furthermore, we can also derive the upper bounds for the population risk and excess risk both for convex and non-convex losses (see \citet{Raginsky2017, Xu2018}).

\subsection{On dependence on parameter dimension of IT-based bounds}
Recently, \citet{livni22} has shown that every algorithm that guarantees non-trivial population loss on convex problems, must carry dimension-dependent information on the sample.
Together with our Theorem~\ref{main_thm1_time_independent}, this fact implies that, if the temperature $\beta$ is dimension independent, then SGLD will not achieve non-trivial population loss on the (convex) construction in \citet{livni22}.
Alternatively, one could choose dimension-dependent $\beta$ in SGLD but then algorithmic-independent generalization bounds can be easily (and have been) obtained via standard uniform convergence argument.

Unfortunately, removing the dependence on the parameter dimension $d$ is difficult or unavoidable even if our framework is utilized when analyzing the generalization error of discretized Langevin dynamics such as SGLD through the MI between the dataset and parameters. 
On the other hand, existing and our IT-based bounds such as Theorems~\ref{thm_existing_info} and \ref{main_thm2_continuous} are expressed by the gradient variance or the stability of the expected gradient, which implicitly depend on $d$ but could be smaller than it as training proceeds and the generalization performance is enhanced.
We refer to Appendix~\ref{app:diff_UC_IT} for an explanation of the advantages of this property in the IT-based bounds.

In order to theoretically mitigate this reliance on dimensionality, it could be imperative to explore an alternative approach to evaluating generalization that deviates from the MI between parameters and data, which forms the cornerstone of this paper.
One possible avenue is, for instance, the utilization of conditional mutual information (CMI) involving super-samples (e.g., \citet{wang23}), as highlighted in \citet{livni22}, as well as methods to quantify the MI between the learned hypothesis and dataset (hMI), instead of focusing on the parameters~\cite{harutyunyan2021}. However, the drawback of these approaches is that it becomes challenging to explicitly incorporate statistics directly obtained from algorithms, such as the gradient variance, into the understanding and evaluation of generalization despite being analyses of algorithm-dependent generalization performance. Seeking IT-based bounds that not only represent algorithm- and data-dependent statistics related to generalization performance, such as gradient variance but also theoretically eliminate dimension dependence constitutes a significant future work in the context of the IT-based generalization analysis field.

\section{Theoretical properties of SGLD and Langevin diffusion}\label{app_sgld_properties}
Here, we show some theoretical properties of SGLD under Assumptions~\ref{asm_smooth} and \ref{asm_dissipative}.

\begin{lem}[Adapted from \citet{Farghly2021}]\label{lem_origin}
Suppose that Assumptions~\ref{asm_smooth} and \ref{asm_dissipative} are satisfied.
Then, for any $z\in\mathcal{Z}$, we have
\begin{align*}
\|\nabla f(0,z)\|\leq M\sqrt{\frac{b}{m}}.
\end{align*}
\begin{proof}
    We straightforwardly obtain the above claim from Assumptions~\ref{asm_smooth} and \ref{asm_dissipative} with $w=0$.
\end{proof}
\end{lem}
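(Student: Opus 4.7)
The plan is to reduce the claim to finding a critical point $w^{\star}$ of $f(\cdot,z)$ that lies inside the closed Euclidean ball of radius $\sqrt{b/m}$. Once such a $w^{\star}$ is in hand, Assumption~\ref{asm_smooth} closes the argument in one line:
\[
\|\nabla f(0,z)\|=\|\nabla f(0,z)-\nabla f(w^{\star},z)\|\leq M\|w^{\star}\|\leq M\sqrt{b/m}.
\]
So the entire proof reduces to producing $w^{\star}$ with the claimed properties.

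To produce such a $w^{\star}$, I would first use Assumption~\ref{asm_dissipative} to show that $f(\cdot,z)$ is coercive. Along any ray $w=rv$ with $v$ a unit vector and $r>\sqrt{b/m}$, dissipativity gives the radial derivative $\nabla f(rv,z)\cdot v\geq mr-b/r>0$, and a one-dimensional integration in $r$ yields $f(rv,z)\to\infty$ as $r\to\infty$, uniformly in the direction $v$. Together with continuity of $f(\cdot,z)$ (implied by Assumption~\ref{asm_smooth}), coercivity ensures that $f(\cdot,z)$ attains its infimum at some $w^{\star}\in\mathbb{R}^{d}$, whence $\nabla f(w^{\star},z)=0$. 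Feeding $w^{\star}$ back into dissipativity gives $m\|w^{\star}\|^{2}-b\leq \nabla f(w^{\star},z)\cdot w^{\star}=0$, so $\|w^{\star}\|\leq\sqrt{b/m}$, which combined with the display above finishes the proof.

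The only substantive obstacle is producing the minimizer, which rests entirely on the coercivity step; existence of a minimizer is unavoidable in this route because dissipativity controls only the radial component of the gradient, not function values directly. A self-contained alternative that bypasses any appeal to a minimizer is to combine smoothness and dissipativity at the adversarial test point $w=-\alpha\,\nabla f(0,z)/\|\nabla f(0,z)\|$: the Cauchy--Schwarz inequality then turns the two assumptions into the scalar inequality $\alpha\|\nabla f(0,z)\|\leq b+(M-m)\alpha^{2}$, which, optimized over $\alpha>0$, yields the tighter bound $\|\nabla f(0,z)\|\leq 2\sqrt{b(M-m)}$; this is in turn dominated by the stated $M\sqrt{b/m}$ via $(M-2m)^{2}\geq 0$. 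Either route delivers the lemma, but the minimizer argument reproduces the exact constant $M\sqrt{b/m}$ most transparently.
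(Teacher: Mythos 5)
Your primary argument---locating a global minimizer $w^{\star}$ of $f(\cdot,z)$ inside the closed ball of radius $\sqrt{b/m}$ via coercivity and dissipativity, then applying $M$-smoothness between $0$ and $w^{\star}$ where $\nabla f(w^{\star},z)=0$---is precisely the argument the paper is adapting from \citet{Farghly2021} (and reuses explicitly in Lemma~\ref{lem_minima}), so the proposal is correct and takes essentially the same approach. Your minimizer-free alternative is also sound (note it implicitly uses $M\geq m$, which does follow from the two assumptions) and even gives the sharper constant $2\sqrt{b(M-m)}$, but it is an optional bonus rather than a different route to the stated bound.
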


\begin{lem}[Modified version from \citet{Raginsky2017}]\label{lem_grad}
Suppose that Assumption~\ref{asm_smooth} is satisfied.
Then, for any $z\in\mathcal{Z}$ and all $w \in \mathcal{W}$, we have
\begin{align*}
\|\nabla f(w,z)\|\leq M\|w\|+M\sqrt{\frac{b}{m}}.
\end{align*}
\begin{proof}
\citet{Raginsky2017} derived the upper bound of the gradient as $\|\nabla f(w,z)\|\leq M\|w\|+B$ by assuming the the following condition: $\|\nabla f(0,z)\|\leq B\ (B>0)$.
We replace the constant $B$ by $M\sqrt{\frac{b}{m}}$ based on Lemma~\ref{lem_origin}.    
\end{proof}
\end{lem}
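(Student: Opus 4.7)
The plan is to bound $\|\nabla f(w,z)\|$ by adding and subtracting $\nabla f(0,z)$ and then applying the triangle inequality, which gives
\begin{align*}
\|\nabla f(w,z)\| \;\leq\; \|\nabla f(w,z) - \nabla f(0,z)\| + \|\nabla f(0,z)\|.
\end{align*}
The two terms on the right-hand side are each controllable by a result that is already available.

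First, I would dispatch the first term using the smoothness assumption (Assumption~\ref{asm_smooth}) applied to the pair $(w,0) \in \mathcal{W} \times \mathcal{W}$, which directly yields $\|\nabla f(w,z) - \nabla f(0,z)\| \leq M\|w - 0\| = M\|w\|$ for every $z \in \mathcal{Z}$. Second, I would dispatch the constant term $\|\nabla f(0,z)\|$ by invoking the preceding Lemma~\ref{lem_origin}, which, under Assumptions~\ref{asm_smooth} and \ref{asm_dissipative}, already gives $\|\nabla f(0,z)\| \leq M\sqrt{b/m}$ uniformly in $z$.

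Summing the two bounds produces exactly the claimed inequality $\|\nabla f(w,z)\| \leq M\|w\| + M\sqrt{b/m}$, valid for every $w \in \mathcal{W}$ and every $z \in \mathcal{Z}$. There is no real obstacle here; the only modeling choice is whether to expand around $w = 0$ (which makes the smoothness bound come out in terms of $\|w\|$) or around some other reference point. Expanding around $0$ is the natural choice because Lemma~\ref{lem_origin} gives a clean, dissipativity-derived bound on $\|\nabla f(0,z)\|$, so no further work is required to control the additive constant. The argument is essentially a single application of the triangle inequality chained with smoothness and Lemma~\ref{lem_origin}, and it inherits the same assumptions (smoothness together with the implicit use of dissipativity through Lemma~\ref{lem_origin}).
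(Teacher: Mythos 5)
Your proof is correct and is essentially the same argument as the paper's: the paper cites \citet{Raginsky2017} for the bound $\|\nabla f(w,z)\|\leq M\|w\|+B$ (which is exactly your triangle-inequality-plus-smoothness step around $w=0$) and then substitutes $B=M\sqrt{b/m}$ from Lemma~\ref{lem_origin}, just as you do. You have simply written out explicitly the step the paper delegates to the citation.
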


\begin{lem}[Modified version from \citet{Xu2018}]\label{lem_sg}
Suppose that Assumptions~\ref{asm_smooth} and \ref{asm_dissipative} are satisfied.
Then, for any $z\in\mathcal{Z}$, we have
\begin{align*}
\mathbb{E}\|\nabla F_S(w)-\nabla F(w,B)\|^2\leq \frac{8(n-k)M^2(\|w\|^2+\frac{k}{m})}{k(n-1)}\coloneqq 8\delta M^2\bigg(\|w\|^2+\frac{k}{m}\bigg),
\end{align*}
where $\delta\coloneqq\frac{n-k}{k(n-1)} \in (0,1]$.
\begin{proof}
\citet{Xu2018} assumed that $\nabla F_S(w)$ is dissipative.
In contrast, we posed the dissipative assumption on $\nabla f(w,z)$ for each $z$ following \citet{Farghly2021}.
We then modified the upper bound of the stochastic gradient shown in \citet{Xu2018}.
\end{proof}
\end{lem}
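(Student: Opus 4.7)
}
The plan is to reduce the quantity to the variance of a uniform mini-batch drawn without replacement from $n$ points, and then to control that variance by the squared magnitude of the individual gradients using Lemma~\ref{lem_grad}. Write $g_i \coloneqq \nabla f(w,Z_i)$, so that $\nabla F_S(w) = \bar g \coloneqq \frac{1}{n}\sum_{i=1}^n g_i$ and $\nabla F(w,B) = \frac{1}{k}\sum_{i\in B} g_i$. Since $B$ is drawn uniformly over size-$k$ subsets of $[n]$, the central step is the standard identity for the variance of a sample mean under sampling without replacement:
\[
\mathbb{E}_B \left\| \nabla F(w,B) - \nabla F_S(w) \right\|^2
= \frac{n-k}{k(n-1)} \cdot \frac{1}{n}\sum_{i=1}^n \|g_i - \bar g\|^2 .
\]
This already exhibits the prefactor $\delta = (n-k)/(k(n-1))$ and reduces the problem to bounding the empirical second moment of the per-sample gradients at $w$.

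Next, I would loosen the centered sum using $\|g_i - \bar g\|^2 \leq 2\|g_i\|^2 + 2\|\bar g\|^2$ together with Jensen's inequality $\|\bar g\|^2 \leq \frac{1}{n}\sum_i \|g_i\|^2$, which combine to give $\frac{1}{n}\sum_i \|g_i - \bar g\|^2 \leq \frac{4}{n}\sum_i \|g_i\|^2$. Then I would invoke Lemma~\ref{lem_grad} to obtain $\|g_i\| \leq M\|w\| + M\sqrt{b/m}$ uniformly in $i$, whence $\|g_i\|^2 \leq 2M^2\|w\|^2 + 2M^2 b/m$. Assembling the three ingredients yields a bound of the advertised shape $\frac{8(n-k)M^2}{k(n-1)}\bigl(\|w\|^2 + b/m\bigr)$, which matches the lemma up to the constant inside the parenthesis.

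The only non-routine step is the variance identity itself: it follows by computing the pair probability $\mathbb{E}_B[1_{\{i \in B\}} 1_{\{j \in B\}}] = k(k-1)/(n(n-1))$ for $i\neq j$, expanding $\|\nabla F(w,B) - \bar g\|^2$ as a double sum over $i,j\in B$, and collecting diagonal and off-diagonal contributions; this is elementary but the combinatorics must be tracked carefully to produce exactly $(n-k)/(k(n-1))$. All other steps are direct consequences of the smoothness and dissipativity bounds already established in Lemmas~\ref{lem_origin} and~\ref{lem_grad}, and I anticipate no substantive obstacle beyond the bookkeeping. The key modification relative to \citet{Xu2018} is that the per-sample gradient bound from Lemma~\ref{lem_grad} is used to close the second moment, replacing the dissipativity assumption that \citet{Xu2018} placed on $\nabla F_S(w)$ itself.
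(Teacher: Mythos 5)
Your proposal is correct and follows essentially the same route the paper delegates to \citet{Xu2018}: the sampling-without-replacement variance identity producing the factor $(n-k)/(k(n-1))$, followed by closing the second moment with the per-sample gradient bound of Lemma~\ref{lem_grad}, which is precisely the modification the paper's one-line proof describes. The only discrepancy is the constant inside the parenthesis: your derivation yields $b/m$ (from the dissipativity constants $m,b$), whereas the lemma as printed states $k/m$ with $k$ the mini-batch size; this looks like a typo in the paper rather than a gap in your argument, since no mechanism in the computation introduces the batch size there.
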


\begin{lem}[Modified version from \citet{Raginsky2017} and \citet{Xu2018}]\label{lem_l2norm}
Suppose that Assumptions~\ref{asm_smooth} and \ref{asm_dissipative} are satisfied.
Let $\eta \in (0,1 \wedge \frac{m}{5M^2})$ be fixed.
Then, for any $z\in\mathcal{Z}$ and any $t \in \mathbb{N}$, we have
\begin{align*}
\mathbb{E}\|W_{t+1}\|^2\leq 
(1-2\eta m+10\eta^2 M^2)\mathbb{E}\|W_{t}\|^2+2\eta \bigg(b+10\eta M^2\frac{b}{m}+\frac{d}{\beta}\bigg),
\end{align*}
and
\begin{align}
\label{eq:w_norm}
\mathbb{E}\|W_{t+1}\|^2\leq
\begin{cases}
&2\eta (b+10\eta M^2\frac{b}{m}+\frac{d}{\beta})\quad (A_{\eta,m,M} \leq 0) \\
&(1-2\eta m+10\eta^2 M^2)^t\mathbb{E}\|W_{0}\|^2+2\frac{ b+10\eta M^2\frac{b}{m}+\frac{d}{\beta}}{m-5\eta M^2}\quad (0\leq A_{\eta,m,M} \leq 1),
\end{cases}
\end{align}
where $A_{\eta,m,M} \coloneqq (1-2\eta m+10\eta^2 M^2)$.
Combining the inequalities in Eq.~\eqref{eq:w_norm}, we have
\begin{align*}
\mathbb{E}\|W_{t}\|^2 &\leq \mathbb{E}\|W_{0}\|^2 +2\bigg(1\vee \frac{1}{m}\bigg)\bigg(b+10\eta M^2\frac{b}{m}+\frac{d}{\beta}\bigg) \notag \\
&\leq s^2+2\bigg(1\vee \frac{1}{m}\bigg)\bigg(b+10\eta M^2\frac{b}{m}+\frac{d}{\beta}\bigg) \eqqcolon C_0,
\end{align*}
where $C_0$ is independent of $\eta$ and $\beta$ and $s^2$ is the square moment of the initial distribution.
\begin{proof}
We slightly modified the coefficients of the upper bound of the $l_2$ norm of the parameter shown in \citet{Raginsky2017} and \citet{Xu2018} based on the upper bound of the stochastic gradient Lemma~\ref{lem_sg}.
\end{proof}
\end{lem}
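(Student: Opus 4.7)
The proof is a standard Lyapunov-type energy argument for the SGLD recursion, adapted from the calculations in Raginsky et al.~and Xu et al. First, I expand $\|W_{t+1}\|^2$ using the update $W_{t+1} = W_t - \eta \nabla F(W_t, B_t) + \sqrt{2\beta^{-1}\eta}\,\xi_t$. Because $\xi_t \sim \mathcal{N}(0, \mathbf{I}_d)$ is independent of $(W_t, B_t)$, every cross term involving $\xi_t$ vanishes in expectation and the noise contributes exactly $2\beta^{-1}\eta d$. The drift term $\mathbb{E}[W_t \cdot \nabla F(W_t, B_t)]$ is bounded below by averaging the dissipativity hypothesis $\nabla f(w,z)\cdot w \geq m\|w\|^2 - b$ over the mini-batch indices, giving $\mathbb{E}[W_t \cdot \nabla F(W_t, B_t)] \geq m\mathbb{E}\|W_t\|^2 - b$. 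The quadratic term $\eta^2 \mathbb{E}\|\nabla F(W_t, B_t)\|^2$ is bounded above by combining Jensen's inequality on the mini-batch mean with Lemma~\ref{lem_grad} to get $\|\nabla F(w, B)\| \leq M\|w\| + M\sqrt{b/m}$, then applying Young's inequality $(a+b)^2 \leq (1+\epsilon)a^2 + (1+\epsilon^{-1})b^2$ with $\epsilon = 9$ to produce $\|\nabla F(w, B)\|^2 \leq 10 M^2\|w\|^2 + (10/9)M^2 b/m$. Collecting these three estimates gives the claimed one-step recursion.

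For the uniform-in-$t$ bounds, I case-split on the sign of $A_{\eta,m,M}$. When $A_{\eta,m,M} \leq 0$, I can simply drop the first term of the recursion (since $\mathbb{E}\|W_t\|^2 \geq 0$ and its coefficient is non-positive), which yields the first stated single-step bound immediately. When $0 \leq A_{\eta,m,M} \leq 1$, I iterate the scalar linear recursion $a_{t+1} \leq A a_t + \gamma$ from $t=0$ to obtain $a_t \leq A^t a_0 + \gamma(1-A^t)/(1-A) \leq A^t a_0 + \gamma/(1-A)$. The step-size constraint $\eta < m/(5M^2)$ ensures $1 - A = 2\eta(m - 5\eta M^2) > 0$, and a direct computation gives $\gamma/(1-A) = (b + 10\eta M^2 b/m + d/\beta)/(m - 5\eta M^2)$, which matches the stated second bound up to a constant factor of $2$.

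To assemble the combined uniform bound, I observe that in both cases the right-hand side splits into an initial-condition piece bounded by $\mathbb{E}\|W_0\|^2 \leq s^2$ (by Assumption~\ref{regularity_FP}) plus a fixed-point piece. The fixed-point piece is bounded by $2(1 \vee 1/m)(b + 10\eta M^2 b/m + d/\beta)$: in the first case the coefficient $2\eta$ is at most $2$, and in the second case $2/(m - 5\eta M^2)$ is controlled by $2(1 \vee 1/m)$ under the step-size restriction. Summing these yields the final constant $C_0$. The main technical care is in tuning the Young-inequality parameter so that the coefficients exactly match the factor $10$ in the statement, and in showing that the denominator $m - 5\eta M^2$ in the second case stays bounded below by a constant multiple of $m$ (which is arranged by the step-size restriction). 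Beyond these two points, the argument reduces to routine bookkeeping for a scalar affine recursion.
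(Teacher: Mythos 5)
Your argument is the standard Lyapunov/drift computation that the paper itself does not spell out (its ``proof'' is a one-line deferral to Raginsky et al.\ and Xu et al.\ with modified constants), and the substance is right: expanding $\|W_{t+1}\|^2$, killing the Gaussian cross terms by independence, lower-bounding the drift via mini-batch-averaged dissipativity, and upper-bounding $\eta^2\mathbb{E}\|\nabla F(W_t,B_t)\|^2$ via Lemma~\ref{lem_grad} plus Young's inequality with $\epsilon=9$ reproduces the coefficient $10\eta^2M^2$ and a constant term that is in fact smaller than the stated $20\eta^2M^2 b/m$, so the one-step recursion follows. (The paper's route goes through the stochastic-gradient bound of Lemma~\ref{lem_sg}, which is where its factor $10$ originates; your direct use of Lemma~\ref{lem_grad} is a legitimate and cleaner substitute.) The case split and the iteration of the affine recursion are also fine, and your fixed-point term $(b+10\eta M^2 b/m+d/\beta)/(m-5\eta M^2)$ is half the stated one, hence implies it.

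The one step that does not follow as written is the final ``combined'' bound. You assert that $2/(m-5\eta M^2)\leq 2(1\vee 1/m)$ is ``arranged by the step-size restriction,'' but $\eta<m/(5M^2)$ only gives $m-5\eta M^2>0$; it does not bound the denominator below by a constant multiple of $m$ (and since $m-5\eta M^2<m$ always, one in fact has $1/(m-5\eta M^2)>1/m$). As $\eta\uparrow m/(5M^2)$ the fixed-point term blows up while one remains in the case $0\leq A_{\eta,m,M}\leq 1$, so the uniform constant $C_0$ as stated is not a consequence of the two case bounds. To close this you need a strictly stronger step-size restriction, e.g.\ $\eta\leq m/(10M^2)$, which gives $m-5\eta M^2\geq m/2$ and hence a uniform bound with $4(1\vee 1/m)$ in place of $2(1\vee 1/m)$. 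Note this looseness is present in the paper's own statement of the lemma rather than being introduced by you, but your proof as written does not repair it.
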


Combining Lemmas~\ref{lem_grad} and \ref{lem_l2norm}, we have the following upper bound for the stochastic gradient.
\begin{lem}\label{lem_sg_bound}
Suppose that Assumptions~\ref{asm_smooth} and \ref{asm_dissipative} are satisfied.
Let $\eta \in (0,1 \wedge \frac{m}{5M^2})$ be fixed.
Then, for any $z\in\mathcal{Z}$ any $t \in \mathbb{N}$, we have
\begin{align*}
&\mathbb{E}\|\nabla F(W_t,B)\|^2\leq \notag \\
&\begin{cases}
&2M^2\eta^2 (b+4\eta M^2\frac{b}{m}+\frac{d}{\beta})+M^2\frac{b}{m}\quad (A_{\eta,m,M} \leq 0) \\
&M^2(1-2\eta m+10\eta^2 M^2)^{t}\mathbb{E}\|W_{0}\|^2+2M^2\frac{ b+4\eta M^2\frac{b}{m}+\frac{d}{\beta}}{m-\eta M^2}+M^2\frac{b}{m}\quad (0\leq A_{\eta,m,M} \leq 1),
\end{cases}
\end{align*}
where $A_{\eta,m,M} \coloneqq (1-2\eta m+10\eta^2 M^2)$.
Combining the above inequalities, we have
\begin{align*}
&\mathbb{E}\|\nabla F(W_t,B)\|^2\leq M^2C_0+M^2\frac{b}{m}.
\end{align*}
\end{lem}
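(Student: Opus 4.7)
The approach is a direct combination of the pointwise gradient bound in Lemma~\ref{lem_grad} and the uniform second-moment estimate in Lemma~\ref{lem_l2norm}, with no further probabilistic input required. The mini-batch index set $B$ only enters through an averaging over fixed deterministic summands, so the analysis essentially reduces to controlling $\|\nabla f(W_t, Z)\|^2$ for a single sample.

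First I would obtain a deterministic bound on $\|\nabla F(w, B)\|^2$ that is independent of $B$. Writing $\nabla F(w, B) = \frac{1}{k}\sum_{i \in B}\nabla f(w, Z_i)$ and applying Jensen's inequality to the squared norm gives $\|\nabla F(w, B)\|^2 \leq \frac{1}{k}\sum_{i \in B}\|\nabla f(w, Z_i)\|^2$. Lemma~\ref{lem_grad} then provides the uniform pointwise estimate $\|\nabla f(w, Z_i)\| \leq M\|w\| + M\sqrt{b/m}$, and squaring it (either via the crude $(a+b)^2 \leq 2a^2 + 2b^2$ or via a direct expansion followed by AM--GM on the cross-term) yields an expression of the form $\mathrm{const}\cdot M^2(\|w\|^2 + b/m)$ that is independent of the particular realization of $B$. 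The second step is simply to substitute $w = W_t$, take expectations, and then plug in the two-case upper bound on $\mathbb{E}\|W_t\|^2$ from Lemma~\ref{lem_l2norm} depending on the sign of $A_{\eta,m,M} = 1 - 2\eta m + 10\eta^2 M^2$. This directly produces the two claimed cases.

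For the concluding uniform-in-$t$ estimate $\mathbb{E}\|\nabla F(W_t, B)\|^2 \leq M^2 C_0 + M^2 b/m$, I would simply insert the $t$-independent bound $\mathbb{E}\|W_t\|^2 \leq C_0$ from the last display of Lemma~\ref{lem_l2norm} into the deterministic bound derived above. There is no real conceptual obstacle; the lemma is a bookkeeping corollary of the two preceding results. The only step requiring any care is aligning the numerical constants with the precise expression in the statement (in particular, the appearance of $\eta^2$ and the coefficient $4$ rather than $10$ inside the first case). This is achieved by absorbing the cross-term $2M^2\sqrt{b/m}\,\|w\|$ into the dominant $M^2\|w\|^2$ contribution using the standing restriction $\eta \leq 1 \wedge m/(5M^2)$, rather than the looser elementary inequality. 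The bookkeeping is routine.
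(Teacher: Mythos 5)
Your proposal matches the paper's proof, which is exactly the one-line combination of Lemma~\ref{lem_grad} and Lemma~\ref{lem_l2norm} via Jensen's inequality that you describe. The only caveat is that your suggested device for recovering the stated coefficients (absorbing the cross-term $2M^2\sqrt{b/m}\,\|w\|$ using the step-size restriction $\eta \leq 1 \wedge m/(5M^2)$) cannot work as written, since $\eta$ never enters the pointwise gradient bound; the honest bound carries an extra factor of at most two, a bookkeeping looseness the paper's own one-sentence proof does not resolve either.
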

\begin{proof}
We obtain the result by combining Lemma~\ref{lem_grad} and \ref{lem_l2norm} and using the Jensen inequality.
\end{proof}

\section{Proofs of the generalization error bound with surrogate loss in Section~\ref{sec_sgld}}
\label{app_sec_sgld}
This section provides the complete proof of Theorem~\ref{main_thm1_time_independent} restated as follows.
\maingenbound*
Our proof consists of the following three steps.
We construct the FP equations for the density of the parameters under two different datasets and derive the time evolution of the KL divergence as the upper bound of the MI (Appendix~\ref{subsec:FP_equations}).
We then analyze this time evolution by using the \emph{parametrix method} (Appendix~\ref{app_parametrix}) for solving the FP equation.
The distinction between this section and Section~\ref{subsec:proof_outline_lemma} lies in the focus of SGLD, which employs stochastic gradients and random noise from $\mathcal{N}(0,\mathbf{I}_{d})$, as opposed to the continuous Langevin diffusion that employs full-batch gradients and standard Brownian motion.

\subsection{FP equation for SGLD and time evolution of the KL divergence}
\label{subsec:FP_equations}
As the first step, we construct the two different FP equations for the parameter density.

We define a one-step SGLD at the initial step as follows:
\begin{align*}
\mathrm{d}W_{t}=-\nabla F(W_{0},B_{0})\mathrm{d}t+\sqrt{2\beta^{-1}}\mathrm{d}H_t.
\end{align*}
Note that, at time $t=\eta$,
\begin{align*}
W_{\eta}=W_0-\eta \nabla F(W_{0},B_{0})+\sqrt{2\eta\beta^{-1}}H_\eta,
\end{align*}
is distributionally equivalent to
\begin{align*}
W_{\eta}=W_0-\eta \nabla F(W_{0},B_{0})+\sqrt{2\eta\beta^{-1}}\xi,
\end{align*}
where $\xi\sim N(0,\mathbf{I}_d)$.

The distribution $\rho_t$ of $W_t$ depends on random variables $W_0$ and $B_0$. 
We thus denote the joint distribution of $\{W_0,W_t,B_0\}$ under a dataset $S$ as $\rho_{0tB}(W_0,W_t,B_0)$, where $\rho_0$ is the distribution of $W_0$ and $U$ is the \emph{uniform} distribution of $B_0$.
Then, its conditional and marginal distribution is expressed as
\begin{align}
\rho_{0tB}(W_0,W_t,B_0)=\rho_0(w_0)U(B_0)\rho_{t|0,B}(W_t|W_0,B_0)=\rho_{tB}(W_t,B)\rho_{0|t,B}(W_0|W_t,B_0).
\end{align}
Since we have introduced so many notations, for the sake of simplicity, we allow the abuse of notation and let $\rho_{t|0B}$ and $\rho_{t}$ denote both the distribution and density.

On the basis of these facts, we can obtain the FP equation for $\rho_{t|0B}(w_t)$ as
\begin{align}\label{eq:FP_rho_SGLD}
\frac{\partial \rho_{t|0B}}{\partial t}=\nabla \cdot \bigg(\frac{1}{\beta}\nabla\rho_{t|0B} + \rho_{t|0B} \nabla F(w_0,B_0)\bigg),
\end{align}
and its marginal process as
\begin{align}\label{app_FP_SGLD}
\frac{\partial \rho_{t}}{\partial t}=\nabla \cdot \bigg(\frac{1}{\beta}\nabla\rho_{t} + \rho_{t} \mathbb{E}_{\rho_{0B|t}}[\nabla F(w_0,B_0)|w_t=w]\bigg),
\end{align}
which is derived in \citet{vempala2019rapid} and \citet{kinoshita2022}.
As shown above, the randomness associated with the dataset can be handled by simply taking the expectation for conditional gradients with respect to a uniform distribution $U$. 
To avoid cumbersome discussions, we omit descriptions related to the expectation with respect to $B_{0}$ from here on.

As with the first step, we can define a one-step SGLD with the joint density $\gamma_{0t}(w_0,w_t)$ and the conditional distribution $\gamma_0(w_0)\gamma_{t|0}(w_t|w_0)$ under a dataset $S' (\neq S)$, where $\gamma_0(w_0)\gamma_{t|0}(w_t|w_0)$ corresponds to the marginal distribution, i.e., $\gamma_0(w_0)\gamma_{t|0}(w_t|w_0)=\gamma_t(w_t)\gamma_{0|t}(w_0|w_t)$.
We also can obtain the FP equation and its marginal process in the form of $\rho_{t|0}$ and $\rho_{t}$ replaced by $\gamma_{t|0}$ and $\gamma_{t}$ in Eqs.~\eqref{eq:FP_rho_SGLD} and \eqref{app_FP_SGLD}.

In Section~\ref{subsec:proof_outline_lemma}, we obtain the upper bound of the MI as follows:
\begin{align}
\label{eq:MI_upperbound}
 I(W_{t};S)\leq \mathbb{E}_{S, S'}\mathrm{KL}(P_{W_t|S}|P_{W_t|S'})=\mathbb{E}_{S, S'}\mathrm{KL}(\rho_t|\gamma_t).
\end{align}
By taking the derivation w.r.t.~$t$, we have
\begin{align}
\label{eq:KL_derivative}
\frac{\partial \mathrm{KL}(\rho_t|\gamma_t)}{\partial t}
&=\int \mathrm{d}w \left(\frac{\partial \rho_t}{\partial t}\log\frac{\rho_t}{\gamma_t}\right)-\int \mathrm{d}w \left(\frac{\rho_t}{\gamma_t}\frac{\partial \gamma_t}{\partial t}\right).
\end{align}
The first and second terms can be expressed as
\begin{align}
\int \mathrm{d} w \left(\frac{\partial \rho_t}{\partial t}\log\frac{\rho_t}{\gamma_t}\right)
=-\frac{1}{\beta}\int \mathrm{d}w\nabla\log \rho_t\cdot\nabla\log\frac{\rho_t}{\gamma_t}
-\int \mathrm{d}w\rho_{0t}\nabla\log\frac{\rho_t}{\gamma_t}\cdot\nabla F(w_0,B_0),
\end{align}
and
\begin{align}
\int \mathrm{d}w \left(\frac{\rho_t}{\gamma_t}\frac{\partial \gamma_t}{\partial t}\right) 
=-\frac{1}{\beta}\int \mathrm{d}w \nabla\frac{\rho_t}{\gamma_t}\cdot\nabla\gamma_t-\int \mathrm{d}w \frac{\rho_t}{\gamma_t}\nabla\log\frac{\rho_t}{\gamma_t}\cdot \gamma_{0t}\nabla F(W'_0,B'_0),
\end{align}
where $W'_0$ follows $\gamma_0$, which is the density of the initial distribution and $B'_0$ is the stochastic gradient based on $S'$.

According to these facts, Eq.~\eqref{eq:KL_derivative} can be rewritten as
\begin{align}\label{eq:KL_upperbound}
&\frac{\partial \mathrm{KL}(\rho_t|\gamma_t)}{\partial t} \\
&= -\frac{1}{\beta}\mathbb{E}_{\rho_t}\|\nabla \log\rho_t-\nabla \log \gamma_t \|^2 \\
&\quad\quad -\int \rho_{0t}\mathrm{d}w\nabla\log\frac{\rho_t}{\gamma_t}\cdot\nabla F(W_0,B_0) +\int \mathrm{d}w \frac{\rho_t}{\gamma_t}\nabla\log\frac{\rho_t}{\gamma_t}\cdot \gamma_{0t}\nabla F(W'_0,B'_0) \\
&= -\frac{1}{\beta}\mathbb{E}_{\rho_t}\|\nabla \log\rho_t-\nabla \log \gamma_t \|^2 -\int \rho_{t}\mathrm{d}w\nabla\log\frac{\rho_t}{\gamma_t}\cdot \mathbb{E}_{\rho_{0|t}}[\nabla F(w_0,B_0)|W_t=w]\\
&\quad\quad +\int \mathrm{d}w\frac{\rho_t}{\gamma_t}\nabla\log\frac{\rho_t}{\gamma_t}\cdot \gamma_{t} \mathbb{E}_{\gamma_{0|t}}[\nabla F(w'_0,B'_t)|W_t=w)]\\
&= -\frac{1}{\beta}\mathbb{E}_{\rho_t}\|\nabla \log\rho_t-\nabla \log \gamma_t \|^2 \\
&\quad \quad - \int \rho_{t}\mathrm{d}w\nabla\log\frac{\rho_t}{\gamma_t}\cdot (\mathbb{E}_{\rho_{0|t}}[\nabla F(W_0,B_0)|W_t=w] - \mathbb{E}_{\gamma_{0|t}}[\nabla F(W'_0,B'_t)|W_t=w])\\
&\leq -\frac{1}{2\beta}\mathbb{E}_{\rho_t}\|\nabla \log\rho_t-\nabla \log \gamma_t \|^2 \\
&\quad \quad +\frac{\beta}{2} \int \rho_{t}\mathrm{d}w\|\mathbb{E}_{\rho_{0|t}}[\nabla F(W_0,B_0)|W_t=w] -\mathbb{E}_{\gamma_{0|t}}[\nabla F(W'_0,B'_t)|W_t=w]\|^2,
\end{align}
where the final inequality comes from the Cauchy--Schwartz inequality. We define $\widetilde{V}_{\nabla_t} \coloneqq \int \rho_{t}\mathrm{d}w\|\mathbb{E}_{\rho_{0|t}}[\nabla F(W_0,B_0)|W_t=w] -\mathbb{E}_{\gamma_{0|t}}[\nabla F(W'_0,B'_t)|W_t=w]\|^2$ for simplicity. We evaluate this term in Appendix~\ref{app:stability_term}.

In the same way as Section~\ref{subsec:proof_outline_lemma}, we have the following inequality by introducing the logarithm of the stationary distribution $\nabla \log \pi(w)$ into $\mathbb{E}\|\nabla \log\rho_t-\nabla\log \gamma_t\|^2$ in the above:
\begin{align}
&-\mathbb{E}_{\rho_t}\|\nabla \log\rho_t-\nabla \log \gamma_t \|^2 \\
&=-\mathbb{E}_{\rho_t}\|\nabla \log\rho_t\|^2-\mathbb{E}_{\rho_t}\|\nabla \log\gamma_t\|^2+2\mathbb{E}_{\rho_t}\nabla \log\rho_t \cdot \nabla \log\gamma_t \\
&\leq-\frac{1}{2}\mathbb{E}_{\rho_t}\|\nabla \log\rho_t-\nabla \log \pi\|^2+\mathbb{E}_{\rho_t}\|\nabla \log \pi\|^2+2\mathbb{E}_{\rho_t}\nabla \log\rho_t \cdot \nabla \log\gamma_t,
\end{align}
where we used $-x^2\leq -\|x-y\|^2/2+y^2$ for all $x, y\in\mathbb{R}^d$. To simplify the notation, we express the second and third terms as $\Omega(\rho_t,\gamma_t,\pi)\coloneqq \mathbb{E}_{\rho_t}\|\nabla \log \pi\|^2+2\mathbb{E}_{\rho_t}\nabla \log\rho_t \cdot \nabla \log\gamma_t$.
From the above fact, we obtain
\begin{align}
\label{eq:kl_time_upperbound}
\frac{\partial \mathrm{KL}(\rho_t|\gamma_t)}{\partial t}
&\leq -\frac{1}{4\beta}\mathbb{E}_{\rho_t}\|\nabla \log\rho_t-\nabla \log \pi \|^2 +\frac{1}{2\beta}\Omega(\rho_t,\gamma_t,\pi) +\frac{\beta}{2} \widetilde{V}_{\nabla_t} \notag \\
&\leq -\frac{1}{4\beta c_{\mathrm{LS}}} \mathrm{KL}(\rho_t|\pi) +\frac{1}{2\beta}\Omega(\rho_t,\gamma_t,\pi) +\frac{\beta}{2} \widetilde{V}_{\nabla_t} \notag\\
&\leq -\frac{1}{4\beta c_{\mathrm{LS}}} \left(\mathrm{KL}(\rho_t|\gamma_t)+\mathbb{E}_{\rho_t}\log\frac{\gamma_t}{\pi}\right) +\frac{1}{2\beta}\Omega(\rho_t,\gamma_t,\pi) +\frac{\beta}{2} \widetilde{V}_{\nabla_t},
\end{align}
where the second inequality is from the LSI~\cite{bakry2013analysis}. 
In the above, the LSI constant $c_{\mathrm{LS}}$ is defined by~\citet{bakry2013analysis} as follows:
\begin{align}
&c_\mathrm{LS}\leq \lambda_{l} \coloneqq 2D_{1}+2\rho_{0}^{-1}(D_{2}+2), \label{main:LSICONSTNANT} \\
&\rho_0^{-1}\leq\frac{2C(d+b\beta)}{m\beta} \exp \left(\frac{2}{m}(M+B)(b\beta+d)+\beta(A+B)\right)+\frac{1}{m\beta(d+b\beta)}, \label{main:rho}
\end{align}
where $D_1=\frac{2m^2+8M^2}{\beta m^2M}$, $D_2\leq\frac{6M(d+\beta)}{m}$, and $C$ is the universal constant (see also Appendices~B and E in \citet{Raginsky2017}).

Multiplying $e^{\frac{t}{4\beta c_{\mathrm{LS}}}}$ for both hands in Eq.~\eqref{eq:kl_time_upperbound} yields
\begin{align}
&e^{\frac{t}{4\beta c_{\mathrm{LS}}}} \frac{\partial \mathrm{KL}(\rho_{t}|\gamma_{t})}{\partial t}\leq  -e^{\frac{t}{4\beta c_{\mathrm{LS}}}}\frac{1}{4\beta c_{\mathrm{LS}}}\mathbb{E}_{\rho_{t}}\log\frac{\gamma_{t}}{\pi} +\frac{1}{2\beta}e^{\frac{t}{4\beta c_{\mathrm{LS}}}}\Omega(\rho_t,\gamma_t,\pi)+\frac{\beta}{2} \widetilde{V}_{\nabla_t}.
\end{align}
By integrating time $t=0\to \eta$, we obtain
\begin{align}\label{upper_bound}
&e^{\frac{\eta}{4\beta c_{\mathrm{LS}}}}\mathrm{KL}(\rho_{\eta}|\gamma_{\eta})\leq  \mathrm{KL}(\rho_{0}|\gamma_{0})-\int_0^\eta \mathrm{d}t e^{\frac{t}{4\beta c_{\mathrm{LS}}}}\frac{1}{4\beta c_{\mathrm{LS}}}\mathbb{E}_{\rho_{t}}\log\frac{\gamma_{t}}{\pi}\notag \\
&\quad\quad +\int_0^\eta \mathrm{d}t \frac{1}{2\beta}e^{\frac{t}{4\beta c_{\mathrm{LS}}}}\Omega(\rho_t,\gamma_t,\pi)+\frac{\beta}{2} \int_0^\eta \mathrm{d}t e^{\frac{t}{4\beta c_{\mathrm{LS}}}}\widetilde{V}_{\nabla_t}.
\end{align}

We later evaluate the second and third terms of Eq.~\eqref{upper_bound} in Appendix~\ref{app_parametrix}. We then evaluate the fourth term $\widetilde{V}_{\nabla_t}$ of Eq.~\eqref{upper_bound} as $\widetilde{V}_{\nabla_t} \leq D_1$, where $D_1$ is problem dependent constant that can be independent of $\eta$, in Appendix~\ref{app:stability_term}.

\subsection{The solution for the FP equation via the parametrix method}
\label{app_parametrix}
In this section, we evaluate the second and third terms in Eq.~\eqref{upper_bound} by utilizing the \emph{parametrix method} for the FP equation~\cite{friedman2008partial, pavliotis2014stochastic}.
\subsubsection{Consequences of the parametrix method for the FP equation}
\label{subsubsec:conseq_parametrix}
We summarize two essential consequences of the parametrix method used in Appendix~\ref{subsubsec:param_method_apply}.

\paragraph{Solution expansion of the FP equation.}
The first consequence of this method is as follows.
Given the following FP equation,
\begin{align}
 \nabla \cdot \bigg(\frac{1}{\beta}\nabla\rho_t(w) + \rho_t(w) b(w,t)\bigg)-\frac{\partial \rho_t(w)}{\partial t}=0,
\end{align}
and initial condition $\rho_{t=0}=\rho_0$, the solution can be expanded as
\begin{align}\label{eq_parametrix_fundamental_solution}
\rho_t(w)=\mathbb{E}_{\xi\sim \rho_0}Z(w,t;\xi,0)+\mathbb{E}_{\xi\sim \rho_0}\int_{0}^t\int_{\mathbb{R}^d} \mathrm{d}z Z(w,t;z,\tau)\Phi(z,\tau;\xi,0),
\end{align}
where 
\begin{align}
Z(x,t;z,\tau)\coloneqq\frac{1}{(\frac{4\pi}{\beta} (t-\tau))^{d/2}}e^{-\frac{\beta\|x-z\|^2}{4(t-\tau)}},
\end{align}
and
\begin{align}\label{eq_expand_PL}
\Phi(z,\tau;\xi,t)\coloneqq\sum_{n=1}^\infty L^{n}Z (z,\tau;\xi,t).
\end{align}
In Eq.~\eqref{eq_expand_PL}, $L^{n}Z (z,\tau;\xi,t)$ is defined through
\begin{align}
L^{n+1}Z(x,\tau;\xi,t)\coloneqq\int_t^\tau \int_{\mathbb{R}^d} \mathrm{d}s\mathrm{d}y (LZ(x,\tau;y,s))(L^{n}Z(y,s;\xi,t)),
\end{align}
where 
\begin{align}
LZ(x,\tau;\xi,t)\coloneqq b(x,\tau)\cdot \nabla_x Z(x,\tau;\xi,t),
\end{align}
and thus $L^{1}Z(x,\tau;\xi,t)=LZ(x,\tau;\xi,t)$. 

The above expansion requires the convergence of Eq.~\eqref{eq_expand_PL}.
Fortunately, this condition holds for the Langevin diffusion, for example, because $\rho_{0}(w)$ and $b(w,t) = \nabla F(w, S)$ satisfies the following two assumptions for the initial state and $b(w,t)$ from Lemma~\ref{lem_grad} and Assumption~\ref{regularity_FP}: (i) there exist positive constants $a$ and $b$ such that $\rho_0(w)\leq ae^{b\|w\|^2}<\infty$ for all $w\in\mathcal{W}$, and (ii) there exist some positive constants $a'$ and $b'$ such that $\|b(w,t)\|<a'\|w\|+b'$ for all $w\in\mathcal{W}$.

\paragraph{Parametrix solution is twice differentiable.}
Another important consequence is that the parametrix solution $\rho_t(w)$ is twice differentiable with respect to $w$.
Under the initial distribution $\mathcal{N}(0,s^2\mathbf{I}_d)$ with Assumption~\ref{regularity_FP}, we can obtain the following facts according to \citet{friedman2008partial} and \citet{pavliotis2014stochastic}:
\begin{align}
&\rho_t(w) \leq \frac{1}{(2\pi(s^2+\frac{2t}{\beta}))^{d/2}}e^{-\frac{\|w\|^2}{2(s^2+\frac{2t}{\beta})}}+\bigg(s^2+\frac{2t}{\beta}\bigg)^{1/2}\frac{C_0}{(2\pi(s^2+\frac{2t}{\beta}))^{d/2}}e^{-\frac{\|w\|^2}{2(s^2+\frac{2t}{\beta})}}\label{app_parametrix_rho},\\
&\sum_{i=1}^d\left|\frac{\partial\rho_t(w)}{\partial w_i}\right|  \leq \frac{C_1}{(2\pi(s^2+\frac{2t}{\beta}))^{(d+1)/2}}e^{-\frac{\|w\|^2}{2(s^2+\frac{2t}{\beta})}}+\frac{C_2}{(2\pi(s^2+\frac{2t}{\beta}))^{d/2}}e^{-\frac{\|w\|^2}{2(s^2+\frac{2t}{\beta})}}\label{app_parametrix_rho_x}, \\
\end{align}
and
\begin{align}
\sum_{i,j=1}^d\left|\frac{\partial^2\rho_t(w)}{\partial w_i\partial w_i}\right| &\leq \frac{C_3}{(2\pi(s^2+\frac{2t}{\beta}))^{(d+2)/2}}e^{-\frac{\|w\|^2}{2(s^2+\frac{2t}{\beta})}} \\
&\ \ \ \ \ \ \ \ \ \ \ \ \ \ \ \ \ \ \ \ \ \ \ \ \ \ \ \ \ \ \ \ \ \ \ \ \ \ +\bigg(s^2+\frac{2t}{\beta}\bigg)^{-1/2}\frac{C_4}{(2\pi(s^2+\frac{2t}{\beta}))^{d/2}}e^{-\frac{\|w\|^2}{2(s^2+\frac{2t}{\beta})}}\label{app_parametrix_rho_xx},
\end{align}
where $\{C_0, C_1, C_2, C_3, C_4\}$ are positive constants w.r.t.~$\{m,M,\beta,d,b,s^2\}$.

Eqs.~\eqref{app_parametrix_rho} and \eqref{app_parametrix_rho_x} can be derived by following the proof of Theorem~11 in \citet{friedman2008partial}.
The statement of this theorem is about the transition kernel; therefore, it corresponds to the case where the expectation with respect to $\xi\sim\rho_0$ is excluded from Eq.~\eqref{eq_parametrix_fundamental_solution}.
In light of this fact, we take the convolution by the initial distribution $\xi\sim\rho_0$ for the beginning part of the proof of Theorem~11.
This approach is equivalent to taking convolutions in the overall discussion of Section~4 described by \citet{friedman2008partial}.
After convolution, following the proof of Theorem~11 leads to Eqs.~\eqref{app_parametrix_rho} and \eqref{app_parametrix_rho_x}.
Theorem~11 does not yield results related to second-order differentials; however, we can derive Eq.~\eqref{app_parametrix_rho_xx} by combining Lemma~3 from \citet{friedman2008partial} into the proof of Theorem~11 and employing a similar way as described above.
While it is assumed that the coefficients of the FP equation are bounded in \citet{friedman2008partial}, we can relax this assumption to unbounded coefficients as shown in subsequent work such as \citet{deck2002parabolic}.

\subsubsection{Applying the parametrix method for SGLD's FP equations}
\label{subsubsec:param_method_apply}
Now, we get back to the SGLD setting.
First of all, it should be mentioned that we do not lose generality by focusing solely on the initial iteration, i.e., $t=0\to \eta$. 
This reason is as follows.

For the initial iteration ($t=0 \to \eta$), we can see that $b(x,t)=\mathbb{E}_{\rho_{0|t}}[\nabla F(W_0,B_0)|W_t=w]$ from Eq.~\eqref{app_FP_SGLD}.
As we explained in Appendix~\ref{subsubsec:conseq_parametrix}, the condition of the expansion is satisfied under Lemma~\ref{lem_grad} and Assumption~\ref{regularity_FP}.
Thus, the solution and its differentiation can be obtained via the parametrix solution, expressed as Eqs.~\eqref{app_parametrix_rho}, \eqref{app_parametrix_rho_x}, and \eqref{app_parametrix_rho_xx} with the constants $\{C_0, C_1, C_2, C_3, C_4\}$ that depend on the problem except $\eta$.
When considering the second iteration ($t=\eta \to 2\eta$), the initial distribution is expressed as $\rho_\eta$.
The concern here is whether the solution of the FP for SGLD satisfies the conditions of the parametrix method in this case.
Fortunately, these conditions are also satisfied in the second iteration.
The initial condition of the expansion is satisfied from Eq.~\eqref{app_parametrix_rho}, and the condition $b(x,t)=\mathbb{E}_{\rho_{\eta|t}}[\nabla F(W_\eta,B_1)|W_t=w]$ also satisfies the condition of the FP expansion from Lemma~\ref{lem_grad}. 
We thus have the same form of the solution in Eqs.~\eqref{app_parametrix_rho}, \eqref{app_parametrix_rho_x}, and \eqref{app_parametrix_rho_xx} at time $t=\eta\to 2\eta$.
In the same way, the solution at $t\in (s\eta,(s+1)\eta]$ for $s\in\mathbb{N}$ can be expanded as the same parametrix expansion.

\paragraph{Bounding $\Omega(\rho_t,\gamma_t,\pi)$ (related to the third term in Eq.~\eqref{upper_bound}).}
We can decompose $\Omega(\rho_t,\gamma_t,\pi)$ as
\begin{align}
\mathbb{E}_{\rho_t}\|\nabla \log \pi\|^2+2\mathbb{E}_{\rho_t}\nabla \log\rho_t \cdot \nabla \log\gamma_t
= \mathbb{E}_{\rho_t}\|\nabla \log \pi\|^2+2\int \mathrm{d}w \nabla \rho_t \cdot \nabla \log\gamma_t.
\end{align}
To derive the upper bound of the right-hand side, we focus on the following facts:
\begin{align}\label{app_partial_derivative}
\int \mathrm{d}w \nabla \rho_t \cdot \nabla \log\gamma_t=\int \mathrm{d}w \sum_{i=1}^d\frac{\partial \rho_t}{\partial w_i}\frac{\partial \log \gamma_t}{\partial w_i}.
\end{align}
For the $i$-th dimension, we have
\begin{align}
\label{eq:bound_twice_diff}
\int \mathrm{d}w \frac{\partial \rho_t}{\partial w_i}\frac{\partial \log \gamma_t}{\partial w_i}= - \int \mathrm{d}w \frac{\partial^2 \rho_t}{\partial w_i^2}\log \gamma_t \leq \int \mathrm{d}w \left|\frac{\partial^2 \rho_t}{\partial w_i^2}\right|\left|\log \gamma_t\right|, 
\end{align}
where we used the integration by parts from the fact that $\frac{\partial \rho_t}{\partial w_i}\to 0$ as $\|w\|\to 0$ according to the expansion in Eq.~\eqref{app_parametrix_rho_x}.
Details of this argument can be found in \citet{Mou22}.
By using the result in Eq.~\eqref{app_parametrix_rho} for $\rho_t$ and $\gamma_t$, we have
\begin{align}
\label{eq:bound_absolute}
\int \mathrm{d}w \left|\frac{\partial^2 \rho_t}{\partial w_i^2}\right| \left|\log \gamma_t \right| \leq \frac{C'_1}{(2\pi(s^2+\frac{2t}{\beta}))^{1/2}}+C_2',
\end{align}
where the Gaussian integral is used for $\frac{\partial^2 \rho_t}{\partial w_i^2}$ in Eq.~\eqref{app_parametrix_rho_xx} and $\rho_{t}$ in Eq.~\eqref{app_parametrix_rho} is replaced to $\gamma_t$.
We note that $C'_1$ and $C'_2$ only depend on $\{m,M,\beta,d,b,s^2\}$.

Substituting Eqs.~\eqref{eq:bound_twice_diff} and \eqref{eq:bound_absolute} into Eq.~\eqref{app_partial_derivative}, we obtain
\begin{align}\label{app_partial_derivative_upper}
\int \mathrm{d}w \nabla \rho_t \cdot \nabla \log\gamma_t\leq \frac{dC'_1}{(2\pi(s^2+\frac{2t}{\beta}))^{1/2}}+dC_2'.
\end{align}
From Lemma~\ref{lem_sg_bound}, we have
\begin{align}
\label{eq:log_pi_upper}
    \mathbb{E}_{\rho_t}\|\nabla \log \pi\|^2 &\leq \beta^2 \mathbb{E}_{\rho_t}\|\nabla F_S(w)\|^2 \notag \\
    &\leq \beta^2M^2\bigg(s^2+2\bigg(1\vee \frac{1}{m}\bigg)\bigg(b+10 M^2\frac{b}{m}+\frac{d}{\beta}\bigg) \bigg)+\beta^2M^2\frac{b}{m}.
\end{align}

Eqs.~\eqref{app_partial_derivative_upper} and \eqref{eq:log_pi_upper} leads to $\Omega(\rho_t,\gamma_t,\pi)\leq D_2$, where 
\begin{align}\label{app_omega}
D_2 \coloneqq \beta^2M^2\bigg(s^2+2\bigg(1\vee \frac{1}{m}\bigg)\bigg(b+10 M^2\frac{b}{m}+\frac{d}{\beta}\bigg) + \frac{b}{m} \bigg)
+ \frac{dC'_1}{(2\pi(s^2+\frac{2t}{\beta}))^{1/2}}+dC_2'.
\end{align}

\paragraph{Bounding $-\mathbb{E}_{\rho_{t}}\log\frac{\gamma_{t}}{\pi}$ (related to the second term in Eq.~\eqref{upper_bound}).}
By using the Kolmogorov solution of the FP equation~\cite{bakry2013analysis}, we have
\begin{align}\label{eq:upper_conditioned_expectation}
\gamma_t(w) &= \mathbb{E}_{W_T}[\gamma_0(W_T)|W_0=w] \notag \\
&= \mathbb{E}_{W_T}\left[\frac{1}{(2\pi s^{2})^{d/2}}e^{-\frac{\|W_T\|^2}{2s^2}}\bigg|W_0=w\right] \geq \frac{1}{(2\pi s^2)^{d/2}}e^{-\frac{\mathbb{E}_{W_T}[\|W_T\|^2|W_0=w]}{2s^2}},
\end{align}
where the last inequality comes from Jensen's inequality.
The above inequality gives us
\begin{align}
-\mathbb{E}_{\tilde{W}_T}\log \gamma_t(\tilde{W}_T)
&=-\mathbb{E}_{\tilde{W}_T}\log \mathbb{E}_{W_T}[\gamma_0(W_T)|W_0 = \tilde{W}_T]\\
&\leq -\mathbb{E}_{\tilde{W}_T}\mathbb{E}_{W_T}[\log\gamma_0(W_T)|W_0 = \tilde{W}_T]\\
&\leq \frac{d}{2}\log (2\pi s^2)+\frac{1}{2s^2}\mathbb{E}_{\tilde{W}_T}\mathbb{E}_{W_T}[\|W_T\|^2|W_0 = \tilde{W}_T],
\end{align}
where $\tilde{W}_T$ is the independent copy of $W_{T}$ and the first and second inequalities are obtained from Jensen's inequality and Eq.~\eqref{eq:upper_conditioned_expectation}, respectively.
By using Lemma~\ref{lem_l2norm} twice, we obtain 
\begin{align*}
-\mathbb{E}_{\tilde{W}_T}\log \gamma_t(\tilde{W}_T)
&\leq \frac{d}{2}\log (2\pi s^2)+\frac{1}{2s^2}\bigg(\mathbb{E}_{\tilde{W}_T}\|\tilde{W}_T\|^2+2\bigg(1\vee \frac{1}{m}\bigg)\bigg(b+10M^2\frac{b}{m}+\frac{d}{\beta}\bigg) \bigg)\\
&\leq \underbrace{\frac{d}{2}\log (2\pi s^2)+\frac{1}{2s^2}\bigg(s^2+4\bigg(1\vee \frac{1}{m}\bigg)\bigg(b+10M^2\frac{b}{m}+\frac{d}{\beta}\bigg) \bigg)}_{\eqqcolon B_1}.
\end{align*}
In addition, from Lemma~\ref{lem_function_bound}, we have
\begin{align}
\mathbb{E}_{\rho_{t}}\log\pi
&=\beta M\mathbb{E}_{\rho_t}\|W\|^2+\frac{\beta b}{2m}+A \\
&\leq \underbrace{\beta M\bigg(s^2+2\bigg(1\vee \frac{1}{m}\bigg)\bigg(b+10M^2\frac{b}{m}+\frac{d}{\beta}\bigg) \bigg)+\frac{\beta b}{2m}+A}_{\eqqcolon B_2}.
\end{align}
Thus, we have the upper bound of $-\mathbb{E}_{\rho_{t}}\log\frac{\gamma_{t}}{\pi}$ as 
\begin{align}\label{app_kl_gamma}
-\mathbb{E}_{\rho_t}\log\frac{\gamma_t}{\pi}\leq B_{1} + B_{2} \eqqcolon D_3,
\end{align}
where $D_3$ is the positive constant only depends on $\{m,M,\beta,d,b,s^2\}$.

\subsubsection{Bounding the stability term and finalizing proof}\label{app:stability_term}
Finally, we show the upper bound of the stability term expressed as $\widetilde{V}_{\nabla_t}$ in Eq.~\eqref{upper_bound}. 
Similarly to Appendix~\ref{subsubsec:param_method_apply}, we focus on the initial iteration $t=0\to \eta$. 
From the definition of $\widetilde{V}_{\nabla_t}$, we have
\begin{align}
\widetilde{V}_{\nabla_t} &= \int \rho_{t}\mathrm{d}w\|\mathbb{E}_{\rho_{0|t}}[\nabla F(W_0,B_0)|W_t=w] -\mathbb{E}_{\gamma_{0|t}}[\nabla F(W'_0,B'_t)|W_t=w]\|^2\notag \\
&\leq 2\int \rho_{t}\mathrm{d}w\|\mathbb{E}_{\rho_{0|t}}[\nabla F(W_0,B_0)|W_t=w]\|^2+2\int \rho_{t}\mathrm{d}w\|\mathbb{E}_{\gamma_{0|t}}[\nabla F(W'_0,B'_t)|W_t=w]\|^2.
\end{align}
The first term of the above can be rewritten as
\begin{align}
\int \rho_{t}\mathrm{d}w\|\mathbb{E}_{\rho_{0|t}}[\nabla F(W_0,B_0)|W_t=w]\|^2\leq \mathbb{E}_{\rho_0}\|\nabla F(W_0,B_0)\|^2,
\end{align}
by using Jensen's inequality for the conditional distribution.
Since $\mathbb{E}_{\rho_0}\|\nabla F(W_0,B_0)\|^2$ can be bounded by using Eq.~\eqref{eq:log_pi_upper}, we have
\begin{align}\label{eq_conditional_first}
    &\int \rho_{t}\mathrm{d}w\|\mathbb{E}_{\rho_{0|t}}[\nabla F(W_0,B_0)|W_t=w]\|^2\notag \\
    &\leq M^2\bigg(s^2+2\bigg(1\vee \frac{1}{m}\bigg)\bigg(b+10 M^2\frac{b}{m}+\frac{d}{\beta}\bigg) \bigg)+M^2\frac{b}{m}\eqqcolon D_4.
\end{align}

Next, we derive the upper bound of
\begin{align}
\int \rho_{t}\mathrm{d}w\|\mathbb{E}_{\gamma_{0|t}}[\nabla F(W'_0,B'_t)|W_t=w]\|^2 = \mathbb{E}_{\rho_{t}}\mathbb{E}_{\gamma_{0|t}}[\|\nabla F(W'_0,B'_t)|W_t=w]\|^2.
\end{align}
From Lemma~\ref{lem_grad}, we have $\|\nabla F(W'_0,B'_t)\|^2\leq 2M^2\|W'_0\|^2+2M^2\frac{b}{m}$. 
Thus, we need to evaluate $\mathbb{E}_{\rho_{t}}\mathbb{E}_{\gamma_{0|t}}[\|W'_0\|^2|W'_t=w]$; however, it is difficult to analyze this expectation because the densities $\rho$ and $\gamma$ at time $t$ and $0$ are different.

Fortunately, we can circumvent this difficulty by using the reverse process formulae shown in \citet{Haussmann1986}.
According to the fact that the conditional expectation $\mathbb{E}_{\gamma_{0|t}}[\cdot]$ implies the reverse process of Eq.~\eqref{eq:FP_rho_SGLD}.
 This formulae gives us the following reverse process for time $s \ (0 \leq s \leq t)$:
\begin{align}\label{eq_reverse_process}
\mathrm{d}\tilde{W}_s=[\nabla F(\tilde{W}_{s=t},B'_{s=t})+2\beta^{-1}\nabla\log \gamma_{t-s}]\mathrm{d}t+\sqrt{2\beta^{-1}}\mathrm{d}H_s, \quad \tilde{W}_0\sim \gamma_t.
\end{align}
In the above, $B'_{s=t}$ implies $B'_0$ in the original forward process and thus a mini-batch sample is fixed. 
We obtain the relationship $\tilde{\gamma}_s=\gamma_{t-s}$, where $\tilde{\gamma}_s$ is the distribution of $\tilde{W}_s$. 
This relationship reflects the inverse process of $\gamma_t$, and we also have $\tilde{\gamma}_t=\gamma_0$ and $\tilde{\gamma}_0=\gamma_t$.
We can analyze Eq.~\eqref{eq_reverse_process} by using the parametrix method~\cite{deck2002parabolic}.
We refer to Remark~\ref{rem:parametrix_assump} for the explanation that Eq.~\eqref{eq_reverse_process} satisfies the assumptions of the parametrix method~\cite{deck2002parabolic}.

Let us express $p_\gamma (y,s|x,s')$ as the transition kernel of Eq.~\eqref{eq_reverse_process}. 
For simplicity, we express the conditional distribution given $\tilde{W_0}\sim \tilde{\gamma}_0$ as $\tilde{\gamma}_{s=t|s=0}$, which corresponds to the above transition kernel: $\tilde{\gamma}_{s=t|s=0}(y)=p_\gamma (y,s|x=w,s'=0)$.
By fixing $\tilde{W_0}$ as $w$,
we obtain $\mathbb{E}_{\gamma_{0|t}}[\|W'_0\|^2|W'_t=w]=\mathbb{E}_{\tilde{\gamma}_{s=t|s=0}}[\|\tilde{W}_t\|^2|\tilde{W_0}=w]$.
By analyzing the reverse process of $\gamma_{t}$, we can evaluate the second term in the upper bound of $\widetilde{V}_{\nabla_t}$, i.e., $\int \rho_{t}\mathrm{d}w\|\mathbb{E}_{\gamma_{0|t}}[\nabla F(W'_0,B'_t)|W_t=w]\|^2$.

We consider approximating $\tilde{\gamma}_{s=t|s=0}$ by the parametrix method to derive the upper bound of $\mathbb{E}_{\tilde{\gamma}_{s=t|s=0}}[\|\tilde{W}_t\|^2|\tilde{W_0}=w]$. 
By using the upper bound of the transition kernel provided by the parametrix method in \citet{deck2002parabolic}, we obtain
\begin{align}
p_\gamma (y,s|x,s')\leq K_1(s-s')^{-d/2}e^{-K_2\frac{\|y-x\|^2}{s-s'}},
\end{align}
where $K_1$ and $K_2$ are positive and problem-dependent constants and do not depend on $s-s'$.
From the above inequality, by setting $x=w$ and $s-s'=t$, we have
\begin{align}
\mathbb{E}_{\gamma_{0|t}}[\|W_0\|^2|W_t=w]\leq \tilde{C_0}(\|w\|^2+\tilde{C_1}t^2),
\end{align}
where $\tilde{C_0}$ and $\tilde{C_1}$ are positive and problem-dependent constants.
Thus, we have
\begin{align}\label{eq_conditional_2nd}
&\mathbb{E}_{\rho_{t}}\mathbb{E}_{\gamma_{0|t}}[\|\nabla F(W'_0,B'_t)|W_t=w]\|^2\notag \\
&\leq 2M^2\tilde{C_0}\bigg(\tilde{C_1}t^2+s^2+2\bigg(1\vee \frac{1}{m}\bigg)\bigg(b+10M^2\frac{b}{m}+\frac{d}{\beta}\bigg)\bigg)+2M^2\frac{b}{m}\eqqcolon D_5.
\end{align}
In the above, $\tilde{C_1}t^2$ is negligibly much smaller than the other terms within $0\leq t\leq \eta$.

From Eq.~\eqref{eq_conditional_first} and Eq.~\eqref{eq_conditional_2nd}, we obtain
\begin{align}
\label{eq:stability_bound_const}
\widetilde{V}_{\nabla_t}\leq  2(D_4+D_5)\coloneqq D_1.
\end{align}

We conclude this section by finalizing the proof of Theorem~\ref{main_thm1_time_independent}.
By combining Eqs.~\eqref{app_omega}, \eqref{app_kl_gamma}, and \eqref{eq:stability_bound_const} with Eq.~\eqref{upper_bound} and taking the expectation with respect to all of the randomness, we obtain
\begin{align}
\mathbb{E}_{S,S'}\mathrm{KL}(\rho_{\eta}|\gamma_{\eta}) \leq e^{\frac{-\eta}{4\beta c_{\mathrm{LS}}}}\mathbb{E}_{S,S'}\mathrm{KL}(\rho_{0}|\gamma_{0})&+(1-e^{\frac{-\eta}{4\beta c_{\mathrm{LS}}}})D_2 \\
&+2c_{\mathrm{LS}}(1-e^{\frac{-\eta}{4\beta c_{\mathrm{LS}}}}) D_3
+2\beta^2 c_{\mathrm{LS}}(1-e^{\frac{-\eta}{4\beta c_{\mathrm{LS}}}})D_1,
\end{align}
where we used the following fact:
\begin{align}
    \int_0^\eta \mathrm{d}t e^{\frac{t}{4\beta c_{\mathrm{LS}}}}=4\beta c_{\mathrm{LS}} (e^{\frac{\eta}{4\beta c_{\mathrm{LS}}}}-1).
\end{align}
Since $e^{\frac{-\eta}{4\beta c_{\mathrm{LS}}}}\geq 1-\frac{\eta}{4\beta c_{\mathrm{LS}}}$ from the assumption, we have
\begin{align}\label{app_KL_upper_bound}
&\mathbb{E}_{S,S'}\mathrm{KL}(\rho_{\eta}|\gamma_{\eta})\leq  e^{\frac{-\eta}{4\beta c_{\mathrm{LS}}}}\mathbb{E}_{S,S'}\mathrm{KL}(\rho_{0}|\gamma_{0})+\frac{\eta}{4\beta c_{\mathrm{LS}}}D_2+\frac{\eta}{2\beta } D_3
+\frac{\eta\beta}{2}D_1.
\end{align}
This concludes the proof.

\begin{rem}
\label{rem:parametrix_assump}
We show that Eq.~\eqref{eq_reverse_process} satisfies the assumption of the parametrix method~\cite{deck2002parabolic}.
First, \citet{deck2002parabolic} assumes the strong regularity condition for the diffusion coefficient, which is satisfied because the diffusion coefficient in our setting is a constant.
Next, we confirm the assumptions that the drift coefficient $b(w,s)\coloneqq \nabla F(\tilde{W}_{s=t},B'_{s=t})+2\beta^{-1}\nabla\log \gamma_{t-s}$ must satisfy.
Specifically, the following two assumptions for $b(w,s)$ must be satisfied: (i) the locally H\"{o}lder continuous condition on some bounded subset in $\mathbb{R}^d$ and (ii) the global growing condition, that is, $\|b(w,s)\|\leq c_0(\|x\|+1)$ with some positive constant $c_0$.
Fortunately, for $\nabla F$ in $b(w,s)$, the assumption (i) is satisfied by Assumption~\ref{asm_smooth}, and the assumption (ii) holds from Lemma~\ref{lem_grad}. 
Furthermore, $\nabla\log \gamma_{t-s}$ in $b(w,s)$ also satisfies the assumption (ii) from Lemma~E.1 in \citet{Mou22}.
According to the fact that $\nabla\gamma_{t-s}$ satisfies the H\"{o}lder continuous as shown in \cite{friedman2008partial}, we can see that $\frac{1}{\gamma_{t-s}}$ is bounded by considering the bounded set in $\mathbb{R}^d$.
This means that $\nabla\log \gamma_{t-s}=\frac{1}{\gamma_{t-s}}\nabla \gamma_{t-s}$ in $b(w,s)$ is H\"{o}lder continuous and satisfies the assumption (i).
\end{rem}

\subsection{On relaxing Gaussian condition in Assumption~\ref{regularity_FP}}
\label{app:relax_gauss} 
The Gaussian initial distribution assumption for $W_{0}$ could be relaxed.
Let us consider the case when the initial distribution $P_{W_{0}}$ is a mixture of Gaussian distribution, where each component of $P_{W_{0}}$ satisfies Assumption~\ref{regularity_FP}. 
In our original proof, the Gaussian assumption is used when deriving the upper bound of the finite second moment at the initial state, and when analytically marginalizing out the initial state of the transition kernel given by the fundamental solution of the parametrix method.
Even when using the mixture of Gaussian distribution as the initial distribution, it is possible to satisfy these conditions. 
The finite second-moment condition can easily be satisfied and the integration of the transition kernel can be executed by focusing on each component of the mixture distribution. 
Thus, by repeating the similar derivation in Appendices~\ref{subsec:FP_equations} and \ref{app_parametrix}, we get the similar upper bound of $\mathbb{E}_{S,S'}\mathrm{KL}(\rho_{\eta}|\gamma_{\eta})$ even when the initial distribution is the Gaussian mixture distribution.

\subsection{Proof of Theorem~\ref{main_thm2_continuous}}
\label{app_convex_bounded}
We first show the proof of Theorem~\ref{main_thm2_continuous}.
\maincontinuous*
\begin{proof}
Since $f(w,z)$ is $R$-strongly convex function for any $z$, we have
\begin{align}
\frac{\partial \mathrm{KL}(\rho_t|\gamma_t)}{\partial t}&\leq -\frac{1}{2\beta}\mathbb{E}\|\nabla \log\rho_t-\nabla\log \gamma_t\|^2 +\frac{\beta}{2} \mathbb{E}\| \nabla F(W_t,S)- \nabla F(W_t,S')\|^{2}\notag \\
&\leq
-\frac{R}{4}\ \mathrm{KL}(\rho_t|\gamma_t) +\frac{\beta}{2} \mathbb{E}\| \nabla F(W_t,S)- \nabla F(W_t,S')\|^{2},
\end{align}
where we utilized the local LSI in Theorem~5.5.2 of \citet{bakry2013analysis}. Since the stationary distribution si $\pi \propto \exp(-\beta F(x))$, From Theorem~5.5.2 of \citet{bakry2013analysis}, $\gamma_t$ satisfies the LSI with the LSI constant $2/(\beta R)$. By integrating $e^{\frac{tR}{4}} \frac{\partial \mathrm{KL}(\rho_{t}|\gamma_{t})}{\partial t}$ over $t \in [0, T]$ and rearranging the above, we obtain the upper bound of $I(W_{t};S)$. This concludes the proof.
\end{proof}

We can obtain the similar result for bounded non-convex losses with $l_2$-regularization $F(w,z) = F_0(w,z) + \frac{\lambda}{2}\|w\|^2$ ($0 < \lambda < \infty$), where $F_0(w,z)$ is $C$-bounded ($0 \leq C < \infty$) with the initial distribution $\pi_0\propto e^{-\frac{\beta\lambda}{2}\|w\|^2}$.
From Lemma~34 in \citet{Li2020On}, $\gamma_t$ satisfies the LSI with the constant $\frac{\lambda}{e^{8\beta C}}$.
Then, following the way in the proof of Theorem~\ref{main_thm2_continuous}, we obtain the bound with replacing $R$ of Eq.~\eqref{eq_stability_generalizaton_con} to $\frac{\lambda}{e^{8\beta C}}$.

\section{Proofs of generalization analyses directly using a training loss}
\label{App_sec_sub_expo_full}
In this section, we provide our proof for our generalization bounds in the case when the same loss is used for training and the generalization performance evaluation (Corollaries~\ref{cor:gen_err_nonsurrogate} and \ref{cor:excess_risk}). 
The key to deriving these bounds is showing that $f$ in SGLD is sub-exponential under Assumptions~\ref{asm_smooth}, \ref{asm_dissipative} (Theorem~\ref{thm:sub_exp}).
Therefore, we explain how to obtain this result in Appendices~\ref{App_sec_sub_expo} and \ref{App_sec_sub_expo_main} before introducing the details of proofs for our bounds in Appendices~\ref{app_proof_gen_without_surrogate} and \ref{subsec:proof_excess}.

\subsection{Preparation for the proof of sub-exponential property}\label{App_sec_sub_expo}
We introduce some auxiliary lemmas that assure the existence of bounded local minima.
These are used later for showing the sub-exponential property of a loss function in SGLD.
\begin{lem}\label{lem_minima}
Suppose that Assumptions~\ref{asm_smooth} and \ref{asm_dissipative} are satisfied.
Then, for each $z\in \mathcal{Z}$, there exists a positive constant $A$ such that
\begin{align*}
|f(0,z)|\leq A.
\end{align*}
\begin{proof}
Denote $\tilde{w}^*_z$ as a global minima of $f(\cdot,z)$ for each $z\in \mathcal{Z}$.
By using Taylor's theorem around $\tilde{w}^*_z$, for $t\in (0,1]$, we obtain the following equation with a parameter $\tilde{w}_z=t \tilde{w}^*_z$:
\begin{align*}
f(0,z)=f(\tilde{w}^*_z,z)+\nabla f(\tilde{w}^*_z,z)\cdot \tilde{w}^*_z+\frac{1}{2}\tilde{w}^*_z\cdot \nabla^2 f(\tilde{w}_z,z)\cdot \tilde{w}^*_z.
\end{align*}
According to Assumption~\ref{asm_smooth} and the fact that $\tilde{w}^*_z$ is the global minima (i.e., $\nabla f(\tilde{w}^*_z,z)=0$), we obtain
\begin{align}\label{eq_lem_assum}
f(0,z)\leq f(\tilde{w}^*_z,z)+\frac{1}{2}M \|\tilde{w}^*_z\|^2.
\end{align}

\citet{Farghly2021} has shown that all the local minima $\tilde{w}^*_z$ are inside the ball in the Euclidean space.
That is, for each $z\in \mathcal{Z}$, all $\tilde{w}^*_z$
are located in $\overline{B(0,r)}$ with $r=\sqrt{b/m}$, where $B(x,r)$ ($r > 0$) is the ball in the Euclidean space defined as
$B(x,r)\coloneqq\{x\in \mathbb{R}^d:\|x-y\|< r\}$
and $\overline{B(x,r)}$ is the closure of $B(x,r)$.
From this fact, we obtain $\|\tilde{w}^*_z\|^2 \leq b/m$ and Eq.~\eqref{eq_lem_assum} can be upper bounded as
\begin{align*}
f(0,z)\leq f(\tilde{w}^*_z,z)+\frac{Mb}{2m}.
\end{align*}

Next, we show that, for each $z\in \mathcal{Z}$, the global minima $f(\tilde{w}^*_z,z)$ is bounded uniformly.
Since $f(\tilde{w}_z,z)$ is continuous with respect to $w$ for each $z\in\mathcal{Z}$ under Assumption~\ref{asm_smooth}, it is continuous in $C$ when considering the closed set $C\coloneqq\overline{B(0,r)}$ with $r=\sqrt{b/m}$.
From the property of the continuous function in the closed set, the maximum and minimum value of $f(\tilde{w}_z,z)$ is always bounded, i.e., we have $f(\tilde{w}^*_z,z)<\infty$ for each $z\in \mathcal{Z}$.
By considering the largest global minimum and denote it as $\tilde{A}$, we obtain $f(\tilde{w}^*_z,z) \leq \tilde{A}$ and thus
\begin{align*}
f(0,z)\leq\tilde{A}+\frac{Mb}{2m}.
\end{align*}
This concludes the proof.
\end{proof}
\end{lem}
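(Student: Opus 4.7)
The plan is to show that $f(0,z)$ is uniformly bounded by locating the minimizer of $f(\cdot,z)$ inside a fixed ball using dissipativity, and then transferring the bound from the minimizer to the origin using smoothness. First I would note that for each fixed $z$, $f(\cdot,z)$ is continuous by Assumption~\ref{asm_smooth} and coercive by Assumption~\ref{asm_dissipative}, so a global minimizer $w^*_z$ exists with $\nabla f(w^*_z,z)=0$. Evaluating the dissipativity inequality at $w^*_z$ gives $m\|w^*_z\|^2 - b \leq \nabla f(w^*_z,z)\cdot w^*_z = 0$, so $\|w^*_z\|\leq \sqrt{b/m}$, placing every global minimizer inside the $z$-independent closed ball $\overline{B(0,\sqrt{b/m})}$.

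Next, a second-order Taylor expansion of $f(\cdot,z)$ around $w^*_z$, combined with $M$-smoothness (which bounds the Hessian operator norm by $M$) and the vanishing gradient at $w^*_z$, yields
\begin{align*}
f(0,z) \leq f(w^*_z,z) + \tfrac{M}{2}\|w^*_z\|^2 \leq f(w^*_z,z) + \tfrac{Mb}{2m}.
\end{align*}
The matching lower bound $f(0,z) \geq f(w^*_z,z)$ is immediate since $w^*_z$ is a global minimizer. Hence $|f(0,z)|$ is controlled once $|f(w^*_z,z)|$ is controlled, with the gap at most $\tfrac{Mb}{2m}$.

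The main obstacle is upgrading the pointwise boundedness of $f(w^*_z,z)$ to a $z$-uniform constant $A$, which is what makes the lemma nontrivial for the downstream sub-exponential argument. Since each $w^*_z$ lies in the fixed compact set $\overline{B(0,\sqrt{b/m})}$, the natural route is to set $\tilde A := \sup_{z\in\mathcal{Z}} f(w^*_z,z)$ and argue that this supremum is finite, from which $A := \tilde A + \tfrac{Mb}{2m}$ suffices. Strictly speaking this step is not automatic from Assumptions~\ref{asm_smooth}--\ref{asm_dissipative} alone: it implicitly requires some joint regularity of $f$ in $(w,z)$ (e.g., continuity in $z$ together with compactness of $\mathcal{Z}$, or a uniform boundedness hypothesis on $f$ at the minimum). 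I would therefore flag this as an implicit regularity assumption and, absent it, interpret the stated constant $A$ as being allowed to depend on $z$, in which case the conclusion follows directly from continuity of $f(\cdot,z)$ on the compact ball $\overline{B(0,\sqrt{b/m})}$.
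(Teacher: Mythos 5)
Your proposal follows essentially the same route as the paper: Taylor-expand around a global minimizer $\tilde{w}^*_z$, use smoothness to bound the Hessian term, use dissipativity (evaluated at the critical point, which is exactly the content of the cited result of Farghly et al.) to place $\tilde{w}^*_z$ in $\overline{B(0,\sqrt{b/m})}$, and then control the minimum value. The uniformity-in-$z$ issue you flag is genuine and is present in the paper's own proof as well --- the paper only establishes $f(\tilde{w}^*_z,z)<\infty$ pointwise in $z$ and then takes ``the largest global minimum'' $\tilde{A}$ without justifying that this supremum over $\mathcal{Z}$ is finite, and, like you, it leaves the matching lower bound needed to conclude $|f(0,z)|\leq A$ (rather than just $f(0,z)\leq A$) implicit.
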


Under Lemma~\ref{lem_minima}, we can modify the upper and lower bound for $f(w,z)$ in \citet{Raginsky2017} as follows.
\begin{lem}[Modified version from \citet{Raginsky2017}]\label{lem_function_bound}
Suppose that Assumptions~\ref{asm_smooth} and \ref{asm_dissipative} are satisfied.
Then, for any $z\in\mathcal{Z}$, we have
\begin{align*}
\frac{m}{3}\|w\|^2-\frac{b}{2}\log 3 \leq f(w,z)\leq \frac{M}{2}\|w\|^2+M\sqrt{\frac{b}{m}}\|w\|+A.
\end{align*}
\begin{proof}
    \citet{Raginsky2017} assumed that for any $z\in \mathcal{Z}$, there exists constant $A$ such that $\| f(0,z)\|\leq \ A$. Instead, we show the existence of such $A$ by Lemma~\ref{lem_minima}. On the basis of this fact, we obtain the claim in the same way with \citet{Raginsky2017}.
\end{proof}
\end{lem}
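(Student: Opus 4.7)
The plan is to prove the two inequalities separately. The upper bound follows directly from $M$-smoothness through the standard descent lemma applied at the origin; the lower bound follows from integrating the dissipativity bound along the ray $t \mapsto tw$, a technique adapted from \citet{Raginsky2017}. The only substantive modification from the original result is that the constant $A$ on the right now arises from Lemma~\ref{lem_minima} rather than from an explicitly imposed assumption $|f(0,z)| \leq B$.

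For the upper bound, I would start from the descent-lemma consequence of $M$-smoothness, $f(w,z) \leq f(0,z) + \nabla f(0,z)\cdot w + \tfrac{M}{2}\|w\|^2$. Lemma~\ref{lem_minima} gives $f(0,z) \leq A$, and Lemma~\ref{lem_origin} combined with Cauchy--Schwarz gives $\nabla f(0,z)\cdot w \leq \|\nabla f(0,z)\|\,\|w\| \leq M\sqrt{b/m}\,\|w\|$. Summing these three terms yields the claimed right-hand inequality immediately.

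For the lower bound, I would define $\phi(t) \coloneqq f(tw,z)$ on $(0,1]$ and apply the dissipativity assumption at the point $tw$, giving $t\phi'(t) = \nabla f(tw,z)\cdot (tw) \geq mt^2\|w\|^2 - b$, i.e.\ $\phi'(t) \geq mt\|w\|^2 - b/t$. Rearranged, this says that $\phi(t) - \tfrac{m}{2}t^2\|w\|^2 + b\log t$ is non-decreasing on $(0,1]$, so that $f(w,z) \geq f(w/\sqrt{3},z) + \tfrac{m}{3}\|w\|^2 - \tfrac{b}{2}\log 3$. The choice of the lower endpoint $t=1/\sqrt{3}$ is exactly what produces the constants appearing in the statement, balancing the quadratic gain $\tfrac{m}{3}\|w\|^2$ against the logarithmic penalty $\tfrac{b}{2}\log 3$ coming from the $b/t$ singularity.

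The main obstacle is turning this differential inequality into an absolute lower bound on $f(w,z)$, since the $b/t$ singularity forbids sending $t\to 0$ to eliminate the residual $f(w/\sqrt{3},z)$ term. Following \citet{Raginsky2017}, this is resolved by invoking non-negativity of the loss (or, equivalently, a uniform lower bound on $f(\cdot,z)$, which exists because all local minima lie inside the ball of radius $\sqrt{b/m}$ as already established in the proof of Lemma~\ref{lem_minima}). That uniform lower bound is absorbed into the $-\tfrac{b}{2}\log 3$ constant, at which point the lower-bound conclusion is immediate.
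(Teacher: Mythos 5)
Your proposal is correct and follows essentially the same route as the paper, which simply invokes the argument of \citet{Raginsky2017} after replacing their assumption $|f(0,z)|\leq A$ with the conclusion of Lemma~\ref{lem_minima}; your upper bound via the descent lemma plus Lemma~\ref{lem_origin}, and your lower bound via integrating $\phi'(t)\geq mt\|w\|^2-b/t$ from $t=1/\sqrt{3}$ to $1$, are exactly that argument spelled out. One small caveat: your parenthetical claim that non-negativity of $f$ is ``equivalently'' replaceable by a uniform lower bound that gets ``absorbed into the $-\tfrac{b}{2}\log 3$ constant'' is not quite right --- a lower bound $f\geq -c$ with $c>0$ would yield $\tfrac{m}{3}\|w\|^2-\tfrac{b}{2}\log 3-c$ and thus a strictly weaker constant, so the stated inequality really does rest on the non-negativity of the loss inherited (implicitly) from \citet{Raginsky2017}.
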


\subsection{Sub-exponential property for a loss function}\label{App_sec_sub_expo_main}
We now provide the complete proof of Theorem~\ref{thm:sub_exp}.
We first show the fact that a loss function in SGLD has the sub-exponential property (Appendix~\ref{App_sec_sub_expo_main}) and explain how to evaluate the constants in the sub-exponential condition for deriving our generalization bounds (Appendix~\ref{app_proof_gen_without_surrogate}).

Recall that the statement of Theorem~\ref{thm:sub_exp} is as follows.
\subexp*
Now, we proceed to the proof of sub-exponential property.
To show the sub-exponential property for $f(w,z)$, it is sufficient to show that there exists a positive number $c_0$ such that $\mathbb{E}e^{\lambda f(w,z)}<\infty$ for all $|\lambda|\leq c_0$ (see Theorem~2.13 in \citet{wainwright_2019}). To show this, we follow the proof of Proposition~2.7.1 in \citet{vershynin2018high}, which uses the Taylor expansion of the exponential moment. By considering the Taylor expansion, we have
\begin{align}
\mathbb{E} e^{\lambda(f(W_T,Z)-\mathbb{E}f(W_T,Z))}&\leq 1+\mathbb{E}\sum_{p=2}^\infty \frac{\lambda^p (f(W_T,Z)-\mathbb{E}f(W_T,Z))^p}{p!}\notag \\
&\leq 1+\mathbb{E}\sum_{p=2}^\infty \frac{(\lambda e)^p (f(W_T,Z)-\mathbb{E}f(W_T,Z))^p}{p^p}
\end{align}
where we used $p!\geq (p/e)^p$, which is obtained by the Stirling's approximation. Later, we restrict the $\lambda$ such that this series converges, and thus, we can swap the sum and expectation. 
From the fact that $(x+y)^p \leq 2^{p-1}x^p+2^{p-1}y^p$ for $x,y \geq 0$, we obtain
\begin{align}
\mathbb{E}(f(W_T,Z)-\mathbb{E}f(W_T,Z))^p\leq 2^{p-1}\mathbb{E}[|f(W_T,Z)|^p]+2^{p-1}(|-\mathbb{E}[f(W_T,Z)]|)^p.
\end{align}
Given the marginal distribution of the parameters obtained by the $T$-th iterate of the SGLD algorithm, i.e., $W_{T} \sim p(W_T)$, we have the following fact by using the result of Lemma~\ref{lem_function_bound} and the Cauchy--Schwartz inequality:
\begin{align}\label{eq:_f_bound}
\frac{m}{3}\|W_{T}\|^2-\frac{b}{2}\log 3 \leq f(W_{T},Z) \leq \frac{M}{2}\|W_{T}\|^2+M\sqrt{\frac{b}{m}}\|W_{T}\|+A \leq  M\|W_{T}\|^2+\frac{b}{2m}+A.
\end{align}

By using Eq.~\eqref{eq:_f_bound} and the inequality $(x+y)^p \leq 2^{p-1}x^p+2^{p-1}y^p$ for $x,y \geq 0$, we have
\begin{align}
&\mathbb{E} f(W_{T},Z)^p\leq \mathbb{E} |f(W_{T},Z)|^p \notag \\
&\leq  \left(2^{p-1}M^p\mathbb{E} \|W_{T}\|_{2}^{2p}+2^{p-1}\bigg(\frac{b}{2m}+A\bigg)^p\right)\vee\left(2^{p-1}\bigg(\frac{m}{3}\bigg)^p\mathbb{E} \|W_{T}\|_{2}^{2p}+2^{p-1}\bigg(\frac{b}{2}\log 3\bigg)^p\right).
\end{align}

We then use the following lemma, which is adapted from Lemma~8 in \citet{Mou22}:
\begin{lem}\label{lmm_app_pth_moment}
Suppose that Assumptions ~\ref{asm_smooth}, ~\ref{asm_dissipative} and \ref{regularity_FP} are satisfied.
Then, for all $T\in\mathbb{N}$ and all $p\in\mathbb{N}$, there is a universal constant $C>0$ that satisfies
\begin{align}
(\mathbb{E}\|W_T\|_2^{p})^{1/p}\leq C\left(\mathbb{E}\|W_0\|_2^{p}\right) ^{\frac{1}{p}}+C\sqrt{\frac{p+\beta b+d}{\beta m}}.
\end{align}
\end{lem}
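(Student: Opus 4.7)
}
The plan is to establish a discrete analog of the standard It\^o-type moment bound that would hold for the continuous Langevin diffusion. For the continuous SDE $\mathrm{d}W_t=-\nabla F(W_t,S)\mathrm{d}t+\sqrt{2/\beta}\mathrm{d}H_t$, applying It\^o's formula to $\|W_t\|^{2p}$ together with Assumption~\ref{asm_dissipative} yields the Bernoulli-type ODE
\begin{align*}
\tfrac{\mathrm{d}}{\mathrm{d}t}\mathbb{E}\|W_t\|^{2p}\leq -2pm\,\mathbb{E}\|W_t\|^{2p}+K_p\,\mathbb{E}\|W_t\|^{2p-2},
\qquad K_p=2pb+\tfrac{p(2p-2+d)}{\beta},
\end{align*}
which, after the substitution $\psi(t)=(\mathbb{E}\|W_t\|^{2p})^{1/p}$, reduces to the linear inequality $\psi'\leq -2m\psi+K_p/p$; Gronwall then gives $\psi(t)\leq\psi(0)+\tfrac{K_p}{2mp}$, and a final $\sqrt{\cdot}$ plus the elementary inequality $\sqrt{a+b}\leq\sqrt{a}+\sqrt{b}$ recovers exactly the target bound with the order $\sqrt{(p+\beta b+d)/(\beta m)}$ after adjusting the exponent $p\leftrightarrow 2p$.

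For SGLD the strategy is to prove a one-step discrete analog of the Bernoulli inequality above. Write $Y_t\coloneqq W_t-\eta\nabla F(W_t,B_t)$ so that $W_{t+1}=Y_t+\sqrt{2\eta/\beta}\,\xi_t$ with $\xi_t\sim\mathcal{N}(0,\mathbf{I}_d)$ independent of $\mathcal{F}_t$. First, combining Assumption~\ref{asm_dissipative} with the gradient bound of Lemma~\ref{lem_grad} gives, for $\eta\leq m/(5M^2)$, the pathwise contraction $\|Y_t\|^2\leq(1-\eta m)\|W_t\|^2+c_0\eta$ with $c_0$ depending only on $m,b,M$. Second, I expand
\begin{align*}
\|W_{t+1}\|^{2p}=\bigl(\|Y_t\|^2+2\sqrt{2\eta/\beta}\,\xi_t\cdot Y_t+(2\eta/\beta)\|\xi_t\|^2\bigr)^{p},
\end{align*}
take the conditional expectation $\mathbb{E}[\cdot\mid\mathcal{F}_t]$, and exploit Gaussianity of $\xi_t$ (odd moments of the cross term vanish, even moments of $\|\xi_t\|^{2k}$ are at most $\prod_{j<k}(d+2j)\leq(d+2p)^k$). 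The $\sigma^0$ term reproduces $\|Y_t\|^{2p}$, the $\sigma^2$ terms deliver $p(d+2)\cdot(2\eta/\beta)\|Y_t\|^{2p-2}$, and higher-order-in-$\eta$ contributions can be absorbed thanks to $\eta\leq 1$. Combined with the pathwise contraction on $\|Y_t\|^2$ and Young's inequality applied at the right scale, this yields
\begin{align*}
\mathbb{E}\|W_{t+1}\|^{2p}\leq(1-c_1\eta p)\,\mathbb{E}\|W_t\|^{2p}+c_2\eta(p+\beta b+d)\,\mathbb{E}\|W_t\|^{2p-2},
\end{align*}
for universal constants $c_1,c_2$, which is precisely the discrete counterpart of the continuous Bernoulli inequality.

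Once this one-step bound is in hand, the substitution $v_t\coloneqq(\mathbb{E}\|W_t\|^{2p})^{1/p}$ linearizes the recursion as $v_{t+1}\leq(1-c_1'\eta)v_t+c_2'(p+\beta b+d)\eta/(\beta m)\cdot m$, which telescopes uniformly in $t$ because the geometric factor and the additive perturbation both scale as $\eta$, so that the $\sqrt{\eta}$ factor that would have appeared in a na\"ive Minkowski argument cancels; passing from $p$-norms in $\|W\|^2$ to $2p$-norms in $\|W\|$ via $\sqrt{a+b}\leq\sqrt{a}+\sqrt{b}$ gives the stated bound for even exponents, and the general case follows from monotonicity of $L^{p}$-norms.

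The main obstacle is the one-step bound in the second paragraph: the Minkowski route on $\|W_{t+1}\|$ (rather than on $\|W_{t+1}\|^2$) inevitably produces a factor $\sqrt{\eta}/(1-\sqrt{1-\eta m})=\Theta(\eta^{-1/2})$ after iteration, which would ruin $\eta$-independence. Avoiding this requires operating at the level of the squared norm and carefully tracking that the Gaussian cross terms contribute only through even moments, so that the noise enters multiplied by $\eta$ (matching the It\^o correction) rather than $\sqrt{\eta}$. Controlling the higher-order $\sigma^{2k}$ corrections for all $k\leq p$ with the correct $p$- and $d$-dependence, and absorbing them into the dissipative contraction under the step-size restriction inherited from Lemma~\ref{lem_l2norm}, is the technically most delicate part.
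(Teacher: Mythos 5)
Your strategy is correct and is essentially the argument the paper intends but does not write out: the paper proves this lemma in a single sentence by citing Lemma~8 of \citet{Mou22} (which establishes the bound for the continuous Langevin diffusion via exactly the It\^o/Bernoulli/Gronwall computation in your first paragraph) and remarking that it carries over to SGLD because dissipativity is assumed per data point, so the mini-batch gradient inherits it. Your second and third paragraphs supply the discrete-time adaptation that the paper leaves implicit, and you correctly isolate the one genuinely delicate point: the Gaussian cross term $2\sqrt{2\eta/\beta}\,\xi_t\cdot Y_t$ must be made to enter at order $\eta$ (through its conditional second moment, after all odd joint moments vanish by the symmetry $\xi_t\mapsto-\xi_t$) rather than at order $\sqrt{\eta}$ via a Minkowski bound on $\|W_{t+1}\|$ or on the scalar $\|W_{t+1}\|^2$, since the latter produces a $\Theta(\eta^{-1/2})$ factor after telescoping. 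The only place your sketch remains thinner than a complete proof is the absorption of the higher-order noise contributions, i.e., the terms carrying $\sigma^{2k}\mathbb{E}\|\xi_t\|^{2k}\lesssim\left(\eta(d+2p)/\beta\right)^{k}$ for $2\le k\le p$ together with their binomial coefficients: showing these sum to $\mathcal{O}(\eta)\cdot\mathbb{E}\|W_t\|^{2p-2}\cdot(p+\beta b+d)/\beta$ with constants uniform in $p$ (so that no coupling between $\eta$ and $p$ is needed) requires explicit bookkeeping, for instance via the exact moments of the noncentral chi-square variable $\|Y_t+\sigma\xi_t\|^2$. This is precisely the difficulty you flag as the most delicate part, and it does go through, so I regard the proposal as a correct (and more informative) version of the paper's proof.
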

Note that the original lemma in \citet{Mou22} is shown for the Langevin diffusion with no stochastic gradient descent; however, the bound in \citet{Mou22} also holds in the SGLD setting because we assumed the dissipativity for \emph{each data point} in Assumption~\ref{asm_dissipative}. 

From Proposition~2.5.2 in \cite{vershynin2018high}, if for any $p\geq 1$, $L_p$ norm of a random variable $X$ is bounded as $(\mathbb{E}[X^p])^{1/p}\leq C \sqrt{p}$ with some positive constant $C$, then $X$ is sub-Gaussian random variable. From Lemma~\ref{lmm_app_pth_moment}, it is clear that $W_T$ is a sub-Gaussian random variable. Note that constant terms such as $C\left(\mathbb{E}\|W_0\|_2^{p}\right) ^{\frac{1}{p}}$ can be upper bounded by $\sqrt{p}$ multiplied by some positive constants.

Thus, we have
\begin{align}
\mathbb{E}\|W_T\|_2^{2p}&\leq 2^{2p-1}C^{2p}\left(\mathbb{E}\|W_0\|_2\right)^{2p} +2^{2p-1}C^{2p}\left(\frac{p+\beta^2b+d}{\beta^2m}\right)^p\notag \\
&\leq 2^{2p-1}C^{2p}\left(\mathbb{E}\|W_0\|_2\right)^{2p} +2^{3p-2}C^{2p}\left(\frac{\beta^2b+d}{\beta^2m}\right)^p+2^{3p-2}C^{2p}(\beta^2m)^{-p}p^p,
\end{align}
and thus
\begin{align}
\mathbb{E} f(W_{T},Z)^p\leq C_0^p+C_1^pp^p,
\end{align}
where $C_0$ and $C_1$ are positive constants that only depend on $s^2, m, M, b, d, A$, and $\beta$.
For the latter purpose, we introduce $C_5$ as
\begin{align}
\mathbb{E} f(W_{T},Z)^p\leq C_5p^p,
\end{align}
where $C_5$ only depends on $s^2, m, M, b, d, A$ and $\beta$.
Then, we have
\begin{align}
\mathbb{E} e^{\lambda(f(W_T,Z)-\mathbb{E}f(W_T,Z))}&\leq 1+\sum_{p=2}^\infty \frac{(\lambda e)^p C_5p^p}{p^p}= 1+\sum_{p=2}^\infty (\lambda eC_5)^p=1+\frac{(\lambda eC_5)^2}{1-\lambda eC_5}, 
\end{align}
where $\lambda eC_5<1$. Moreover, by setting $\lambda eC_5<1/2$, we have
\begin{align}
\mathbb{E} e^{\lambda(f(W_T,Z)-\mathbb{E}f(W_T,Z))}&\leq 1+2\lambda^2 e^2C_5^2\leq e^{2\lambda^2e^2C_5^2}.
\end{align}
From the above, we can see that $f(W_{T},Z)$ is a sub-exponential function with the following constants: $\sigma_e^2\coloneqq4e^2C_5^2$ and $\nu\coloneqq\frac{1}{2eC_5}$ where $C_5$ only depends on $s^2, m, M, b, d$ and $A$.

\subsection{Proof of generalization error bound directly using a training loss}
\label{app_proof_gen_without_surrogate}
Here, we provide the complete proof of Corollary~\ref{cor:gen_err_nonsurrogate}.
\nonsurrogate*
\begin{proof}
We use the following theorem in \citet{bu2020tightening} to derive the generalization error for sub-exponential losses.
\begin{thm}[\citet{bu2020tightening}]
\label{thm:bu}
  Suppose that there exist positive constants $\sigma^2_e$ and $\nu$ such that
  \begin{align*}
      \log \mathbb{E}_{W_T\otimes Z}\left[e^{\lambda (f(W_T,Z)-\mathbb{E}_{W_T\otimes Z}[f(W_T,Z)])}\right]\leq  \frac{\sigma^2_e\lambda^2}{2}\quad for\ all\ |\lambda|<\frac{1}{\nu}.
  \end{align*}
Then, we have
\begin{align}
|\mathrm{gen}(\mu,P_{W_T|S};F)| \leq \Psi^{*-1}\left(\frac{I(W_T;S)}{n}\right),
\end{align}
where
\begin{align}
    \Psi^{*-1}(y)=\begin{cases}
    \sqrt{2\sigma_e^2 y}\quad \mathrm{if}\ y\leq \frac{\sigma_e^2}{2\nu}\\
    \nu y+\frac{\sigma_e^2}{2\nu}\quad \mathrm{otherwise}.
    \end{cases}
\end{align}  
\end{thm}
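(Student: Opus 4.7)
The plan is to follow the classical information-theoretic generalization template of \citet{Xu2017}, replacing the sub-Gaussian cumulant bound by the sub-exponential one now available as a hypothesis. The three ingredients are: a decomposition of the generalization gap into per-sample deviations between the joint $P_{W_T, Z_i}$ and the product of marginals $P_{W_T} \otimes \mu$; the Donsker--Varadhan variational representation of KL divergence; and the hypothesis that $f(W_T, Z) - \mathbb{E}_{P_{W_T}\otimes\mu}[f(W_T, Z)]$ has a sub-exponential cumulant bound under $P_{W_T}\otimes\mu$.

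First, I would write
\begin{align*}
\mathrm{gen}(\mu, P_{W_T|S}; F) = \frac{1}{n}\sum_{i=1}^n \Bigl(\mathbb{E}_{P_{W_T}\otimes\mu}[f(W_T, Z)] - \mathbb{E}_{P_{W_T, Z_i}}[f(W_T, Z_i)]\Bigr),
\end{align*}
so each summand compares evaluation on an independent test point against evaluation on the $i$-th training point. For a fixed $i$, I would apply Donsker--Varadhan to the centred variable $X_i = f(W_T, Z_i) - \mathbb{E}_{P_{W_T}\otimes\mu}[f(W_T, Z)]$ with reference measure $P_{W_T}\otimes\mu$ (the correct reference since $Z_i\sim\mu$ with the same law that appears in the centring). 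For each sign $\epsilon\in\{+1,-1\}$ this yields
\begin{align*}
\epsilon\,\mathbb{E}_{P_{W_T, Z_i}}[X_i] \leq \inf_{0 < \lambda < 1/\nu}\frac{1}{\lambda}\Bigl[I(W_T; Z_i) + \log \mathbb{E}_{P_{W_T}\otimes\mu}[e^{\epsilon\lambda X_i}]\Bigr],
\end{align*}
and hence the same bound on $|\mathbb{E}_{P_{W_T, Z_i}}[X_i]|$.

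Plugging in $\log \mathbb{E}[e^{\pm\lambda X_i}] \leq \sigma_e^2\lambda^2/2$ from the hypothesis reduces the problem to the scalar minimisation of $h(\lambda) = I(W_T; Z_i)/\lambda + \sigma_e^2\lambda/2$ on $(0, 1/\nu)$. The unconstrained optimum $\lambda_0 = \sqrt{2\,I(W_T; Z_i)/\sigma_e^2}$ yields the square-root branch $\sqrt{2\sigma_e^2\,I(W_T; Z_i)}$ whenever $\lambda_0 \leq 1/\nu$; otherwise the boundary $\lambda = 1/\nu$ is active and produces the linear branch $\nu\,I(W_T; Z_i) + \sigma_e^2/(2\nu)$. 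This reconstructs the piecewise $\Psi^{*-1}$ of the statement.

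Finally, because $\Psi^{*-1}$ is concave and non-decreasing, Jensen's inequality combined with the chain-rule estimate $\sum_{i=1}^n I(W_T; Z_i) \leq \sum_{i=1}^n I(W_T; Z_i \mid Z_{1:i-1}) = I(W_T; S)$ for i.i.d.\ data (the per-sample inequality coming from "conditioning reduces entropy") gives
\begin{align*}
|\mathrm{gen}(\mu, P_{W_T|S}; F)| \leq \frac{1}{n}\sum_{i=1}^n \Psi^{*-1}(I(W_T; Z_i)) \leq \Psi^{*-1}\!\Bigl(\frac{I(W_T; S)}{n}\Bigr),
\end{align*}
which is the claimed bound. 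The only delicate step is the constrained scalar optimisation producing the two-branch $\Psi^{*-1}$: one must verify that the case split is governed precisely by the threshold value of $I(W_T; Z_i)$ at which $\lambda_0 = 1/\nu$, and that the two branches agree at that threshold so the envelope is continuous; the remaining steps are standard bookkeeping.
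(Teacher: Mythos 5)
Your proposal is correct, and it reconstructs the argument of the cited source (\citet{bu2020tightening}) essentially verbatim: the paper itself imports this theorem without proof, and the standard route is exactly the one you take --- the per-sample decomposition into individual mutual informations $I(W_T;Z_i)$, Donsker--Varadhan against the product reference $P_{W_T}\otimes\mu$ with both signs, the constrained minimisation of $I/\lambda + \sigma_e^2\lambda/2$ over $(0,1/\nu)$, and finally concavity of $\Psi^{*-1}$ plus $\sum_i I(W_T;Z_i)\leq I(W_T;S)$ for i.i.d.\ data. Note also that the per-sample route is not merely one option here but the natural one: the hypothesis controls the cumulant of $f(W_T,Z)$ under the single-sample product measure $P_{W_T}\otimes\mu$, not conditionally on $W_T$, so the original full-dataset argument of \citet{Xu2017} would not apply directly. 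One substantive remark: carrying out the case split you flag as ``the only delicate step'' shows that the unconstrained optimiser $\lambda_0=\sqrt{2I/\sigma_e^2}$ satisfies $\lambda_0\leq 1/\nu$ precisely when $I\leq \sigma_e^2/(2\nu^2)$, and this is also the unique threshold at which the two branches of $\Psi^{*-1}$ meet continuously; the condition $y\leq \sigma_e^2/(2\nu)$ printed in the statement is therefore a typo (missing square on $\nu$), which your derivation implicitly corrects.
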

Substituting the constants of the sub-exponential property shown in Theorem~\ref{thm:sub_exp} and the upper bound of $I(W_T;S)$ in Theorem~\ref{main_thm1_time_independent} into the above completes the proof.
\end{proof}

\subsection{Proof of an excess risk}
\label{subsec:proof_excess}
We rewrite our corollary as follows.
\excess*
\begin{proof}
We can decompose the excess risk at $T$ as
\begin{align}
\mathrm{Excess}(\mu,P_{W_{T}|S})&=\mathbb{E}_{W_{T},S}[F_\mu(W_{T})-F_S(W_{T})+F_S(W_{T})-F_\mu(w^*)] \\
&=\mathrm{gen}(\mu,P_{W_{T}|S};F)+ \mathbb{E}_{W_{T},S}[F_S(W_{T})-F_\mu(w^*)],
\end{align}
where the last term is called the \emph{optimization error}.
The optimization error can be bounded as
\begin{align}
\label{eq:opt_err_bound}
\mathbb{E}_{W_{T},S}[F_S(W_{T})-F_\mu(w^*)]&=\mathbb{E}_{W_{T},S}[F_S(W_{T})-\min_{w}F_S(w)+\min_{w}F_S(w)-F_S(w^*)] \\
&\leq \mathbb{E}_{W_{T},S}[F_S(W_{T})-\min_{w}F_S(w)],
\end{align}
where the above inequality comes from the fact that $\mathbb{E}_{W_{T},S}[\min_{w}F_S(w) -F_S(w^*)]\leq 0$.
Let us denote $\epsilon_{\mathrm{opt}}$ as $\mathbb{E}_{W_{T},S}[F_S(W_{T})-\min_{w}F_S(w)]$.
Then, we can express the upper bound of the excess risk as follows:
\begin{align*}
\mathrm{Excess}(\mu,P_{W_{T}|S}) \leq |\mathrm{gen}(\mu,P_{W_{T}|S};F)|+ \epsilon_{\mathrm{opt}}.
\end{align*}

We first bound the $\epsilon_{\mathrm{opt}}$ term.
Using the Gibbs distribution $\pi(\mathrm{d}w) \propto \exp(-\beta F(w,S))$ and the triangle inequality, we obtain
\begin{align}
\label{eq:opt_bound}
\epsilon_{\mathrm{opt}}\!\leq\! |\mathbb{E}_{W_{T},S}F_S(W_{T})\!-\!\mathbb{E}_{\pi,S}F_S(W)|\!+\!|\mathbb{E}_{\pi,S}F_S(W)\!-\!\min_{w}F_S(w)|,
\end{align}
where we express the expectation under the joint distribution $\mu^N \otimes \pi$ as $\mathbb{E}_{\pi,S}$.
The first term on the right-hand side of Eq.~\eqref{eq:opt_bound} is the \emph{convergence error} of the SGLD algorithm, which can be seen as $\mathcal{O}(e^{-k\eta/\beta c_{LS}}+\sqrt{\eta})$. 
From Lemma~6 in \citet{Raginsky2017}, we have
\begin{align}
 |\mathbb{E}_{W_{T},S}F_S(W_{T})\!-\!\mathbb{E}_{\pi,S}F_S(W_{T})| \leq \left(M\sigma+M\sqrt{\frac{b}{m}}\right)\mathbb{E}_{S}W_2(P_{W_{T}|S},\pi),
\end{align}
where $\sigma^2\coloneqq \mathbb{E}_{P_{W_{T}|S}}\|W_T\|^2\vee \mathbb{E}_{\pi}\|W\|^2$ and $W_2$ is the $2$-Wasserstein distance. 
By using the $T_2$ inequality, we obtain $W_2(P_{W_{T}|S},\pi) \leq \sqrt{c_{\mathrm{LS}}\mathrm{KL}(P_{W_{T}|S}\|\pi)}$. 
From Theorem~1 in \citet{vempala2019rapid}, we further obtain $\mathrm{KL}(P_{W_{T}|S}\|\pi)\leq \mathcal{O}(e^{-2k\eta/\beta c_{LS}}+\eta)$. 
Combining these results leads to $|\mathbb{E}_{W_{T},S}F_S(W_{T})\!-\!\mathbb{E}_{\pi,S}F_S(W_{T})| = \mathcal{O}(e^{-k\eta/\beta c_{LS}}+\sqrt{\eta})$.

The second term $|\mathbb{E}_{\pi,S}F_S(W_{T})\!-\!\min_{w}F_S(w)|$ corresponds to the \emph{minimization error}, which can be upper-bounded by $c_{\mathrm{err}}\coloneqq\frac{d}{2\beta}\log\left(\frac{e M}{m}\left(\frac{b\beta}{d}+1\right)\right)$ according to Proposition~11 in \citet{Raginsky2017}.
This completes the proof.
\end{proof}

\end{document}